\newtheorem{theorem}{Theorem}
\newtheorem{definition}{Definition}
\newtheorem{proposition}{Proposition}
\newtheorem{lemma}{Lemma}
\newtheorem{corollary}{Corollary}
\newtheorem{remark}{Remark}
\newtheorem{example}{Example}
\theoremstyle{definition}
\newtheorem{motivation}{Motivation}
\newcolumntype{P}[1]{>{\centering\arraybackslash}p{#1}}
\definecolor{lb}{RGB}{31,119,180}
\definecolor{blue}{RGB}{31,119,180}
\newtcolorbox{mybox}[1]{colback=lb!5!white,colframe=lb!70!black,fonttitle=\bfseries,title=#1}
\newtcbox{\mymath}[1][]{%
    nobeforeafter, tcbox raise base, colframe=teal!60!black,
    colback=teal!10, boxrule=1pt,
    #1}
\newtcolorbox{resbox}{standard jigsaw,opacityback=0,boxrule=1pt,top=0.9ex,bottom=0.9ex,colbacktitle=teal!10!white,colframe=teal!70!white}
\def\Rbb{{\mathbb{R}}}
\def\eqref#1{equation~\ref{#1}}
\def\1{\bm{1}}
\DeclareMathAlphabet{\mathsfit}{\encodingdefault}{\sfdefault}{m}{sl}
\SetMathAlphabet{\mathsfit}{bold}{\encodingdefault}{\sfdefault}{bx}{n}
\newcommand{\R}{\mathbb{R}}
\DeclareMathOperator*{\argmax}{arg\,max}
\DeclareMathOperator{\EC}{EC}
\def\Rbb{{\mathbb{R}}}
\title{Topological analysis of graph products}
\title{On topological descriptors for graph products}
\author{
  Mattie Ji \\
  University of Pennsylvania \\
  \texttt{mji13@sas.upenn.edu} \\
   \And
  Amauri H. Souza \\
  Federal Institute of Ceará \\
  \texttt{amauriholanda@ifce.edu.br}\\
  \And
  Vikas Garg \\
Aalto University\\
YaiYai Ltd\\
\texttt{vgarg@csail.mit.edu}
}
\begin{document}

\maketitle

\begin{abstract}
Topological descriptors have been increasingly utilized for capturing multiscale structural information in relational data. In this work, we consider various filtrations on the (box) product of graphs and the effect on their outputs on the topological descriptors - the Euler characteristic (EC) and persistent homology (PH). In particular, we establish a complete characterization of the expressive power of EC on general color-based filtrations. We also show that the PH descriptors of (virtual) graph products contain strictly more information than the computation on individual graphs, whereas EC does not. Additionally, we provide algorithms to compute the PH diagrams of the product of vertex- and edge-level filtrations on the graph product. We also substantiate our theoretical analysis with empirical investigations on runtime analysis, expressivity, and graph classification performance. Overall, this work paves way for powerful graph persistent descriptors via product filtrations. Code is available at \url{https://github.com/Aalto-QuML/tda_graph_product}.
\end{abstract}

\section{Introduction}
Message-passing GNNs are prominent models for graph representation learning \citep{gilmer_gnn}. However, they are bounded in expressivity by the WL hierarchy \citep{gin, Morris2019, Maron2019, WLsofar} and cannot compute fundamental graph properties such as cycles or connected components \citep{Garg2020, substructures2020}. Topological descriptors (TDs) such as those based on persistent homology (PH) and the Euler characteristics (EC) can provide such information and thus are being increasingly employed to augment GNNs, boosting their empirical performance \citep{pmlr-v108-carriere20a, pmlr-v108-zhao20d, NEURIPS2021_e334fd9d, horn2022topological, vonrohrscheidt2024disslectdissectinggraphdata, buffelli2024cliqueph}.

In this work, we study two prominent types of topological descriptors based on PH \citep{Edelsbrunner_Harer_2008, Aktas2019} and EC \citep{ECT, DECT, Rieck24a} respectively. Both descriptors are defined with respect to a fixed filtration of a graph. On a high level, PH  keeps track of the birth and death times of topological features (e.g., connected components and loops) in a (parameterized) graph filtration. EC is a simpler, less expensive invariant that keeps track of the number of vertices minus the number of edges in the filtration. Both PH and EC have found use across a wide range of applications in GNNs \cite{fang2024leveragingpersistenthomologyfeatures, nguyen2025persistenthomologyinducedgraphensembles, DECT}.

A precise characterization of PH has been established for graphs based on color-based filtrations in \cite{immonen2023going}. Here, we offer a complete characterization of the expressivity of EC in terms of combinatorial data on the colors of the graph.  EC also compares favorably with other TDs in terms of computation, which is another important aspect we explore here. In particular, we streamline PH and EC with what we call their ``max" versions. We show that the vertex death times, as well as the cycle birth times, of vertex-level and edge-level max PH are the same. We also establish that max EC retains the expressivity of EC, but is even more-cost effective than EC. Interestingly, we show that this expressivity equivalence on graphs extends to a more general equivalence for color filtrations on finite higher-dimensional simplicial complexes. This immediately opens avenues for integrating max EC as an economical alternative to PH-based TDs into Topological Neural Networks, which generalize GNNs through higher-order message-passing \citep{papillon2024architecturestopologicaldeeplearning, pmlr-v235-papamarkou24a, eitan2024topologicalblindspotsunderstanding, bodnar2021weisfeiler, bodnar2021weisfeilerlehmantopologicalmessage} and have recently been shown to benefit from TDs both theoretically and empirically \citep{pmlr-v235-verma24a}. 
\begin{table*}[t!]
\caption{Overview of our theoretical results.}
    \vspace{-8pt}
\begin{center}
\begin{minipage}{13.5cm}
\begin{tcolorbox}[tab2,tabularx={ll},title={},boxrule=1pt,top=0.9ex,bottom=0.9ex,colbacktitle=lb!10!white,colframe=lb!70!white]
\textbf{Expressive Power of PH and EC (Section~\ref{sec::background},~\ref{sec::ec})} \\ 
\addlinespace[0.3ex]
\quad Matching Vertex Deaths and Cycle Births within max PH & \autoref{prop::max_PH} \\ 
\addlinespace[0.3ex]
\quad $\operatorname{EC}$ Diagram $\cong$ max $\operatorname{EC}$ Diagram on graphs & \autoref{thm::joint-max-EC} \\ \addlinespace[0.3ex]
\quad The Combinatorial Characterization of $\operatorname{EC}$ Diagrams on Graphs & \autoref{thm::EC_characterization} \\ \addlinespace[0.3ex] 
\quad Generalization $\operatorname{EC} \cong \operatorname{max} \operatorname{EC}$ to Simplicial Complexes & \autoref{thm::ec_simplicial_main_paper} \\ \addlinespace[0.3ex]
\midrule
\textbf{Topological Descriptors on Graph Products (Section~\ref{sec::product}):} \\ \addlinespace[0.3ex]
\quad EC of Product Graphs Does Not Add New Information & \autoref{prop::EC_prod_not_exp} \\ \addlinespace[0.3ex]
\quad PH of (Virtual) Product Graphs $\succ$ PH of Components & \autoref{cor::general_PH_more_expressive} \\ \addlinespace[0.3ex]
\midrule
\textbf{Computational Methods for Products Filtrations (Section~\ref{sec::computational}):} \\ \addlinespace[0.3ex]

\quad Product of Vertex Filtrations $\simeq$ Some Vertex Coloring on Product & \autoref{prop::vertex_product_coloring} \\ \addlinespace[0.3ex]
\quad Product of Edge Filtrations $\simeq$ Some Edge Coloring on Product & \autoref{prop::edge_product_graph} \\ \addlinespace[0.3ex]
\quad Algorithm for $0$-th Persistence Pairs for Vertex Product Filtrations & \autoref{thm::vertex_PH_prod} \\ \addlinespace[0.3ex]
\quad Algorithm for $0$-th Persistence Pairs for Edge Product Filtrations & \autoref{thm::betti_number_prod} \\ \addlinespace[0.3ex]
\quad Algorithm for $1$-st Persistence Pairs for Product Filtrations & \autoref{prop::edge_ph_prod} \\ \addlinespace[0.3ex]
\end{tcolorbox}
\end{minipage}
\end{center}

    \label{fig:theoretical-contributions}
    \vspace{-8pt}
\end{table*}

From a practical micro-level perspective, the design of effective filtration functions is also paramount. However, almost invariably, filtration functions are either fixed {\em a priori} or learned without any structural considerations (e.g., symmetries), obfuscating how they are about the corresponding diagrams. We seek to address this gap from a rigorous theoretical perspective, in pursuit of principled design of richer topological descriptors.

The (box) product of graphs \cite{imrich2000product} naturally leads to symmetries and thus provides an effective starting point for us, where we treat individual components as graph fractals. From a machine learning perspective, graph products also arise naturally in the context of subgraph GNNs \cite{bar-shalom2024a, bar-shalom2023subgraphormer}, serve as a natural tool for modeling and analyzing  multi-way data \cite{6879640, 9556561} and relational databases (see, e.g.,  Motivation~\ref{motivation} in Section \ref{sec::product}), and have been incorporated in various GNN architectures \cite{10.1109/TPAMI.2023.3311912, einizade2024continuous}. 

For two colored graphs $G$ and $H$, there is a natural coloring on the graph product $G \Box H$. This enables us to consider vertex-level and edge-level color-based filtration on $G \Box H$. Under a technical modification to the graph product, we show that the PH of the graph product is strictly more expressive than the PH of the individual components. Surprisingly, the benefits of graph products under PH do not extend to EC. Leveraging our complete characterization of the expressive power of EC, we show that taking the EC of graph products provides does not provide any expressivity benefit beyond the EC of individual components.

A notable subcollection of filtrations on graph products is given by the (box) product of filtrations on the individual components. We initiate here a formal analysis by characterizing product filtrations: we establish that the product of vertex filtrations is the max of vertex colors, and that the product of edge filtrations is a certain edge coloring on the product. We also provide algorithms to track the evolution of persistence diagrams under the product filtration induced by vertex filtrations and edge filtrations respectively (in fact, the algorithm for 1-dimensional features works for any product filtrations). This gives a computationally cost-effective method to compute these PH diagrams.

In sum, our work introduces a calculus for topological descriptors under the graph product. We summarize our theoretical results in Table~\ref{fig:theoretical-contributions}, and relegate all the proofs to the Appendix.

\section{Preliminaries}\label{sec::background}
A \textbf{graph} is the data $G = (V, E, c, X)$ where $V$ is a finite vertex set, $E \subseteq V \times V$ is the set of edges, $c: V \to X$ is a vertex coloring function, and $X$ is the finite set of possible colors. An \textbf{isomorphism} between graphs $G = (V, E, c, X)$ and $G' = (V', E', c', X')$ is a bijection $h: V \to V'$ such that (a) $c = c' \circ h$ and (b) $(v, w) \in E$ if and only if $(h(v), h(w)) \in E'$. We will only consider finite simple graphs in this work, and all graphs in this work will share the same coloring set. 

In this work, we consider two types of color-based filtrations based on vertex-level and edge-level.
\begin{definition}[Color-Based Filtrations]\label{def:coloring_fil}
 On a color set $X$, we consider the pair of functions $(f_v: X \to \R, f_e: X \times X \to \R_{>0})$ where $f_e$ is symmetric (ie. $f_e(c, c') = f_e(c', c)$). On a graph $G$ with a vertex color set $X$, $(f_v, f_e)$ induces the following pair of functions $(F_v: V \cup E \to \Rbb, F_e: V \cup E \to \Rbb_{\geq 0})$.
\begin{enumerate}[noitemsep,topsep=0pt]
    \item For all $v \in V(G)$, $F_v(v) \coloneqq f_v(c(v))$. For all $e \in E(G)$ with vertices $v_1, v_2$, $F_v(e) = \max\{F_v(v_1), F_v(v_2)\}$. Intuitively, we are assigning the edge $e$ with the color $c(\argmax_{v_i} F_v(v_i))$ (the vertex color that has a higher value under $f_v$).
    \item For all $v \in V(G)$, $F_e(v) = 0$. For all $e \in e(G)$ with vertices $v_1, v_2$, $F_e(e) \coloneqq f_e(c(v_1), c(v_2))$. Intuitively, we are assigning the edge $e$ with the color $(c(v_1), c(v_2))$.
\end{enumerate}
For each $t \in \R$, we write $G^{f_v}_t \coloneqq F_v^{-1}((-\infty, t])$ and $G^{f_e}_t \coloneqq F_e^{-1}((-\infty, t])$. When there is a clear fixed choice of filtration, we may drop the superscripts and simply write $G_t$. 
\end{definition}
 Note we used $G^{f_v}_t$ as opposed to $G^{F_v}_t$ to emphasize that the function $G \mapsto (\{G^{f_v}_t\}_{t \in \Rbb}, \{G^{f_e}_t\}_{\Rbb})$ is well-defined for any graph $G$ with the coloring set $X$. The lists $\{G^{f_v}_t\}_{t \in \R}$ and $\{G^{f_e}_t\}_{t \in \R}$ define a \textbf{vertex filtration} of $G$ by $F_v$ and an \textbf{edge filtration} of $G$ by $F_e$ respectively. It is clear that $G^{f_v}_t$ can only change when $t$ crosses a critical value in $\{f_v(c): c\in X\}$, and $G^{f_e}_t$ can only change when $t$ crosses a critical value in $\{f_e(c_1, c_2): (c_1, c_2) \in X \times X\}$. Hence, we can reduce both filtrations to finite filtrations at those critical values.

We now review the two relevant classes of topological descriptors (TDs) for us, namely, persistent homology (PH) and the Euler characteristic (EC).

\subsection{Persistent Homology}
A vertex $v$ (ie. $0$-dimensional persistence information) is born when it appears in a given filtration of the graph. When we merge two connected components represented by two vertices $v$ and $w$, we use a decision rule to kill off one of the vertices and mark the remaining vertex to represent the new connected component. A cycle (ie. $1$-dimensional persistence information) is born when it appears in a given filtration of a diagram, and it will never die. For color-based vertex and edge filtration, there is a canonical way to calculate the \textbf{persistence pairs} of a graph with a given filtration. We refer the reader to Appendix A of \citet{immonen2023going} for a precise introduction. We say a $0$-th dimensional persistence pair $(b, d)$ is a \textbf{real hole} if $d = \infty$, is an \textbf{almost hole} if $b \neq d < \infty$, and is a \textbf{trivial hole} if $b = d$. We say that a birth or death of a vertex is \textbf{trivial} if its persistence tuple is a trivial hole.

\begin{definition}\label{def:PH}
    Let $f= (f_v, f_e)$ be on the coloring set $X$. The persistent homology diagram of a graph $G$ is a collection $\operatorname{PH}(G, f)$ composed of two lists $\operatorname{PH}(G, f)^V, \operatorname{PH}(G, f)^E$ where $\operatorname{PH}(G, f)^V$ are all the persistent pairs in the vertex filtration $\{G^{f_v}_t\}_{t \in \Rbb}$ and $\operatorname{PH}(G, f)^E$ are all the persistent pairs in the edge filtration $\{G^{f_e}_t\}_{t \in \Rbb}$.
\end{definition}

PH is an isomorphism invariant (Theorem 2 of \citet{ballester2024expressivity}) that generalizes an older invariant - the homology. We will sometimes use $\beta_0(G)$ and $\beta_1(G)$ to denote the 0-th and 1-st Betti number of a graph (ie. the rank of the 0-th and 1-st homology).

\subsection{Euler Characteristic}
Including topological features in graph representation learning using persistent homology can be computationally expensive. The Euler characteristic (EC) provides a weaker isomorphism invariant that is easier to compute (but typically less expressive). For a graph $G$, the Euler characteristic $\chi(G)$ is given by $\chi(G) = \# V(G) - \# E(G)$. Given a filtration of a graph $G$, we can track its Euler characteristic throughout the filtration. 

\begin{definition}\label{def:EC}
    Let $f= (f_v, f_e)$ be on the coloring set $X$. Write $a_1 < ... < a_n$ as the list of values $f_v$ can produce, and $b_1 < ... < b_m$ as the list of values $f_e$ can produce. The \textbf{EC diagram} of a graph $G$ is two lists $\operatorname{EC}(G, f) = \operatorname{EC}(G, f)^V \sqcup \operatorname{EC}(G, f)^E$, where $\operatorname{EC}(G, f)^V$ is the list $\{\chi(G^{f_v}_{a_i})\}_{i = 1}^n$ and $\operatorname{EC}(G, f)^E$ is the list $\{\chi(G^{f_e}_{b_i})\}_{i=1}^m$.
\end{definition}

We would also like to consider a weaker variant of $\operatorname{EC}$ that we will call the \textbf{max EC}.
\begin{definition}\label{def:mEC}
Given only a vertex coloring function $f_v: X \to \Rbb_{>0}$. We define the \textbf{max EC diagram} of $G$ as $\operatorname{EC}^m(G, f_v) \coloneqq \operatorname{EC}(G, (f_v, h(f_v)))$, where $h(f_v): X \times X \to \Rbb_{> 0}$ is defined as $h(f_v)(c_1, c_2) \coloneqq \max(f_v(c_1), f_v(c_2))$. Note that max EC is the restriction of EC to only the max-induced edge filtrations.
\end{definition}

The reader might wonder what happens when we could consider a suitable analog of \textbf{max PH} as well. We discuss this possibility in Appendix B and give an explicit method to compute it.
\begin{proposition}\label{prop::max_PH}
Write $f = (f_v > 0, h(f_v))$, where $h$ is defined in Definition~\ref{def:mEC}. The vertex death times (as a multi-set) of $\operatorname{PH}(G, f)^E$ and $\operatorname{PH}(G, f)^V$ are the same. The cycle birth times are also the same.
\end{proposition}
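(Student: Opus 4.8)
The plan is to reduce both claims to a single structural comparison of the two filtrations and then read the persistence data directly off the edges. Writing $f_e = h(f_v)$, the first step is to record that for every edge $e$ with endpoints $v_1,v_2$ one has
\[
F_v(e) = \max\{f_v(c(v_1)), f_v(c(v_2))\} = f_e(c(v_1), c(v_2)) = F_e(e),
\]
so the two filtrations assign the \emph{same} value to each edge; they differ only on vertices, where $F_e \equiv 0$ while $F_v(v) = f_v(c(v)) > 0$. From this I would deduce the key set-level identity: for every $t \ge 0$,
\[
G^{f_e}_t = G^{f_v}_t \ \sqcup\ \{v : f_v(c(v)) > t\},
\]
i.e. the edge-filtration sublevel complex is the vertex-filtration sublevel complex together with some isolated vertices (isolated because any edge present at time $t$ already has both endpoints of $f_v$-value $\le t$).

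Next I would set up the $0$- and $1$-dimensional persistence through the standard union-find/reduction picture, classifying each edge as \emph{negative} (it merges two distinct components, producing a finite $0$-dimensional death at its filtration value) or \emph{positive} (its endpoints are already connected, producing a $1$-dimensional cycle birth at its filtration value, which never dies since there are no $2$-cells). The central lemma to prove is that an edge $e$ is negative in the vertex filtration if and only if it is negative in the edge filtration. To see this, note that $e$ is negative precisely when its two endpoints lie in different components of the subgraph spanned by the edges strictly preceding $e$; since $F_v(e)=F_e(e)$ and the tie-break on equal-valued edges is the same in both filtrations, the set of edges preceding $e$ is identical, and every vertex lying on a connecting path has $f_v$-value $\le F_v(e)$ and is therefore present in both complexes when $e$ is processed. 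Hence the connectivity of the endpoints via earlier edges, and thus the positive/negative classification, agrees across the two filtrations.

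With the lemma in hand both statements follow. The multiset of vertex death times is, by construction, $\infty$ repeated $\beta_0(G)$ times (the number of components of the fully built graph $G$, which is the same in both filtrations because the final complexes coincide) together with the values $F_v(e)=F_e(e)$ of the negative edges; since the negative edges coincide and carry equal values, these multisets match. Likewise the cycle births are the values $\{F_v(e)=F_e(e) : e \text{ positive}\}$, which agree for the same reason. A convenient cross-check is that $H_1$ is insensitive to isolated vertices, so $\beta_1(G^{f_v}_t) = \beta_1(G^{f_e}_t)$ for all $t$, independently confirming the cycle-birth statement.

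I expect the main obstacle to be the bookkeeping around \emph{simultaneity}: the max construction forces each edge to share its filtration value with (at least) one of its endpoints, so in the vertex filtration a vertex may be born at the exact instant an incident edge appears, while in the edge filtration all vertices are born together at $t=0$. The proof must therefore fix a consistent processing order (vertices before edges at a common value, with a fixed tie-break on equal-valued edges, as in Appendix A of \citet{immonen2023going}) and verify that the vertex orderings at a shared value never affect the negative/positive classification of edges, only the edge ordering does. Establishing this robustness of the edge classification to the placement of the isolated and simultaneously-born vertices is the one place where genuine care is needed; everything else is formal.
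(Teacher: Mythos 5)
Your proof is correct and rests on the same core observation as the paper's own argument in Appendix B: both filtrations assign identical values to every edge and their sublevel graphs differ only by isolated vertices, so component merges and cycle creations coincide, and the death-time and cycle-birth multisets can be read off the (shared) edge values. The paper formalizes this as an induction over filtration steps comparing death counts per step (together with a decision rule that kills matching vertices), whereas you package it as the edge-wise negative/positive classification in the union-find picture --- an equivalent bookkeeping of the same idea, and your explicit handling of the vertices-before-edges processing order at tied values makes precise a point the paper's induction treats only implicitly.
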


\section{The Expressive Power of the Euler Characteristics}\label{sec::ec}

In this section, we will discuss a surprising equivalence between the expressivity of max EC diagrams and EC diagrams on graphs.

\begin{resbox}
\begin{theorem}\label{thm::joint-max-EC}
    $\operatorname{EC}$ and $\operatorname{EC}^m$ have the same expressive power, that is, $\operatorname{EC}$ can differentiate non-isomorphic graphs $G$ and $H$ if and only if $\operatorname{EC}^m$ can also differentiate them.
\end{theorem}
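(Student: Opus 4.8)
The plan is to show that $\operatorname{EC}$ and $\operatorname{EC}^m$ each ``see'' exactly the same combinatorial data, so that the indistinguishability relations they induce coincide. For a graph $G$ on the fixed color set $X = \{x_1, \dots, x_k\}$, I would introduce the invariants $n_j \coloneqq \#\{v : c(v) = x_j\}$ (vertex counts per color) and $m_{jl} \coloneqq \#\{e=(v_1,v_2) : \{c(v_1),c(v_2)\} = \{x_j,x_l\}\}$ (edge counts per unordered color pair, diagonal included). One direction is immediate from Definition~\ref{def:mEC}: since $\operatorname{EC}^m(\cdot, f_v) = \operatorname{EC}(\cdot,(f_v,h(f_v)))$ realizes max EC as a restriction of EC to max-induced edge filtrations, any $f_v$ witnessing $\operatorname{EC}^m(G,f_v) \neq \operatorname{EC}^m(H,f_v)$ also witnesses $\operatorname{EC}(G,(f_v,h(f_v))) \neq \operatorname{EC}(H,(f_v,h(f_v)))$, so max EC distinguishing implies EC distinguishing.

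For the converse I would first record that the data $(n_j),(m_{jl})$ determines every EC diagram. Since an edge lies in $G^{f_v}_t$ iff both its endpoints do, for any $f=(f_v,f_e)$ we have $\chi(G^{f_v}_{a_i}) = \sum_{j : f_v(x_j)\le a_i} n_j - \sum_{\{j,l\} : f_v(x_j),f_v(x_l)\le a_i} m_{jl}$ and $\chi(G^{f_e}_{b_i}) = \sum_j n_j - \sum_{\{j,l\} : f_e(x_j,x_l)\le b_i} m_{jl}$. Both expressions involve $G$ only through $(n_j),(m_{jl})$, so equal data forces equal $\operatorname{EC}(\cdot,f)$ for all $f$, hence equal max EC. It therefore suffices to prove the reverse recovery: if $\operatorname{EC}^m(G,f_v) = \operatorname{EC}^m(H,f_v)$ for all $f_v$, then $G$ and $H$ have identical data.

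The recovery is the heart of the argument. Note $h(f_v)$ depends only on $f_v$ and $X$, so for each $f_v$ both graphs produce EC lists indexed by the same thresholds, making entrywise comparison legitimate. To read off the $n_j$, I would take an injective $f_v$ with induced ordering $\sigma$; writing $E^\sigma_i$ for the number of edges internal to the first $i$ colors, the vertex diagram gives $\sum_{i'\le i} n_{\sigma(i')} - E^\sigma_i$ while the max-induced edge diagram gives $\#V - E^\sigma_i$ at the same threshold $a_i$ (an edge survives iff both endpoint colors are among the first $i$). Subtracting cancels $E^\sigma_i$ and yields $\sum_{i'>i} n_{\sigma(i')}$, whose successive differences recover each $n_{\sigma(i)}$, hence all $n_j$ and $\#V$. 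To read off the $m_{jl}$, for each subset $S \subseteq X$ I would choose a two-valued $f_v$ (say $1$ on $S$, $2$ off $S$); then the max-induced edge filtration assigns value $1$ precisely to edges with both endpoints colored in $S$, so its edge diagram at threshold $1$ equals $\#V - E^S$ with $E^S \coloneqq \sum_{\{j,l\}\subseteq S} m_{jl}$. Having already recovered $\#V$, equal max EC forces $E^S(G) = E^S(H)$ for every $S$, and Möbius inversion over the subset lattice ($m_{jl} = E^{\{x_j,x_l\}} - E^{\{x_j\}} - E^{\{x_l\}} + E^{\emptyset}$ for $j\ne l$, and $m_{jj} = E^{\{x_j\}}$) recovers every $m_{jl}$. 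Chaining equal data $\Rightarrow$ equal EC $\Rightarrow$ equal max EC $\Rightarrow$ equal data collapses the three relations and gives the claimed equivalence of expressive power.

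I expect the main obstacle to be the edge-count recovery: one must check that the max-induced edge filtration, though far more constrained than a general $f_e$, still isolates the internal-edge-count $E^S$ of an arbitrary color subset $S$, and then argue these subset-indexed counts determine the pairwise $m_{jl}$ by inclusion--exclusion. A secondary technical point is the threshold bookkeeping needed to compare the two EC lists at matching values, together with checking that diagonal pairs $(x,x)$ behave correctly so that monochromatic edges $m_{jj}$ are captured as well.
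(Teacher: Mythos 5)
Your proposal is correct, and its overall skeleton matches the paper's: both reduce Theorem~\ref{thm::joint-max-EC} to the stronger characterization (Theorem~\ref{thm::EC_characterization}) that indistinguishability under either $\operatorname{EC}$ or $\operatorname{EC}^m$ is equivalent to equality of the per-color vertex counts and per-color-pair edge counts, and your easy direction (max EC is a restriction of EC) and your ``equal data implies equal EC diagrams'' computation are essentially the paper's. Where you genuinely diverge is the recovery step, i.e.\ showing that equal max EC diagrams force equal combinatorial data. The paper (Propositions~\ref{prop::ec-edge-equiv} and~\ref{prop::ec-vertex-equiv}) works within each half of the diagram separately: it feeds in functions that rank two chosen colors $a,b$ below all others, telescopes inside the edge diagram to isolate $\# E_G(a,a)$, $\# E_G(b,b)$, then $\# E_G(a,b)$, extracts $\chi(G(c))$ and the off-diagonal counts from the vertex diagram with injective functions, and only at the very end combines the two halves through the identity $\chi(G(c)) + \# E_G(c,c) = \# V_G(c)$. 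You instead (i) play the two halves of the \emph{same} diagram against each other---subtracting the vertex entry from the edge entry at matching thresholds cancels the shared edge term and reads off cumulative vertex counts---and (ii) recover all edge counts uniformly from two-valued indicator filtrations, which yield the subset-internal counts $E^S$ for every $S \subseteq X$, followed by inclusion--exclusion (M\"obius inversion) on pairs. Your route handles diagonal and off-diagonal edge counts in one stroke and is arguably more systematic; the paper's is more economical, isolating each invariant directly with ordered test functions and needing no cross-diagram subtraction or inversion over the subset lattice.

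Two small points need patching, neither fatal. First, from a single injective ordering $\sigma$, the successive differences of $\sum_{i'>i} n_{\sigma(i')}$ recover $n_{\sigma(i)}$ only for $i \geq 2$; to get $n_{\sigma(1)}$ (and hence $\# V$, which your $E^S$ step presupposes) you must either invoke a second ordering---available, since equality of diagrams is assumed for \emph{all} $f_v$, provided $|X| \geq 2$---or impose the standing assumption of Section~\ref{sec::ec} that $G$ and $H$ have the same number of vertices, which is exactly what the paper does. Second, in the degenerate case $|X|=1$ the recovery genuinely fails (both descriptors then see only $\chi(G)$, which does not determine $\# V$ and $\# E$ separately); the theorem still holds trivially there since $\operatorname{EC}$ and $\operatorname{EC}^m$ carry identical information, but your chain ``equal max EC $\Rightarrow$ equal data'' should be stated only for $|X| \geq 2$ or under the paper's same-vertex-count convention.
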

\end{resbox}

In fact, Theorem~\ref{thm::joint-max-EC} will follow as an immediate corollary of a stronger theorem that we will prove below, where we give a complete characterization of the expressivity of (max) EC diagrams on graphs. We will first discuss two complete characterizations of the expressivity of EC with respect to vertex-level and edge-level filtrations.

Let $G, H$ be two graphs both on $n$ vertices. We will consider the following combinatorial objects.

\textbf{Notation:} We use $E_G(a, b)$ to denote the set of edges in $G$ with endpoints being a vertex of color $a$ and a vertex of color $b$, $V_G(a)$ to denote the set of vertices in $G$ with color $a$, and $G(c), H(c)$ to denote the subgraphs of $G, H$ generated by the vertices of color $c$.

In our next results, Proposition~\ref{prop::ec-edge-equiv} and Proposition~\ref{prop::ec-vertex-equiv}, we establish that the (max) EC diagrams of a graph $G$ may be completely interpreted in terms of the combinatorial data provided by the objects $E_G(a, b), V_G(a)$ and $G(c)$ defined in the notations above.

\begin{proposition}[Characterization of (max) EC Edge Filtrations]\label{prop::ec-edge-equiv}
The following are equivalent:
\begin{enumerate}[noitemsep,topsep=0pt]
    \item For all (symmetric) edge color functions $g: X \times X \to \R$, $\operatorname{EC}(G, g)^E = \operatorname{EC}(H, g)^E$.
    \item For all vertex color functions $f: X \to \R$, $\operatorname{EC}^m(G, f)^E = \operatorname{EC}^m(H, f)^E$.
    \item $\# E_G(a, b) = \# E_H(a, b)$ for all colors $a, b \in X$.
\end{enumerate}
\end{proposition}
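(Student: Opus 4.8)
The plan is to prove the cycle of implications $(3) \Rightarrow (1) \Rightarrow (2) \Rightarrow (3)$. The key preliminary observation is that in any edge filtration all vertices enter at time $0$ (since $F_e \equiv 0$ on vertices) and only edges appear thereafter, so for a symmetric edge function $g$ and threshold $t$ one has $\chi(G^{g}_t) = n - \#\{e \in E(G) : g(c(e)) \le t\}$, using $\#V(G) = \#V(H) = n$. Hence matching the edge EC diagrams is equivalent to matching, at every critical value, the cumulative number of edges whose $g$-value lies below the threshold.

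For $(3) \Rightarrow (1)$, the edges of $G$ partition according to the unordered color pair of their endpoints, and every edge in the class $E_G(a,b)$ carries the same value $g(a,b)$ (as $g$ is symmetric). Thus $\#\{e : g(c(e)) \le t\} = \sum_{\{a,b\} : g(a,b) \le t} \#E_G(a,b)$, which by $(3)$ equals the corresponding sum for $H$; combined with the observation above this yields $\operatorname{EC}(G,g)^E = \operatorname{EC}(H,g)^E$ for every $g$. The implication $(1) \Rightarrow (2)$ is immediate, since $\operatorname{EC}^m(\cdot, f)^E$ is by definition $\operatorname{EC}(\cdot, h(f))^E$, which is a special case of the edge functions quantified over in $(1)$.

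The crux is $(2) \Rightarrow (3)$, and this is where the main obstacle lies: the max-induced functions $h(f)(a,b) = \max(f(a), f(b))$ are far more restrictive than arbitrary symmetric edge functions, so a priori it is unclear that they can separate individual color-pair counts. The idea is to probe with two-valued vertex colorings. Given a subset of colors $S \subseteq X$, set $f \equiv 1$ on $S$ and $f \equiv 2$ on $X \setminus S$ (both positive, as required); then $h(f)(a,b) = 1$ precisely when both endpoints lie in $S$, and equals $2$ otherwise. Reading off the EC value at $t = 1$ therefore recovers the number of edges with both endpoints colored in $S$, namely $\sum_{\{a,b\} \subseteq S} \#E_G(a,b)$, and hypothesis $(2)$ forces this quantity to agree for $G$ and $H$, for every $S$.

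It remains to deconvolve these subset sums into the individual counts. Taking $S = \{a\}$ gives $\#E_G(a,a)$; taking $S = \{a,b\}$ gives $\#E_G(a,a) + \#E_G(a,b) + \#E_G(b,b)$; subtracting the two singleton contributions isolates $\#E_G(a,b)$. (Equivalently, this is Möbius inversion over the subset lattice, but only the one- and two-element subsets are needed.) Since all these subset sums coincide for $G$ and $H$, so do all the $\#E_G(a,b)$, establishing $(3)$ and closing the cycle.
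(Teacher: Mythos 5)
Your proposal is correct and takes essentially the same approach as the paper: the counting identity $\chi(G^{g}_t) = n - \#\{e : g(c(e)) \le t\}$ gives $(3)\Rightarrow(1)$, specialization gives $(1)\Rightarrow(2)$, and $(2)\Rightarrow(3)$ is proved by probing with chosen vertex functions to isolate first $\#E_G(a,a)$ and then $\#E_G(a,b)$ by subtraction. The only cosmetic difference is in the probe for $(2)\Rightarrow(3)$: you use two-valued indicator colorings of subsets $S=\{a\}$ and $S=\{a,b\}$ read at a single threshold, whereas the paper uses one function with $f(a)<f(b)<\min$ of the remaining colors and reads the diagram at the two successive thresholds $f(a)$ and $f(b)$ — the resulting deconvolution is identical.
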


\begin{proposition}[Characterization of (max) EC Vertex Filtrations]\label{prop::ec-vertex-equiv}
    The following are equivalent:
    \begin{enumerate}[noitemsep,topsep=0pt]
        \item For all vertex color functions $f: X \to \Rbb$, $\operatorname{EC}(G, f)^V = \operatorname{EC}(H, f)^V$.
        \item For all vertex color functions $f: X \to \Rbb$, $\operatorname{EC}^m(G, f)^V = \operatorname{EC}^m(H, f)^V$.
        \item For all $a \neq b, c \in X$, $\chi(G(c)) = \chi(H(c))$ and $\# E_G(a, b) = \# E_H(a, b)$.
    \end{enumerate}
\end{proposition}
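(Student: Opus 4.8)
The plan is to reduce everything to a single additive formula for the Euler characteristic of a vertex sublevel set. Fix a vertex coloring $f : X \to \R$ and a threshold $t$, and let $S_t = \{c \in X : f(c) \le t\}$. Since $F_v(e) = \max\{F_v(v_1), F_v(v_2)\}$, the sublevel set $G^f_t$ is exactly the subgraph of $G$ induced on the vertices whose color lies in $S_t$. Partitioning the vertices by color and the edges by the unordered pair of endpoint colors, I would establish
$$\chi\!\left(G^f_t\right) = \sum_{c \in S_t} \chi(G(c)) - \sum_{\substack{a, b \in S_t \\ a \ne b}} \#E_G(a,b),$$
where the second sum runs over unordered pairs of distinct colors and I have used $\chi(G(c)) = \#V_G(c) - \#E_G(c,c)$ to absorb the monochromatic edges. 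This identity is the engine of the whole proof: it expresses each entry of $\operatorname{EC}(G,f)^V$ as a fixed integer combination of the quantities $\chi(G(c))$ and $\#E_G(a,b)$.

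Before comparing, I would record that $(1) \Leftrightarrow (2)$ is immediate: the list $\operatorname{EC}(G,f)^V$ depends only on the vertex filtration, hence only on the vertex function $f$, and by Definition~\ref{def:mEC} the vertex part $\operatorname{EC}^m(G,f)^V$ is computed from the same vertex filtration. Thus $\operatorname{EC}^m(G,f)^V = \operatorname{EC}(G,f)^V$ for every $f$, so statements (1) and (2) are literally the same condition. For $(3) \Rightarrow (1)$, observe that for any fixed $f$ both $G$ and $H$ produce EC diagrams of the same length indexed by the same thresholds (the distinct values of $f$, which do not depend on the graph). Under (3) every summand on the right-hand side of the displayed formula agrees for $G$ and $H$, so $\chi(G^f_t) = \chi(H^f_t)$ at every threshold, giving entrywise equality of the diagrams.

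The substance is $(1) \Rightarrow (3)$, which I would obtain by evaluating (1) on carefully chosen test colorings that isolate one combinatorial quantity at a time. To recover the per-color data, take $f$ sending a fixed color $c$ to $0$ and every other color to $1$; then $S_0 = \{c\}$, so the first diagram entry is $\chi(G^f_0) = \chi(G(c))$, and (1) forces $\chi(G(c)) = \chi(H(c))$ (this also covers colors absent from a graph, where the value is $0$). To recover cross-color edge counts for $a \ne b$, take $f$ sending $a \mapsto 0$, $b \mapsto 1$, and all remaining colors to $2$; then $S_1 = \{a,b\}$ and the formula gives the corresponding entry as $\chi(G(a)) + \chi(G(b)) - \#E_G(a,b)$. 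Since the vertex terms were already matched in the previous step, (1) now forces $\#E_G(a,b) = \#E_H(a,b)$, completing (3).

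I expect the only real care to be bookkeeping: checking that each chosen $f$ has the intended distinct-value list (so the intended entry really is the first or second one), and confirming that the graph-independence of the threshold list makes the entrywise comparison in (1) legitimate. The conceptual work is entirely front-loaded into the additive formula; everything after it is a matter of selecting the right $f$ and reading off one entry.
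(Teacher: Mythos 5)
Your proposal is correct and follows essentially the same route as the paper: the equivalence of (1) and (2) is definitional since the vertex part of the diagram ignores the edge function, (1) $\Rightarrow$ (3) is obtained by test colorings that isolate first $\chi(G(c))$ and then $\chi(G(a)) + \chi(G(b)) - \# E_G(a,b)$, and (3) $\Rightarrow$ (1) follows from the additive color-decomposition of $\chi(G^f_t)$. The only cosmetic difference is that you use explicit two- and three-valued colorings where the paper uses injective ones with a prescribed ordering, which yields the same sublevel sets at the relevant thresholds.
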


Combining the statements of Proposition~\ref{prop::ec-edge-equiv} and Proposition~\ref{prop::ec-vertex-equiv}, we obtain the following theorem.

\begin{resbox}
\begin{theorem}[Characterization of (max) EC Diagrams]\label{thm::EC_characterization}
    The following are equivalent:
    \begin{enumerate}[noitemsep,topsep=0pt]
        \item $G$ and $H$ have the same $\operatorname{EC}$ diagram for any choice of coloring functions.
        \item $G$ and $H$ have the same max $\operatorname{EC}$ diagram for any choice of coloring functions.
        \item $\# V_G(c) = \# V_H(c)$ for all $c \in X$ and $\# E_G(a, b) = \# E_H(a, b)$ for any $a, b \in X$.
    \end{enumerate}
\end{theorem}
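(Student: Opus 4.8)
The plan is to exploit the fact that the $\operatorname{EC}$ diagram splits as a disjoint union of a vertex-filtration part and an edge-filtration part, and that these two parts are governed separately by the two propositions just established. Concretely, for a pair $f = (f_v, f_e)$ the list $\operatorname{EC}(G, f)^V = \{\chi(G^{f_v}_{a_i})\}$ depends only on $f_v$, while $\operatorname{EC}(G, f)^E = \{\chi(G^{f_e}_{b_i})\}$ depends only on $f_e$; and for the max variant, $\operatorname{EC}^m(G, f_v)$ is by definition $\operatorname{EC}(G, (f_v, h(f_v)))$, whose two parts again depend only on $f_v$. Hence condition (1) of the theorem (equality of full $\operatorname{EC}$ diagrams for every choice of coloring functions) holds if and only if both $\operatorname{EC}(G, f)^V = \operatorname{EC}(H, f)^V$ for all $f_v$ and $\operatorname{EC}(G, f)^E = \operatorname{EC}(H, f)^E$ for all $f_e$, i.e.\ if and only if statement (1) of \autoref{prop::ec-vertex-equiv} and statement (1) of \autoref{prop::ec-edge-equiv} both hold. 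Likewise, condition (2) of the theorem holds if and only if statement (2) of \autoref{prop::ec-vertex-equiv} and statement (2) of \autoref{prop::ec-edge-equiv} both hold.

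Next I would invoke the two propositions to replace these analytic conditions by their combinatorial equivalents. By \autoref{prop::ec-edge-equiv}, the edge clause (in either the $\operatorname{EC}$ or max $\operatorname{EC}$ form) is equivalent to $\# E_G(a,b) = \# E_H(a,b)$ for all colors $a, b \in X$. By \autoref{prop::ec-vertex-equiv}, the vertex clause is equivalent to $\chi(G(c)) = \chi(H(c))$ for all $c$ together with $\# E_G(a,b) = \# E_H(a,b)$ for all $a \neq b$. Since the edge clause already enforces equality of $\# E_G(a,b)$ for every pair (including $a = b$), the off-diagonal edge condition coming from the vertex clause is redundant. Therefore both theorem conditions (1) and (2) are equivalent to the single combined statement: $\# E_G(a,b) = \# E_H(a,b)$ for all $a, b \in X$ and $\chi(G(c)) = \chi(H(c))$ for all $c \in X$.

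Finally I would convert the Euler-characteristic clause into the vertex-count clause of theorem condition (3). The subgraph $G(c)$ induced by the color-$c$ vertices satisfies $\chi(G(c)) = \# V_G(c) - \# E_G(c,c)$, where $E_G(c,c)$ is exactly the set of monochromatic color-$c$ edges. Under the already-established diagonal equality $\# E_G(c,c) = \# E_H(c,c)$ (the case $a = b = c$ of the edge condition), the equality $\chi(G(c)) = \chi(H(c))$ is equivalent to $\# V_G(c) = \# V_H(c)$. Substituting this equivalence into the combined statement yields precisely condition (3), closing the chain (1) $\Leftrightarrow$ (2) $\Leftrightarrow$ (3). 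The only step beyond bookkeeping is this last conversion, and even there the work is light: the subtlety to get right is that the edge condition must be read for all pairs $a, b$ (not merely $a \neq b$), so that the diagonal terms $\# E_G(c,c)$ can be cancelled against $\chi(G(c))$; everything else is assembling the two propositions.
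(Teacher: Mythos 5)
Your proof is correct and follows essentially the same route as the paper's: combine Propositions~\ref{prop::ec-edge-equiv} and~\ref{prop::ec-vertex-equiv} to handle the equivalence of (1) and (2) with the combinatorial data, then use the identity $\chi(G(c)) + \# E_G(c,c) = \# V_G(c)$ to trade the Euler-characteristic condition for the vertex-count condition in (3). You simply spell out the bookkeeping (the decomposition into vertex/edge parts and the redundancy of the off-diagonal edge clause) that the paper's terse proof leaves implicit.
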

\end{resbox}

While the majority of our work is focused on graphs, the equivalence of $\operatorname{EC}$ and max $\operatorname{EC}$ is a special case of a more general equivalence on the level of (colored) simplicial complexes (see Appendix~\ref{appendix::express_ec_simp}).

\begin{resbox}
\begin{theorem}\label{thm::ec_simplicial_main_paper}
There is a natural extension of the definition of EC and max EC to colored simplicial complexes such that they have the same expressive power for color-based filtrations.
\end{theorem}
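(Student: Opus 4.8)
The plan is to mimic the graph argument behind Theorem~\ref{thm::EC_characterization}, replacing the pair (vertices, edges) by the graded pieces of the complex and replacing the combinatorial invariants $\#V_G(c),\#E_G(a,b)$ by a ``colored $f$-vector.'' First I would fix the natural extension of the definitions. A colored simplicial complex is a complex $K$ with a vertex coloring $c\colon V(K)\to X$; to each simplex $\sigma$ I attach its \emph{color set} $\operatorname{col}(\sigma)=\{c(v):v\in\sigma\}\subseteq X$. A color-based filtration is then a dimension-aware, \emph{set-based} assignment: for each dimension $d$ one chooses a function $g_d$ of the color set of the $d$-simplices, declares all simplices of dimension $<d$ to be present from the start, and filters the $d$-simplices by $g_d$; the EC diagram collects the Euler characteristics of these sublevel complexes at the finitely many critical values, exactly as in Definition~\ref{def:EC}. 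The max EC diagram is the restriction to the $g_d$ of the form $g_d(T)=\max_{a\in T}f_v(a)$ induced by a single vertex coloring $f_v$, generalizing Definition~\ref{def:mEC}. This is the ``natural extension'' the theorem refers to, and it specializes to the graph vertex/edge filtrations when $\dim K\le 1$.

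The core invariant I would isolate is the colored $f$-vector $M_{d,T}:=\#\{d\text{-simplices }\sigma:\operatorname{col}(\sigma)=T\}$, for $T\subseteq X$. The first half of the argument generalizes Propositions~\ref{prop::ec-edge-equiv} and~\ref{prop::ec-vertex-equiv}: I would show that the full (general) EC diagram determines, and is determined by, the numbers $\{M_{d,T}\}$. The Euler characteristic of the $d$-level sublevel complex at threshold $t$ equals the constant $\sum_{k<d}(-1)^k\#\{k\text{-simplices}\}$ plus $(-1)^d\,\#\{d\text{-simplices with }g_d(\operatorname{col}\sigma)\le t\}$, so by choosing $g_d$ as indicator-type thresholds on color sets one reads off $\#\{d\text{-simplices with }\operatorname{col}(\sigma)\subseteq S\}$ for every $S\subseteq X$, and Möbius inversion over the Boolean lattice $2^X$ then recovers each $M_{d,T}$. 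Doing this for $d$ from the top dimension downward peels off the counts one graded piece at a time, exactly as the edge filtration isolates $\#E_G(a,b)$ before the vertex filtration isolates $\#V_G(c)$ in the graph proof.

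The second half is to show that the \emph{max} EC diagram already determines the same $\{M_{d,T}\}$. Here the key observation is that the sublevel complex of a max-induced $d$-level filtration at threshold $t$ is precisely the $d$-level truncation of the induced subcomplex $K[S_t]$ on the color subset $S_t=\{a:f_v(a)\le t\}$, so its Euler characteristic is $\sum_{k<d}(-1)^k\#\{k\text{-simplices}\}+(-1)^d\#\{d\text{-simplices with }\operatorname{col}(\sigma)\subseteq S_t\}$. Letting $f_v$ and $t$ range over all colorings and thresholds realizes every $S\subseteq X$ as some $S_t$, so the same Möbius inversion recovers each $M_{d,T}$ from max data alone. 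Combining the two halves shows EC and max EC are each equivalent to the colored $f$-vector, hence to each other, which is the asserted equality of expressive power; Theorem~\ref{thm::joint-max-EC} is the $\dim K\le 1$ special case.

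I expect the main obstacle to be getting the \emph{level of granularity} of the invariant right, not the inversion bookkeeping. In dimension $\ge 2$ a single color set $T$ is spanned by simplices of several dimensions and by several color \emph{multisets} (e.g.\ a triangle $\{a,a,b\}$ versus $\{a,b,b\}$, both with $\operatorname{col}=\{a,b\}$), and a max-induced filtration assigns all of them the same birth value $\max(f_v(a),f_v(b))$. Thus max EC is structurally blind to multiplicities, and the equivalence can only hold if the general EC diagram is likewise defined through \emph{set-based} color filtrations. I would therefore have to justify that this set-based extension is the correct and natural one (it is the extension that reduces to the graph definitions and for which a simplex is born precisely when all of its colors become active), and verify that the dimension-aware truncation yields a valid filtration in every degree. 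Once granularity is pinned down, the Möbius-inversion steps are routine generalizations of the graph case.
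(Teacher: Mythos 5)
Your proposal is correct in its core and shares the paper's central insight: the decisive point is that simplex colorings must be \emph{set-based} (blind to multiplicity), and the equivalence is proved by showing that both EC and max EC determine exactly the colored $f$-vector --- your $M_{d,T}$ is precisely the paper's invariant $S^j_K(c_1,\dots,c_i)$, and the paper's remark in Appendix~\ref{appendix::express_ec_simp} confirms your observation that a multiset-based convention would break the theorem. Where you genuinely diverge is in the definition of the level filtrations and, as a consequence, in the extraction argument. The paper (Definition~\ref{simplicial_complex_fil}) extends the $i$-th coloring to simplices of dimension greater than $i$ by taking the max over $i$-faces, so each sublevel Euler characteristic mixes counts from \emph{all} dimensions $\ge i$ with alternating signs; disentangling these forces the paper into a double induction in Theorem~\ref{thm:ec_char_simplicial} (backward on the dimension $j=n,\dots,0$, and inside that, induction on the number of colors), which is in effect a hand-rolled M\"obius inversion. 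Your truncation of the $d$-level filtration at dimension $d$ decouples the dimensions, so each level isolates a single varying count and a one-shot M\"obius inversion over the Boolean lattice $2^X$ suffices. That is a real simplification; since the theorem is existential (``there is a natural extension''), choosing your own extension is legitimate.

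Two points need repair, however. First, your claim that the truncated extension ``specializes to the graph vertex/edge filtrations when $\dim K\le 1$'' is false at level $d=0$: the paper's vertex filtration (Definition~\ref{def:coloring_fil}) includes each edge at the max of its endpoint values, whereas your $0$-level truncation contains no edges at all, so its EC is just a vertex count. The paper's more involved definition is engineered exactly so that $f_0,f_1,F_0,F_1$ restrict to $f_v,f_e,F_v,F_e$ on graphs; your version recovers \autoref{thm::joint-max-EC} only indirectly, via the fact that both notions determine $\# V_G(c)$ and $\# E_G(a,b)$ and hence have the same expressive power by \autoref{thm::EC_characterization}. Second, your extraction of $\#\{d\text{-simplices}:\operatorname{col}(\sigma)\subseteq S\}$ from $\chi$ presumes that the additive constant $\sum_{k<d}(-1)^k\#\{k\text{-simplices}\}$ agrees between the two complexes being compared, and your proposed order ``from the top dimension downward'' cannot establish this --- at the top level that constant is exactly the unknown. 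You must either run the induction bottom-up, so that lower-dimensional colored counts pin down the constant before dimension $d$ is treated, or first prove that equal (max) EC diagrams force equal $f$-vectors by comparing Euler characteristics of skeleta at extremal filtration values; the latter is exactly the paper's Lemma~\ref{lem::same-f-vector}. With those two corrections, your argument goes through.
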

\end{resbox}

\section{Persistent Topological Descriptor for Graph Products}\label{sec::product}

When dealing with large or complex graphs that have the structural property of being some product, it is often easier to work with the components of the graph product rather than the graph as a whole. There are common situations where graph data have a natural product structure (e.g. relational databases \cite{robinson2024relational}, Hamming graphs \cite{Hora2007, IKEDA20011189}) that can be analyzed using the methods here. 

Our focus is to look at topological descriptors on product of graphs, as a special kind of structure.
\begin{definition}\label{def::graph_product}
    Let $G, H$ be graphs, the \textbf{box product (Cartesian product)} of $G$ and $H$ is the graph $G \square H$, where the vertex set of $G \square H$ is the set $\{(g, h)\ |\ g \in V(G), h \in V(H)\}$ and the edge set is constructed as follows. For vertices $(g_1, h_1)$ and $(g_2, h_2)$, we draw an edge if (1) $g_1 = g_2$ and $h_1 \sim h_2$ in $H$ or (2) $h_1 = h_2$ and $g_1 \sim g_2$ in $G$. Here, by $h_1 \sim h_2$, we mean that $h_1$ and $h_2$ are related by an edge in $H$ (and similarly for $g_1 \sim g_2$). Note that if one of $G$ and $H$ is empty, then the box product is empty.

There is a natural coloring on $G \Box H$ where a vertex $(g, h) \in V(G \Box H)$ is assigned the color $(c(g), c(h))$. For simplicity, we will view $(c_1, c_2)$ and $(c_2, c_1)$ as the same color for all $c_1, c_2 \in X$. Unless mentioend otherwise, a vertex-level (resp. edge-level) color-based filtration on $G \Box H$ is always taken to be with respect to the product color structure above.
\end{definition}

\begin{motivation}\label{motivation} Graph products provide a natural model to convert database analysis questions to inputs for GNNs. For example, suppose one has two tables - \textit{Table A} is the wage income of residents in New York, and \textit{Table B} is the wage income of residents in London, with some additional demographic / financial information for both. For each table, we can view the rows as nodes and assign a similarity metric on the rows such that two nodes with sufficient similarity are linked with an edge. Suppose our goal is to compare the residents in New York against the residents in London, then one natural operation would be to take the Cartesian product of the tables, call this new table \textit{Table C}. A reasonable graph one can make out of \textit{Table C} is as follows - the nodes are still the rows. For person $X$ in \textit{Table A} paired with person $Y$ in \textit{Table B}, in order to analyze how $X$ compares with $Y$ and those similar to $Y$, we ought to look at how $X$ connects with everyone in \textit{Table B} that are connected to Y under the similarity metric before. What this means in terms of graphs is that we should link the node $(X, Y)$ to $(X, Y')$ for all $Y'$ that is linked to $Y$. We can also do this symmetrically, which produces exactly the box product of \textit{Table A} and \textit{Table B}.
\end{motivation}

We can define PH and EC on graph products with respect to the color-based filtrations in Definition~\ref{def::graph_product}. We wish to examine whether the graph product contains information that the components do not give. Given two graphs $G$ and $H$ and suppose for contradiction that they are isomorphic, then it follows that their products $G \Box G, G \Box H,$ and $H \Box H$ are all isomorphic graphs. If we could show that any two product graphs in the triplet are not isomorphic, then $G$ and $H$ are not isomorphic.

In the case of PH, the graph product does capture structure not seen by the individual components.
\begin{proposition}\label{prop:PH_more_info}
    There exist graphs $G$ and $H$ such that $\operatorname{PH}$ cannot tell apart, but $\operatorname{PH}$ can tell apart $G \Box G$ and $H \Box H$. Furthermore, an example is given in Figure~\ref{fig:ph_prod}.
\end{proposition}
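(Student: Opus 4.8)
The plan is to prove this existence statement by exhibiting an explicit pair of colored graphs $G$ and $H$ (those depicted in Figure~\ref{fig:ph_prod}) and verifying the two required properties separately: that $\operatorname{PH}(G, f) = \operatorname{PH}(H, f)$ for \emph{every} color-based filtration $f = (f_v, f_e)$, and that there is \emph{at least one} filtration under which $\operatorname{PH}(G \Box G, f) \neq \operatorname{PH}(H \Box H, f)$. The second property is the easy, constructive half — a single witnessing filtration suffices — while the first is a universally quantified claim and will be the crux of the argument.

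For the indistinguishability of $G$ and $H$, I would first observe that the persistence pairs of a color-based vertex (resp.\ edge) filtration depend only on the total preorder that $f_v$ induces on the finite color set $X$ (resp.\ that $f_e$ induces on $X \times X$), together with the resulting sublevel sets $G_t$. Since $X$ is finite, there are only finitely many combinatorial types of filtration, so the claim ``$\operatorname{PH}(G,f) = \operatorname{PH}(H,f)$ for all $f$'' reduces to a finite check. Rather than enumerating these by hand, I would invoke the combinatorial characterization of PH-equivalence for colored graphs from \cite{immonen2023going}: it then suffices to confirm that $G$ and $H$ agree on the finite list of invariants appearing there (in particular the per-color vertex counts, the edge-pair counts, and the component/cycle data of each sublevel set), all of which can be read off directly from the chosen example.

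For the separation of the products, I would compute $G \Box G$ and $H \Box H$ explicitly from Definition~\ref{def::graph_product}, equip each with the induced product coloring, and then exhibit one filtration $f^\star$ under which the diagrams differ. Because $G$ and $H$ share the same per-color vertex and edge-pair counts (and hence the same total $\beta_0$ and $\beta_1$), the two \emph{full} products have identical vertex counts, edge counts, and total Betti numbers; the discrepancy must therefore be a persistence (birth/death) mismatch at an \emph{intermediate} threshold. Concretely, I would search for a threshold $t$ of $f^\star$ at which the sublevel sets $(G \Box G)_t$ and $(H \Box H)_t$ have different $\beta_0$ or $\beta_1$ — that is, where the product structure merges a component or closes a cycle in one but not the other — and record the resulting mismatch in the multiset of persistence pairs.

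The main obstacle is the first, universally quantified half: showing that PH \emph{fails} to separate $G$ and $H$ requires ruling out every color filtration, not merely the natural coloring. The finiteness of combinatorial filtration types makes this tractable in principle, but the bookkeeping is delicate, and the cleanest route is to lean on the characterization of \cite{immonen2023going} so that the check collapses to matching a fixed finite set of invariants. The second half, by contrast, needs only a single well-chosen filtration, so the asymmetry of effort between the two halves is the key organizing principle of the proof.
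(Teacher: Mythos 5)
Your overall strategy coincides with the paper's: both halves are verified on the explicit pair of Figure~\ref{fig:ph_prod}, with the universally quantified indistinguishability of $G$ and $H$ reduced to a finite check over order types of the filtration values, and the separation of the products witnessed by a single filtration. For the second half the paper is more explicit than you are — it names the witness (an edge filtration in which the edges joining colors $(\operatorname{blue},\operatorname{blue})$ and $(\operatorname{red},\operatorname{blue})$ close a cycle in $G \Box G$ but not in $H \Box H$, so the $1$-dimensional diagrams differ) — but your proposed scan over thresholds comparing $\beta_0$ and $\beta_1$ of the sublevel graphs would find exactly this, and your observation that the discrepancy must occur at an intermediate threshold is correct. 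For the first half, the paper does precisely the "enumeration by hand" you hope to avoid: since both graphs are trees, only $0$-dimensional pairs matter, and a short case analysis over the relative order of $f_v(\operatorname{red}), f_v(\operatorname{blue})$ (resp.\ $f_e(\operatorname{red},\operatorname{red}), f_e(\operatorname{red},\operatorname{blue})$) shows the diagrams always agree.

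The genuine problem is the list of invariants you propose to check when invoking \cite{immonen2023going}. Agreement of per-color vertex counts and edge-pair counts is exactly the condition characterizing \emph{Euler characteristic} equivalence (Theorem~\ref{thm::EC_characterization} of this paper); it does \emph{not} imply PH-equivalence. For instance, with a single color, a $6$-cycle and a disjoint union of two triangles have identical vertex and edge-pair counts, yet their $0$-dimensional diagrams differ (one essential class versus two), so a check based on those counts could be passed by pairs that PH does distinguish, and the proof would not go through. The correct filtration-free criteria in \cite{immonen2023going} are the nonexistence of a color-separating set (vertex filtrations) and of a color-disconnecting set (edge filtrations), together with agreement of component-wise colors; those are what you must verify for the pair in Figure~\ref{fig:ph_prod} (and they do hold there: deleting any subset of vertex colors, or of edge colors, leaves subgraphs with matching component data in $G$ and $H$). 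Your third listed item, "component/cycle data of each sublevel set," is not a filtration-free invariant at all — checking it for every filtration is just the finite enumeration again, i.e., the paper's own argument. So the shortcut through \cite{immonen2023going} is legitimate, but only with the right characterization quoted; as written, your check is too weak.
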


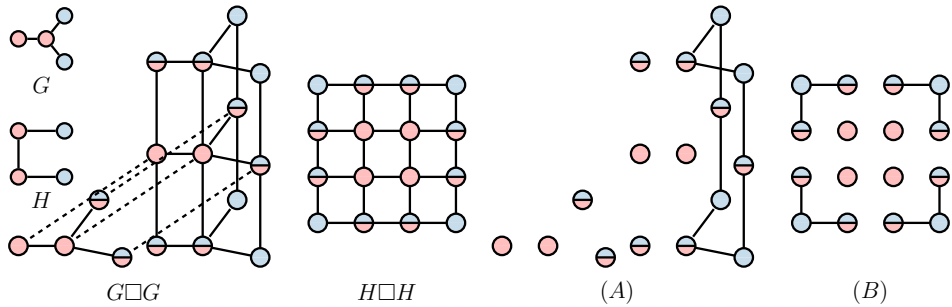
\begin{figure}[htb]
  \centering
  \begin{adjustbox}{width=0.9\textwidth}
  \begin{tikzpicture}
[rb/.pic={
\fill[fill=blue!25] (0,0) -- (0.2cm,0cm) arc [start angle=0, end angle=180, radius=0.2cm] -- cycle;
\fill[fill=red!25] (0,0) -- (-0.2cm,0cm) arc [start angle=180, end angle=360, radius=0.2cm] -- cycle;
\draw[color=black, line width=0.5mm] (0,0) circle (0.2cm);
\draw[color=black, line width=0.5mm] (-0.2,0) -- (0.2,0);
},
rr/.pic={
\fill[fill=red!25] (0,0) -- (0.2cm,0cm) arc [start angle=0, end angle=180, radius=0.2cm] -- cycle;
\fill[fill=red!25] (0,0) -- (-0.2cm,0cm) arc [start angle=180, end angle=360, radius=0.2cm] -- cycle;
\draw[color=black, line width=0.5mm] (0,0) circle (0.2cm);
},
bb/.pic={
\fill[fill=blue!25] (0,0) -- (0.2cm,0cm) arc [start angle=0, end angle=180, radius=0.2cm] -- cycle;
\fill[fill=blue!25] (0,0) -- (-0.2cm,0cm) arc [start angle=180, end angle=360, radius=0.2cm] -- cycle;
\draw[color=black, line width=0.5mm] (0,0) circle (0.2cm);
}]

    \node[draw, circle, fill=red!25, line width=0.5mm, minimum size=0.1cm] (A) at (0,3.0+0.5) { };
    \node[draw, circle, fill=blue!25, line width=0.5mm, minimum size=0.1cm] (B) at (1,3.5+0.5) { };
    \node[draw, circle, fill=red!25, line width=0.5mm, minimum size=0.1cm] (C) at (0.6,3.0+0.5) { };
    \node[draw, circle, fill=blue!25, line width=0.5mm, minimum size=0.1cm] (D) at (1,2.5+0.5) { };
    \node at (0.5, 2+0.5) {\Large $G$};
    \draw[line width=0.5mm] (A) -- (C);
    \draw[line width=0.5mm] (C) -- (B);
    \draw[line width=0.5mm] (C) -- (D);

    \node[draw, circle, fill=red!25, line width=0.5mm, minimum size=0.1cm] (A) at (0,1+0.5) { };
    \node[draw, circle, fill=blue!25, line width=0.5mm, minimum size=0.1cm] (B) at (1,1+0.5) { };
    \node[draw, circle, fill=red!25, line width=0.5mm, minimum size=0.1cm] (C) at (0,0+0.5) { };
    \node[draw, circle, fill=blue!25, line width=0.5mm, minimum size=0.1cm] (D) at (1,0+0.5) { };
    \node at (0.5, -0.5+0.5) {\Large $H$};
    \draw[line width=0.5mm] (A) -- (C);
    \draw[line width=0.5mm] (A) -- (B);
    \draw[line width=0.5mm] (C) -- (D);

    \pic [local bounding box=A1] at (3,1.0) {rr};
    \pic [local bounding box=A2] at (4,1.0) {rr};
    \pic [local bounding box=A3] at (5.25,0.75) {rb};
    \pic [local bounding box=A4] at (4.75,2.0) {rb};
    \draw[line width=0.5mm] (A1) -- (A2);
    \draw[line width=0.5mm] (A2) -- (A3);
    \draw[line width=0.5mm] (A2) -- (A4);

    \pic [local bounding box=B1] at (3-3.0,1.0-2) {rr};
    \pic [local bounding box=B2] at (4-3.0,1.0-2) {rr};
    \pic [local bounding box=B3] at (5.25-3.0,0.75-2) {rb};
    \pic [local bounding box=B4] at (4.75-3.0,2.0-2) {rb};
    \draw[line width=0.5mm] (B1) -- (B2);
    \draw[line width=0.5mm] (B2) -- (B3);
    \draw[line width=0.5mm] (B2) -- (B4);

    \pic [local bounding box=C1] at (3,1.0-2) {rb};
    \pic [local bounding box=C2] at (4,1.0-2) {rb};
    \pic [local bounding box=C3] at (5.25,0.75-2) {bb};
    \pic [local bounding box=C4] at (4.75,2.0-2) {bb};
    \draw[line width=0.5mm] (C1) -- (C2);
    \draw[line width=0.5mm] (C2) -- (C3);
    \draw[line width=0.5mm] (C2) -- (C4);

    \pic [local bounding box=D1] at (3,1.0+2) {rb};
    \pic [local bounding box=D2] at (4,1.0+2) {rb};
    \pic [local bounding box=D3] at (5.25,0.75+2) {bb};
    \pic [local bounding box=D4] at (4.75,2.0+2) {bb};
    \draw[line width=0.5mm] (D1) -- (D2);
    \draw[line width=0.5mm] (D2) -- (D3);
    \draw[line width=0.5mm] (D2) -- (D4);

    \draw[line width=0.5mm, dashed] (A1) -- (B1);
    \draw[line width=0.5mm, dashed] (A2) -- (B2);
    \draw[line width=0.5mm, dashed] (A3) -- (B3);
    \draw[line width=0.5mm, dashed] (A4) -- (B4);

    \draw[line width=0.5mm] (C1) -- (A1);
    \draw[line width=0.5mm] (C2) -- (A2);
    \draw[line width=0.5mm] (C3) -- (A3);
    \draw[line width=0.5mm] (C4) -- (A4);

    \draw[line width=0.5mm] (D1) -- (A1);
    \draw[line width=0.5mm] (D2) -- (A2);
    \draw[line width=0.5mm] (D3) -- (A3);
    \draw[line width=0.5mm] (D4) -- (A4);
    
    \node at (2.5, -2) {\Large $G \Box G$};

    \pic [local bounding box=A1] at (7-0.5,2.5) {bb};
    \pic [local bounding box=A2] at (8-0.5,2.5) {rb};
    \pic [local bounding box=A3] at (9-0.5,2.5) {rb};
    \pic [local bounding box=A4] at (10-0.5,2.5) {bb};

    \pic [local bounding box=B1] at (7-0.5,1.5) {rb};
    \pic [local bounding box=B2] at (8-0.5,1.5) {rr};
    \pic [local bounding box=B3] at (9-0.5,1.5) {rr};
    \pic [local bounding box=B4] at (10-0.5,1.5) {rb};

    \pic [local bounding box=C1] at (7-0.5,0.5) {rb};
    \pic [local bounding box=C2] at (8-0.5,0.5) {rr};
    \pic [local bounding box=C3] at (9-0.5,0.5) {rr};
    \pic [local bounding box=C4] at (10-0.5,0.5) {rb};

    \pic [local bounding box=D1] at (7-0.5,-0.5) {bb};
    \pic [local bounding box=D2] at (8-0.5,-0.5) {rb};
    \pic [local bounding box=D3] at (9-0.5,-0.5) {rb};
    \pic [local bounding box=D4] at (10-0.5,-0.5) {bb};
    \node at (8.5-0.5, -2) {\Large $H \Box H$};

    \draw[line width=0.5mm] (A1) -- (B1);
    \draw[line width=0.5mm] (B1) -- (C1);
    \draw[line width=0.5mm] (C1) -- (D1);

    \draw[line width=0.5mm] (A2) -- (B2);
    \draw[line width=0.5mm] (B2) -- (C2);
    \draw[line width=0.5mm] (C2) -- (D2);

    \draw[line width=0.5mm] (A3) -- (B3);
    \draw[line width=0.5mm] (B3) -- (C3);
    \draw[line width=0.5mm] (C3) -- (D3);

    \draw[line width=0.5mm] (A4) -- (B4);
    \draw[line width=0.5mm] (B4) -- (C4);
    \draw[line width=0.5mm] (C4) -- (D4);

    \draw[line width=0.5mm] (A1) -- (A2);
    \draw[line width=0.5mm] (A2) -- (A3);
    \draw[line width=0.5mm] (A3) -- (A4);

    \draw[line width=0.5mm] (B1) -- (B2);
    \draw[line width=0.5mm] (B2) -- (B3);
    \draw[line width=0.5mm] (B3) -- (B4);

    \draw[line width=0.5mm] (C1) -- (C2);
    \draw[line width=0.5mm] (C2) -- (C3);
    \draw[line width=0.5mm] (C3) -- (C4);

    \draw[line width=0.5mm] (D1) -- (D2);
    \draw[line width=0.5mm] (D2) -- (D3);
    \draw[line width=0.5mm] (D3) -- (D4);

    \pic [local bounding box=A1] at (3+10.5,1.0) {rr};
    \pic [local bounding box=A2] at (4+10.5,1.0) {rr};
    \pic [local bounding box=A3] at (5.25+10.5,0.75) {rb};
    \pic [local bounding box=A4] at (4.75+10.5,2.0) {rb};

    \pic [local bounding box=B1] at (3-3.0+10.5,1.0-2) {rr};
    \pic [local bounding box=B2] at (4-3.0+10.5,1.0-2) {rr};
    \pic [local bounding box=B3] at (5.25-3.0+10.5,0.75-2) {rb};
    \pic [local bounding box=B4] at (4.75-3.0+10.5,2.0-2) {rb};

    \pic [local bounding box=C1] at (3+10.5,1.0-2) {rb};
    \pic [local bounding box=C2] at (4+10.5,1.0-2) {rb};
    \pic [local bounding box=C3] at (5.25+10.5,0.75-2) {bb};
    \pic [local bounding box=C4] at (4.75+10.5,2.0-2) {bb};

    \draw[line width=0.5mm] (C2) -- (C3);
    \draw[line width=0.5mm] (C2) -- (C4);

    \pic [local bounding box=D1] at (3+10.5,1.0+2) {rb};
    \pic [local bounding box=D2] at (4+10.5,1.0+2) {rb};
    \pic [local bounding box=D3] at (5.25+10.5,0.75+2) {bb};
    \pic [local bounding box=D4] at (4.75+10.5,2.0+2) {bb};

    \draw[line width=0.5mm] (D2) -- (D3);
    \draw[line width=0.5mm] (D2) -- (D4);

    \draw[line width=0.5mm] (C3) -- (A3);
    \draw[line width=0.5mm] (C4) -- (A4);

    \draw[line width=0.5mm] (D3) -- (A3);
    \draw[line width=0.5mm] (D4) -- (A4);
    
    \node at (2.5+10.5, -2) {\Large $(A)$};

    \pic [local bounding box=A1] at (7+10,2.5) {bb};
    \pic [local bounding box=A2] at (8+10,2.5) {rb};
    \pic [local bounding box=A3] at (9+10,2.5) {rb};
    \pic [local bounding box=A4] at (10+10,2.5) {bb};

    \pic [local bounding box=B1] at (7+10,1.5) {rb};
    \pic [local bounding box=B2] at (8+10,1.5) {rr};
    \pic [local bounding box=B3] at (9+10,1.5) {rr};
    \pic [local bounding box=B4] at (10+10,1.5) {rb};

    \pic [local bounding box=C1] at (7+10,0.5) {rb};
    \pic [local bounding box=C2] at (8+10,0.5) {rr};
    \pic [local bounding box=C3] at (9+10,0.5) {rr};
    \pic [local bounding box=C4] at (10+10,0.5) {rb};

    \pic [local bounding box=D1] at (7+10,-0.5) {bb};
    \pic [local bounding box=D2] at (8+10,-0.5) {rb};
    \pic [local bounding box=D3] at (9+10,-0.5) {rb};
    \pic [local bounding box=D4] at (10+10,-0.5) {bb};
    \node at (8.5+10, -2) {\Large $(B)$};

    \draw[line width=0.5mm] (A1) -- (A2);
    \draw[line width=0.5mm] (A1) -- (B1);

    \draw[line width=0.5mm] (A3) -- (A4);
    \draw[line width=0.5mm] (A4) -- (B4);

    \draw[line width=0.5mm] (C1) -- (D1);
    \draw[line width=0.5mm] (D1) -- (D2);

    \draw[line width=0.5mm] (C4) -- (D4);
    \draw[line width=0.5mm] (D3) -- (D4);
\end{tikzpicture}
  \end{adjustbox}
  \caption{Graphs $G$ and $H$ that vertex-level and edge-level PH cannot tell apart, but they can be differentiated by running edge-level PH on $G \Box G$ and $H \Box H$. Here, connecting the edges between vertices of color $(\operatorname{blue}, \operatorname{blue})$ and $(\operatorname{red}, \operatorname{blue})$ creates a non-trivial cycle in $(A)$ and no cycles in $(B)$.}
  \label{fig:ph_prod}
\end{figure}

The EC, on the other hand, does not obtain any more expressivity over the product filtration.
\begin{proposition}\label{prop::EC_prod_not_exp}
    Let $G$ and $H$ be graphs with the same number of vertices. If EC cannot differentiate $G$ and $H$, the EC diagrams for $G \Box G$, $G \Box H$, and $H \Box H$ are also all the same.
\end{proposition}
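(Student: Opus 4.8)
The plan is to reduce the entire statement to the combinatorial characterization in Theorem~\ref{thm::EC_characterization}, used twice. First, by that theorem the hypothesis that $\operatorname{EC}$ cannot differentiate $G$ and $H$ is equivalent to the two conditions $\#V_G(c) = \#V_H(c)$ for every color $c$ and $\#E_G(a,b) = \#E_H(a,b)$ for every pair of colors $a,b$ (this also recovers the stated equality of total vertex counts). Write $v(c)$ and $e(a,b)$ for these common values. Applying Theorem~\ref{thm::EC_characterization} a second time, now to the products, it suffices to show that any two of $G \Box G$, $G \Box H$, $H \Box H$ agree on the number of product vertices of each color and on the number of product edges of each edge color. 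Thus the whole argument becomes a bookkeeping computation of these two counts for an arbitrary box product $A \Box B$ with $A,B \in \{G,H\}$, and the goal is to verify that both counts depend only on $v$ and $e$ and not on which factors were chosen.

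For the vertex counts, recall that $(g,h)$ carries the color $\{c(g),c(h)\}$ (unordered). Hence for an unordered pair $\{a,b\}$ the number of product vertices of that color is $\#V_A(a)\,\#V_B(b) + \#V_A(b)\,\#V_A(a)$ when $a \neq b$ and $\#V_A(a)\,\#V_B(a)$ when $a = b$. Since $\#V_A = \#V_B = v$ for every admissible choice of $A,B$, this quantity depends only on $v$ and $\{a,b\}$; consequently the three products share all vertex-color counts.

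For the edge counts I would use the definition of $\Box$ to split the edges of $A \Box B$ into the \emph{vertical} edges (fix a vertex $g \in V(A)$ and an edge of $B$) and the \emph{horizontal} edges (fix a vertex $h \in V(B)$ and an edge of $A$). A vertical edge coming from $g$ of color $p$ and an edge of $B$ with endpoint colors $q_1,q_2$ has product edge color $\{\{p,q_1\},\{p,q_2\}\}$, and there are exactly $\#V_A(p)\,\#E_B(q_1,q_2)$ of them; horizontal edges are counted symmetrically. Summing over all triples producing a fixed product edge color $\Gamma$ gives
\[
\#E_{A \Box B}(\Gamma) \;=\; \sum_{(p,q_1,q_2)\,\to\,\Gamma} \#V_A(p)\,\#E_B(q_1,q_2) \;+\; \sum_{(q,p_1,p_2)\,\to\,\Gamma} \#V_B(q)\,\#E_A(p_1,p_2),
\]
where the two index sets depend only on $\Gamma$ and the color set $X$. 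Because $\#V_A = \#V_B = v$ and $\#E_A = \#E_B = e$ for every choice $A,B \in \{G,H\}$, the right-hand side is the same fixed bilinear expression in $v$ and $e$ whether $(A,B)$ is $(G,G)$, $(G,H)$, or $(H,H)$. Hence all three products have identical edge-color counts, and combining this with the vertex-color equality and Theorem~\ref{thm::EC_characterization} completes the proof.

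The step I expect to require the most care is the edge bookkeeping: one must check that every product edge falls into exactly one of the vertical/horizontal families, handle the symmetric identification $(c_1,c_2) \sim (c_2,c_1)$ of colors consistently so that no product edge color is double counted, and confirm that the index sets ``$(p,q_1,q_2) \to \Gamma$'' genuinely depend only on $\Gamma$ and $X$ rather than on $A,B$. Once this is pinned down, the conclusion follows immediately from the bilinearity of the counts in the shared quantities $v$ and $e$.
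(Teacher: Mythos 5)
Your proof is correct and follows essentially the same route as the paper's: both apply Theorem~\ref{thm::EC_characterization} twice and reduce everything to checking that the vertex-color and edge-color counts of $A \Box B$ are bilinear expressions in the factors' counts $\# V_\bullet(\cdot)$ and $\# E_\bullet(\cdot,\cdot)$, which coincide for $G$ and $H$ by hypothesis, hence agree across the three products. (One minor typo: in your vertex-count formula the second term should read $\# V_A(b)\,\# V_B(a)$ rather than $\# V_A(b)\,\# V_A(a)$, but this is harmless since the argument only uses that all these counts equal the common values.)
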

The proof of Proposition~\ref{prop::EC_prod_not_exp} comes from a direct application of Theorem~\ref{thm::EC_characterization}, as we will show in the Appendix that $G \Box G, H \Box H,$ and $G \Box H$ all have the same combinatorial data in the statement.

In light of Proposition~\ref{prop:PH_more_info}, a natural question to ask is - \emph{can the PH of the product contain \textbf{strictly more} information than the PH of the two components?} The answer is guaranteed to be yes if we adjoin a ``virtual node" in the following sense.

\begin{definition}
    Let $G = (V, E, c, X)$ be a graph, we define $G_+$ as $(V \sqcup \{*\}, E, c', X \sqcup \{\text{virtual}\})$ where $*$ is a disjoint basepoint and we define $c'(*) = \text{virtual}$ and $c'(x) = c(x)$ for all $x \in X$. For any color-based filtration of $G_+$, we also require the vertex $*$ to be born at time $-\infty$ (ie. it appears before any other vertices).
\end{definition}

With the example given in Proposition~\ref{prop:PH_more_info}, we have the following corollary.
\begin{corollary}\label{cor::general_PH_more_expressive}
Let $G$ and $H$ be graphs, then the PH of $G_+ \Box G_+, G_+ \Box H_+, H_+ \Box H_+$ contain strictly more information then PH of $G$ and $H$.
\end{corollary}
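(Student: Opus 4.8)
The plan is to reduce everything to a single structural decomposition of the virtual box product, together with the (easy) behaviour of persistent homology under disjoint unions.

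First I would prove the key lemma: because the virtual vertex $*$ is an \emph{isolated} vertex in $G_+$ and $H_+$, the product splits as a disjoint union of colored graphs
$$G_+ \Box H_+ \;\cong\; H' \;\sqcup\; G' \;\sqcup\; (G \Box H),$$
where $G'$ (resp.\ $H'$) is a recolored copy of $G$ (resp.\ of $H$) under $c \mapsto (\mathrm{virtual}, c)$, the $H'$ block additionally carrying the isolated basepoint $(*_G,*_H)$ of color $(\mathrm{virtual},\mathrm{virtual})$. Concretely, I would partition $V(G_+\Box H_+)$ by which coordinates are virtual and check that no box-product edge can join two different blocks: leaving the slice $\{*_G\}\times V(H_+)$ would require $*_G \sim g$ in $G_+$, and leaving $V(G)\times\{*_H\}$ would require $*_H \sim h$ in $H_+$, both impossible since $*$ is isolated. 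Under the convention $(c_1,c_2)=(c_2,c_1)$ of Definition~\ref{def::graph_product}, the virtual-involving colors and the purely-$X$ colors form disjoint sets, so the three blocks are also separated by color.

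Next I would record that, for any color-based filtration, $\operatorname{PH}$ sends disjoint unions to multiset unions: since there are no edges between blocks, no $0$-dimensional class ever merges across blocks and no $1$-dimensional class is created across blocks, so the elder-rule pairing inside each block is computed exactly as if the other blocks were absent, and the essential (infinite) classes are simply pooled. Hence $\operatorname{PH}(G_+\Box H_+, f) = \operatorname{PH}(H', f)\sqcup\operatorname{PH}(G', f)\sqcup\operatorname{PH}(G\Box H, f)$, and likewise for $G_+\Box G_+$ and $H_+\Box H_+$. Two consequences follow. For the \emph{no information is lost} direction, restricting a product filtration so that every virtual-involving color is born before every purely-$X$ color isolates the block $H'\sqcup G'$ at an intermediate threshold; applied to $G_+\Box G_+$ this yields two recolored copies of $\operatorname{PH}(G, f')$, and as $f'$ ranges over all component filtrations we recover all of $\operatorname{PH}(G)$ (symmetrically $\operatorname{PH}(H)$ from $H_+\Box H_+$), so the product diagrams determine the component diagrams. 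For \emph{strictness} I would feed in the pair of Proposition~\ref{prop:PH_more_info}: there $\operatorname{PH}(G)=\operatorname{PH}(H)$, so the recolored blocks satisfy $\operatorname{PH}(G',f)=\operatorname{PH}(H',f)$ for every product filtration, whence $\operatorname{PH}(G_+\Box G_+,f)$ and $\operatorname{PH}(H_+\Box H_+,f)$ agree if and only if the $G\Box G$ and $H\Box H$ blocks agree; extending the distinguishing filtration of Proposition~\ref{prop:PH_more_info} (arbitrarily on virtual colors) then separates the virtual products even though the components are indistinguishable.

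The main obstacle I anticipate is not the decomposition itself but the careful verification of the two auxiliary facts that make it usable: (i) that $\operatorname{PH}$ genuinely factors through disjoint unions as a multiset union, including the correct handling of the born-at-$-\infty$ basepoint and of ties under the elder rule, which requires checking the persistence-pair construction of Definition~\ref{def:PH} block by block; and (ii) the bookkeeping that translates a filtration on product colors into a filtration on component colors via $f'(c)=f(\mathrm{virtual},c)$, so that the recolored blocks $G',H'$ inherit exactly $\operatorname{PH}(G)$ and $\operatorname{PH}(H)$. Once these are in place, both the recovery and the strict separation are immediate from the decomposition.
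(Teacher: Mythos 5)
Your proposal is correct and takes essentially the same route as the paper's own proof: the paper likewise rests on the decomposition $A_+ \Box B_+ = (A \sqcup B \sqcup (A \Box B))_+$, recovers the component diagrams by choosing filtrations that process the virtual-tagged colors before the purely-$X$ colors (yielding two copies of $\operatorname{PH}(G)$ inside $\operatorname{PH}(G_+ \Box G_+)$), and obtains strictness from Proposition~\ref{prop:PH_more_info}. Your explicit treatment of disjoint-union additivity of $\operatorname{PH}$ and of multiset cancellation in the strictness step only spells out what the paper leaves implicit.
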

In other words, taking the PH of ``virtual" graph products contain strictly more information than comparing the PH of each component. We remark that the EC of ``virtual" graph product still would not add more information. This is because if graphs $A$ and $B$ have the same EC diagram for all possible filtrations, so does $A_+$ and $B_+$, allowing us to apply Proposition~\ref{prop::EC_prod_not_exp}.

\section{Product Filtrations}\label{sec::computational}

In this section, we focus on a strict subset of possible filtrations on the graph product. Given graphs $G$ and $H$, we investigate what information about the graph product $G \square H$ can be recovered from filtrations on $G$ and $H$ alone. Formally, we want to consider filtrations of the following form.

\begin{definition}\label{def::product_fil}
Let $G$ and $H$ be graphs with filtration functions $f_G$ and $f_H$ respectively. We can define a \textbf{product filtration} of $G \square H$ with respect to the two filtration functions as the following - let $t \in \Rbb$, then the subgraph $(G \square H)_{t}$ is exactly $G_t^{f_G} \square H_t^{f_H}$.  
\end{definition}

\subsection{Characterization of Vertex-level and Edge-level Product Filtrations}
The two notable forms of filtrations on $G$ and $H$ we want to consider are vertex-level and edge-level filtrations in the sense of Definition~\ref{def:coloring_fil}. Here, we give a characterization of their respective product filtrations in terms a convenient coloring on $G \square H$.

\begin{proposition}\label{prop::vertex_product_coloring}
    Suppose $f_G$ and $f_H$ are injective vertex color functions. The product filtration with respect to $f_G$ and $f_H$ is equivalent to the  filtration on $G \square H$ given by the vertex coloring function $F$, where $F((g, h)) = \max(f_G(c(g)), f_H(c(h))), \text{ for all } (g, h) \in V(G \square H)$.
    In particular, this implies that the persistence diagrams of this product filtration are well-defined. We write $F = f_G \Box f_H$ when the context is clear.
\end{proposition}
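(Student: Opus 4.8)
The plan is to verify that the product filtration and the $F$-filtration produce identical filtered subgraphs at every threshold $t$; once that is established, equivalence is immediate, and well-definedness of the diagrams follows because a vertex-level color-based filtration carries the canonical persistence pairing recalled in Section~\ref{sec::background}.

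The key structural observation I would exploit is that both objects are \emph{induced} subgraphs of $G \square H$, so it suffices to match their vertex sets. On one hand, by the edge-birth convention of Definition~\ref{def:coloring_fil} (an edge enters at the maximum of its endpoints' values), the $F$-sublevel graph $(G \square H)_t^F$ is exactly the subgraph of $G \square H$ induced on $\{(g,h) : F((g,h)) \le t\}$. On the other hand, the box product of two induced subgraphs is itself induced: $G_t^{f_G} \square H_t^{f_H}$ equals the subgraph of $G \square H$ induced on $V(G_t^{f_G}) \times V(H_t^{f_H})$. I would verify this last equality by splitting an edge of $G \square H$ into the two types of Definition~\ref{def::graph_product} and using that $G_t^{f_G}$ and $H_t^{f_H}$ are induced, so that adjacency inside a factor is just adjacency in $G$ or $H$ restricted to the surviving vertices.

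It then remains only to match vertex sets. Since $g \in V(G_t^{f_G})$ iff $f_G(c(g)) \le t$ and $h \in V(H_t^{f_H})$ iff $f_H(c(h)) \le t$, we obtain $V(G_t^{f_G}) \times V(H_t^{f_H}) = \{(g,h) : \max(f_G(c(g)), f_H(c(h))) \le t\} = \{(g,h) : F((g,h)) \le t\}$. Hence both induced subgraphs share the same vertex set and therefore agree, for every $t \in \Rbb$. The two filtrations coincide, realizing the product filtration as the vertex-level color-based filtration on $G \square H$ induced by $F$, whose persistence diagrams are consequently well-defined.

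The main obstacle is the small but easy-to-botch bookkeeping in the ``box product of induced subgraphs is induced'' step, where one must carefully check both edge types against the induced-adjacency condition; everything else is a direct unwinding of definitions. I would also note that injectivity of $f_G$ and $f_H$ plays no role in the equality of filtered subgraphs above — it is a standing genericity convenience for this section — so the conclusion already holds at the level of the filtered complexes without it.
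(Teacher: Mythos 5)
Your proof is correct and takes essentially the same route as the paper's: both arguments reduce the equivalence to checking that $V(G_t^{f_G} \square H_t^{f_H}) = \{(g,h) : \max(f_G(c(g)), f_H(c(h))) \le t\} = V((G \square H)_t^F)$ and then matching edge sets, where your ``box product of induced subgraphs is induced'' lemma is exactly the paper's inline case analysis on the two edge types of Definition~\ref{def::graph_product} combined with the max edge-birth convention of Definition~\ref{def:coloring_fil}, just stated as a standalone observation. Your closing remark that injectivity is not needed for the equality of filtered subgraphs is also consistent with the paper's proof, which never invokes it.
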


\begin{proposition}\label{prop::edge_product_graph}
    Suppose $f_G > 0$ and $f_H > 0$ are injective edge color functions. The product filtration with respect to $f_G$ and $f_H$ is equivalent to the filtration on $G \square H$ given by the function $F$ where where $F$ sends all vertices to $0$ and for all $e = ((g_1, h_1), (g_2, h_2)) \in E(G \square H)$, $F(e) = f_H(c(h_1), c(h_2))$ if $g_1 = g_2$ and $F(e) = f_G(c(g_1), c(g_2))$ if $h_1 = h_2$. Note that $g_1 = g_2$ and $h_1 = h_2$ cannot both be satisfied on a simple graph. We write $F = f_G \Box f_H$ when the context is clear.   
\end{proposition}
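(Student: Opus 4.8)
The plan is to show that the two filtrations agree as nested sequences of subgraphs: for every threshold $t \in \Rbb$, the subgraph $(G \square H)_t$ coming from the product filtration (Definition~\ref{def::product_fil}) coincides with $F^{-1}((-\infty, t])$, the sublevel set of the explicitly-defined function $F$. Since a persistence diagram depends only on this nested sequence, establishing equality level-by-level immediately yields the claimed equivalence and, exactly as in Proposition~\ref{prop::vertex_product_coloring}, shows that the product persistence diagrams are well-defined.

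First I would handle the vertices. Under the edge-level filtrations $f_G, f_H$ every vertex of $G$ and of $H$ is born at time $0$, so for $t \geq 0$ we have $V(G_t^{f_G}) = V(G)$ and $V(H_t^{f_H}) = V(H)$; consequently the box product $G_t^{f_G} \square H_t^{f_H}$ contains the full vertex set $V(G) \times V(H)$, matching $F^{-1}((-\infty,t])$ on vertices (all of which have $F$-value $0$), while for $t < 0$ both sides are empty. Because $f_G, f_H > 0$, every edge of the product receives a strictly positive $F$-value, so each edge is born strictly after its two endpoints; this guarantees that $F^{-1}((-\infty,t])$ is genuinely a subgraph and that the sequence is a valid filtration.

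Next I would match the edges by a case analysis driven by the box-product definition. An edge $e = ((g_1,h_1),(g_2,h_2))$ of $G \square H$ satisfies exactly one of $g_1 = g_2$ (a ``vertical'' edge, so $h_1 \sim h_2$ in $H$) or $h_1 = h_2$ (a ``horizontal'' edge, so $g_1 \sim g_2$ in $G$); the two are mutually exclusive on a simple graph, as the proposition notes, which is precisely what makes $F(e)$ unambiguous. In the vertical case, $e$ lies in $G_t^{f_G} \square H_t^{f_H}$ iff the $H$-edge $(h_1,h_2)$ lies in $H_t^{f_H}$, i.e. iff $f_H(c(h_1),c(h_2)) \leq t$, which is exactly the condition $F(e) \leq t$; the horizontal case is symmetric with $f_G$. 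Hence the two subgraphs carry the same edge set at every $t$.

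Combining the vertex and edge comparisons gives $(G \square H)_t = F^{-1}((-\infty,t])$ for all $t$, proving the equivalence. I expect the only real care to lie in the edge bookkeeping, specifically confirming the mutual exclusivity of the vertical and horizontal cases so that $F$ is well-defined, and invoking the positivity $f_G, f_H > 0$ so that edges never precede their endpoints in the filtration; the remainder is a direct unwinding of Definitions~\ref{def:coloring_fil},~\ref{def::graph_product}, and~\ref{def::product_fil}.
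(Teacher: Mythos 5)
Your proposal is correct and follows essentially the same route as the paper: the paper's proof likewise observes that for $t \geq 0$ both filtrations contain the full vertex set (since all vertices have filtration value $0$), and then matches edge sets via the same vertical/horizontal case analysis it carried out for Proposition~\ref{prop::vertex_product_coloring}. Your additional remarks on the mutual exclusivity of the two edge cases and on positivity of $f_G, f_H$ are correct, useful bookkeeping that the paper leaves implicit.
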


\subsection{Computational Methods for Product Filtrations}

In this section, we discuss algorithmic procedures to compute the PH of the vertex-level and edge-level persistence diagrams. The computational methods we obtain for 1-dimensional persistence diagrams is more general, so we will discuss them separately. Note that although we used color-based filtrations in the expressivity analysis above, the algorithms devised here applies to any pair of filtrations. For a graph $G$, we let $m_G$ and $n_G$ be the number of vertices and edges respectively.

Following Proposition~\ref{prop::vertex_product_coloring}, we can in fact obtain an algorithmic procedure to keep track of how the $0$-th dimensional persistence diagram of $G \square H$ changes under the product of vertex filtrations.
\begin{resbox}
\begin{theorem}\label{thm::vertex_PH_prod}
Let $(G, f_G), (H, f_H)$ be two vertex-based filtrations, then the 0-dimensional PH diagram for $(G \Box H, f_G \Box f_H)$ can be computed using Algorithm~\ref{alg:prod_vertex_level}. This can be equivalently be computed by Algorithm~\ref{alg:prod_vertex_level_symmetric} in the Appendix. The runtime is $O(n_G n_H + m_G \log m_G + m_H \log m_H + n_G + n_H)$, assuming the coloring set has constant size.
\end{theorem}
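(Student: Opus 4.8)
The plan is to reduce the $0$-dimensional persistence of the product filtration to that of the two factor filtrations, exploiting the fact that connected components of a box product split as products. By Proposition~\ref{prop::vertex_product_coloring} the product filtration is the vertex filtration on $G \Box H$ with $F(g,h)=\max(f_G(c(g)),f_H(c(h)))$, so for every threshold $s$ the sublevel graph is exactly $(G\Box H)_s=G_s\Box H_s$. The first step is the structural observation that the connected components of $G_s\Box H_s$ are precisely the products $C\times D$ of a component $C$ of $G_s$ with a component $D$ of $H_s$; consequently $\beta_0((G\Box H)_s)=\beta_0(G_s)\,\beta_0(H_s)$, and, for $s\le t$, the inclusion-induced map on $H_0$ is the tensor product of the maps induced by $G_s\hookrightarrow G_t$ and $H_s\hookrightarrow H_t$ (Künneth in degree zero). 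Hence the $0$-th persistent Betti numbers multiply: $\beta_0^{s,t}(G\Box H)=\beta_0^{s,t}(G)\cdot\beta_0^{s,t}(H)$.

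Next I would turn this product of persistent Betti numbers into an explicit description of the product diagram. Writing each factor's persistent Betti number as a sum of Iverson indicators over its bars, $\beta_0^{s,t}(G)=\sum_{(b,d)\in D_G}[b\le s]\,[d>t]$ and likewise for $H$, the product expands into a double sum in which the pair $((b^G,d^G),(b^H,d^H))$ contributes exactly the indicator of the single bar $(\max(b^G,b^H),\min(d^G,d^H))$. Since the persistence diagram is recovered from the persistent Betti numbers by inclusion--exclusion, and this recovery is linear, the $0$-dimensional diagram of the product is the multiset $\{(\max(b^G,b^H),\min(d^G,d^H))\}$ taken over all pairs of a $G$-bar and an $H$-bar, discarding the empty intervals (those with $\max(b^G,b^H)\ge\min(d^G,d^H)$, which in particular absorb every pair involving a trivial hole). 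This max/min pairing is the combinatorial content that Algorithm~\ref{alg:prod_vertex_level} implements.

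The algorithm and its runtime then follow. First compute $D_G$ and $D_H$, the $0$-dimensional vertex persistence of each factor, by the standard elder-rule union--find after sorting vertices by filtration value; this costs $O(m_G\log m_G+n_G)$ and $O(m_H\log m_H+n_H)$. Then assemble the product diagram by the max/min pairing. Only finite-length bars of the factors produce nontrivial product bars, and the number of such merge-driven bars of each factor is controlled by its edge count, so the pairing step costs $O(n_G n_H)$; the essential classes contribute only $O(1)$ distinct birth values under the constant-color-set hypothesis and are accounted for within this bound. Summing the terms gives the stated runtime, and the symmetric variant Algorithm~\ref{alg:prod_vertex_level_symmetric} computes the same output.

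I expect the main obstacle to be the rigorous justification of the pairing formula in the presence of coincident filtration values, rather than the asymptotic bookkeeping. Many vertices share a color, so all bars are born and die at a handful of repeated values, and $f_G$ and $f_H$ may even share values; I would need to check that the Künneth identity pins down the product diagram with the correct multiplicities in this degenerate setting, that the elder rule on the product is consistent with independently running it on each factor, and that the union--find computations of $D_G$ and $D_H$ correctly report trivial, almost, and real holes. Verifying that Algorithm~\ref{alg:prod_vertex_level} faithfully realizes the max/min pairing and meets the complexity bound is then a routine but careful second step.
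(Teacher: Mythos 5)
Your proposal is correct in its mathematical core but follows a genuinely different route from the paper. You derive a closed-form description of the product diagram: components of $G_s \Box H_s$ are products of components, so the degree-zero persistence module of $(G\Box H, f_G\Box f_H)$ is the pointwise tensor product of the factor modules; ranks of tensor products of linear maps multiply, so $\beta_0^{s,t}(G\Box H)=\beta_0^{s,t}(G)\,\beta_0^{s,t}(H)$; and the indicator expansion then forces the off-diagonal part of the product diagram to be the multiset $\bigl\{(\max(b^G,b^H),\min(d^G,d^H))\bigr\}$ over all pairs of factor bars. This ``K\"unneth in degree zero'' argument is sound, and your worry about coincident filtration values is unfounded: in this finite setting the persistent Betti numbers determine the off-diagonal diagram regardless of degeneracies, and the diagram is likewise independent of elder-rule tie-breaking (only the representatives depend on it). The paper argues quite differently: it proves correctness of Algorithm~\ref{alg:prod_vertex_level} by induction over filtration values (Theorem~\ref{thm::vertex_PH_prod_math} in the appendix), refining each step into $G_{a_i}\Box H_{a_i}\subset G_{a_{i+1}}\Box H_{a_i}\subset G_{a_{i+1}}\Box H_{a_{i+1}}$, counting births and deaths in each half-step via $\beta_0(A\Box B)=\beta_0(A)\,\beta_0(B)$, and retroactively demoting to trivial pairs those classes born and killed at the same filtration value. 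Your route buys a cleaner, implementation-independent characterization that makes the $O(n_G n_H)$ pairing cost transparent; the paper's route buys exactly what your proposal defers as ``routine'': a line-by-line verification that the pseudocode (with its vertex representatives and trivial-death bookkeeping) produces this diagram, and a proof that Algorithm~\ref{alg:prod_vertex_level_symmetric} gives the same output. Both of those items are part of the theorem's claim and occupy most of the paper's proof, so your write-up would need to carry them out rather than assert them.

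One concrete error to fix: your runtime paragraph asserts that ``only finite-length bars of the factors produce nontrivial product bars.'' This is false. Two essential bars $(b,\infty)$ and $(b',\infty)$ pair to the essential bar $(\max(b,b'),\infty)$ --- these are exactly the $\beta_0(G)\,\beta_0(H)$ real holes of the product --- and an essential bar paired with a finite bar $(b',d')$ gives a nontrivial bar whenever $\max(b,b')<d'$. Your headline formula (which quantifies over all pairs) and your runtime bound (there are exactly $n_G n_H$ bar pairs, since every vertex of a factor carries one bar, trivial or not) are unaffected, but an implementation that took that sentence literally would lose every real hole of the product.
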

\end{resbox}

We remark that the runtime complexity in Theorem~\ref{thm::vertex_PH_prod} is more efficient than the naive method. A naive calculation for vertex-level product filtrations would be to run PH on the entire graph product directly. This incurs a runtime of $O(n_G n_H + (n_G m_H + n_H m_G) log(n_G m_H + n_H m_G))$. This is because, by the discussions before 4.1 of \citep{horn2022topological}, the PH of a connected graph with $r$ edges can be calculated in $O(r \log(r))$ time. If the graph is not connected, the runtime is $O(s + r \log(r))$, where $s$ is the number of vertices, to account for the worst-case scenario where the graph is totally disconnected. We demonstrate the runtime benefits of Theorem~\ref{thm::vertex_PH_prod} empirically later in Figure~\ref{fig:runtime-results}.

We now describe a concrete example of computation using Algorithm~\ref{alg:prod_vertex_level} outlined in Theorem~\ref{thm::vertex_PH_prod}.
\begin{example}\label{exp::vertex_PH_algo_example}
Consider the graphs $G = H$ in Figure~\ref{fig:example_vertex_prod_fil}, and let $a_0 = f_v(\operatorname{blue}) < a_1 = f_v(\operatorname{red})$. Here we take $f_G = f_H = f_v$. Observe that both $G$ and $H$'s individual filtrations proceed by spawning two blue vertices at $a_0$ and completing to the entire graph at $a_1$. We would like to compute the 0-dimensional PH of the product of vertex filtrations. Before the for-loop, we initialize 16 birth-death pairs with 4 copies of $(a_0, -)$ and 12 copies of $(a_1, -)$ (we omit their vertex representatives for simplicity).

We start with $t = a_0$. Note that since we removed $-\infty$ before the loop, the index 1 elements in the precomputed lists correspond to the relevant values at time $a_0$. Since there are no deaths in either $G$ or $H$ at $a_0$, we skip the two internal for-loops. $\operatorname{bett}_{H}[0]$ is empty (since this is recorded at $-\infty$), and $\operatorname{betti}_G[1] \times \operatorname{births}_H[1]$ represent the 4 blue vertices and there are no non-trivial deaths. We then mark all tuples that are not the 4 blue vertices with birth time $a_0$ to die at $a_0$, but there are no such tuples. Thus, we conclude that nothing happens at $t = a_0$.

At $t = a_1$, 2 red vertices and 1 blue vertex die for both $G$ and $H$. Only the blue vertex death is non-trivial, so in the first inner for-loop we mark 2 of the blue vertices in $G \Box H$ to die as $(a_0, a_1)$. In the second inner for-loop, one of the blue vertex is marked already, and we mark another tuple to $(a_0, a_1)$. This produces 3 copies of $(a_0, a_1)$. There are no non-trivial births at this stage, so we mark all vertices born at $a_1$ to die at $a_1$, which makes 12 copies of $(a_1, a_1)$.

At $t = +\infty$, the final blue vertex dies for both $G$ and $H$, so we mark the final vertex in $G \Box H$ to die as $(a_0, \infty)$. There are no non-trivial deaths, and also no remaining tuples to fill out, so we stop.
\end{example}
\begin{algorithm}
\caption{Computing the 0-dim PH Diagrams in Product of Vertex-Level Filtrations}\label{alg:prod_vertex_level}
\hspace*{\algorithmicindent}\textbf{Input:} The vertex-level filtrations $(G, f_G)$ and $(H, f_H)$\\
\hspace*{\algorithmicindent}\textbf{Output:} The $0$-dim PH diagram for $(G \Box H, f_G \Box f_H)$
\begin{algorithmic}[1]
\State impl$_G$, impl$_H$ $\gets$ 0-dim PH diagrams for $(G, f_G)$ and $(H, f_H)$.
\State $\operatorname{filtration\_steps} \gets $ $\{-\infty\} \cup$ filtration steps for $f_G$ and $f_H$ $\cup \{+\infty\}$, and ordered.
\State births$_G$, births$_H$ $\gets$ list of tuples born non-trivially in $G$ (resp. $H$) at each time in  $\operatorname{filtration\_steps}$.
\State betti$_G$, betti$_H$ $\gets$ list of vertices alive representing components of the subgraph of $G$ (resp. $H$) at each time in  $\operatorname{filtration\_steps}$.
\State deaths$_G$, deaths$_H$ $\gets$ list of tuples that die non-trivially in $G$ (resp. $H$) at each time in  $\operatorname{filtration\_steps}$.
\State Remove $\{-\infty\}$ from $\operatorname{filtration\_steps}$.
\State \textbf{output} $\gets$ [$(i, j, \max(f_G(i), f_H(i)), \operatorname{None})$ \textbf{for} $(i, j)$ in $V(G) \times V(H)$].
\For{i in [0, len($\operatorname{filtration\_steps}$)]}
    \State $a_i \gets \operatorname{filtration\_steps}[i]$.
    \For{v in deaths$_H[i+1]$} \Comment{Marking Non-Trivial Deaths}
    \State Mark all tuples $p$ in \textbf{output} with $p[1] = v, p[2] < a_i, p[3] = \operatorname{None}$ with $p[3] \gets a_i$. 
    \EndFor
    \For{w in deaths$_G[i+1]$} \Comment{Marking Non-Trivial Deaths}
    \State Mark all tuples $p$ in \textbf{output} with $p[0] = w, p[2] < a_i, p[3] = \operatorname{None}$ with $p[3] \gets a_i$. 
    \EndFor
    \State $\operatorname{nt\_births} \gets \operatorname{births}_G[i+1] \times \operatorname{betti}_H[i] + \operatorname{betti}_G[i+1] \times \operatorname{births}_H[i+1]$.
    \State $\operatorname{nt\_deaths} \gets \operatorname{births}_G[i+1] \times \operatorname{deaths}_H[i+1]$.
    \State $\operatorname{nt\_births} \gets \operatorname{nt\_births} - \operatorname{nt\_deaths}$.
    \State Mark all tuples $p$ in \textbf{output} with $(p[0], p[1]) \notin \operatorname{nt\_births}, p[2] = a_i, p[3] = \operatorname{None}$ with $p[3] \gets a_i$. \Comment{Marking Trivial Deaths}
\EndFor
    \State \Return [(p[2], p[3]) \textbf{for} p in \textbf{output}]
\end{algorithmic}
\end{algorithm}

\begin{figure}[htb]
  \centering
  \begin{adjustbox}{width=0.6\textwidth}
  \begin{tikzpicture}
[rb/.pic={
\fill[fill=blue!25] (0,0) -- (0.2cm,0cm) arc [start angle=0, end angle=180, radius=0.2cm] -- cycle;
\fill[fill=red!25] (0,0) -- (-0.2cm,0cm) arc [start angle=180, end angle=360, radius=0.2cm] -- cycle;
\draw[color=black, line width=0.5mm] (0,0) circle (0.2cm);
\draw[color=black, line width=0.5mm] (-0.2,0) -- (0.2,0);
},
rr/.pic={
\fill[fill=red!25] (0,0) -- (0.2cm,0cm) arc [start angle=0, end angle=180, radius=0.2cm] -- cycle;
\fill[fill=red!25] (0,0) -- (-0.2cm,0cm) arc [start angle=180, end angle=360, radius=0.2cm] -- cycle;
\draw[color=black, line width=0.5mm] (0,0) circle (0.2cm);
},
bb/.pic={
\fill[fill=blue!25] (0,0) -- (0.2cm,0cm) arc [start angle=0, end angle=180, radius=0.2cm] -- cycle;
\fill[fill=blue!25] (0,0) -- (-0.2cm,0cm) arc [start angle=180, end angle=360, radius=0.2cm] -- cycle;
\draw[color=black, line width=0.5mm] (0,0) circle (0.2cm);
}]

    \node[draw, circle, fill=red!25, line width=0.5mm, minimum size=0.1cm] (red) at (0,3+0.5) { };  
    \node at (0.5, 3+0.5) {\Large \textbf{>}};
    \node[draw, circle, fill=blue!25, line width=0.5mm, minimum size=0.1cm] (blue) at (1,3+0.5) { };  

    \node at (3.5, 3+0.5) {\large $f_v(\operatorname{red}) > f_v(\operatorname{blue})$};
    
    \node[draw, circle, fill=red!25, line width=0.5mm, minimum size=0.1cm] (A) at (0,1+0.5) { };
    \node[draw, circle, fill=blue!25, line width=0.5mm, minimum size=0.1cm] (B) at (0,2+0.5) { };
    \node[draw, circle, fill=red!25, line width=0.5mm, minimum size=0.1cm] (C) at (0,0+0.5) { };
    \node[draw, circle, fill=blue!25, line width=0.5mm, minimum size=0.1cm] (D) at (0,-1+0.5) { };
    \node at (0, -1) {\normalsize $G=H$};
    \draw[line width=0.5mm] (A) -- (C);
    \draw[line width=0.5mm] (A) -- (B);
    \draw[line width=0.5mm] (C) -- (D);

    \pic [local bounding box=A1] at (2,1.5) {bb};
    \pic [local bounding box=A4] at (3,1.5) {bb};
    \pic [local bounding box=D1] at (2,0.5) {bb};
    \pic [local bounding box=D4] at (3,0.5) {bb};
    \node at (2.6, -1) {\normalsize $t = f_v(\operatorname{blue})$};

    \pic [local bounding box=A1] at (7-2.5+1,2.5) {bb};
    \pic [local bounding box=A4] at (10-2.5-1,2.5) {bb};

    \pic [local bounding box=B1] at (7-2.5+1,1.5) {rb};
    \pic [local bounding box=B4] at (10-2.5-1,1.5) {rb};

    \pic [local bounding box=C1] at (7-2.5+1,0.5) {rb};
    \pic [local bounding box=C4] at (10-2.5-1,0.5) {rb};

    \pic [local bounding box=D1] at (7-2.5+1,-0.5) {bb};
    \pic [local bounding box=D4] at (10-2.5-1,-0.5) {bb};
    \node at (8.5-2.5, -1) {\normalsize $t = f_v(red)$};
    \node at (8.5-2.5, -1.5) {\normalsize Adding Changes in $G$};

    \draw[line width=0.5mm] (A1) -- (B1);
    \draw[line width=0.5mm] (B1) -- (C1);
    \draw[line width=0.5mm] (C1) -- (D1);

    \draw[line width=0.5mm] (A4) -- (B4);
    \draw[line width=0.5mm] (B4) -- (C4);
    \draw[line width=0.5mm] (C4) -- (D4);

    \pic [local bounding box=A1] at (7+1.5,2.5) {bb};
    \pic [local bounding box=A2] at (8+1.5,2.5) {rb};
    \pic [local bounding box=A3] at (9+1.5,2.5) {rb};
    \pic [local bounding box=A4] at (10+1.5,2.5) {bb};

    \pic [local bounding box=B1] at (7+1.5,1.5) {rb};
    \pic [local bounding box=B2] at (8+1.5,1.5) {rr};
    \pic [local bounding box=B3] at (9+1.5,1.5) {rr};
    \pic [local bounding box=B4] at (10+1.5,1.5) {rb};

    \pic [local bounding box=C1] at (7+1.5,0.5) {rb};
    \pic [local bounding box=C2] at (8+1.5,0.5) {rr};
    \pic [local bounding box=C3] at (9+1.5,0.5) {rr};
    \pic [local bounding box=C4] at (10+1.5,0.5) {rb};

    \pic [local bounding box=D1] at (7+1.5,-0.5) {bb};
    \pic [local bounding box=D2] at (8+1.5,-0.5) {rb};
    \pic [local bounding box=D3] at (9+1.5,-0.5) {rb};
    \pic [local bounding box=D4] at (10+1.5,-0.5) {bb};
    \node at (8.5+1.5, -1) {\normalsize $t = f_v(red)$};
    \node at (8.5+1.5, -1.5) {\normalsize Adding Changes in $H$};
    
    \draw[line width=0.5mm] (A1) -- (B1);
    \draw[line width=0.5mm] (B1) -- (C1);
    \draw[line width=0.5mm] (C1) -- (D1);

    \draw[line width=0.5mm] (A2) -- (B2);
    \draw[line width=0.5mm] (B2) -- (C2);
    \draw[line width=0.5mm] (C2) -- (D2);

    \draw[line width=0.5mm] (A3) -- (B3);
    \draw[line width=0.5mm] (B3) -- (C3);
    \draw[line width=0.5mm] (C3) -- (D3);

    \draw[line width=0.5mm] (A4) -- (B4);
    \draw[line width=0.5mm] (B4) -- (C4);
    \draw[line width=0.5mm] (C4) -- (D4);

    \draw[line width=0.5mm] (A1) -- (A2);
    \draw[line width=0.5mm] (A2) -- (A3);
    \draw[line width=0.5mm] (A3) -- (A4);

    \draw[line width=0.5mm] (B1) -- (B2);
    \draw[line width=0.5mm] (B2) -- (B3);
    \draw[line width=0.5mm] (B3) -- (B4);

    \draw[line width=0.5mm] (C1) -- (C2);
    \draw[line width=0.5mm] (C2) -- (C3);
    \draw[line width=0.5mm] (C3) -- (C4);

    \draw[line width=0.5mm] (D1) -- (D2);
    \draw[line width=0.5mm] (D2) -- (D3);
    \draw[line width=0.5mm] (D3) -- (D4);

\end{tikzpicture}
  \end{adjustbox}
  \caption{The product of the vertex filtrations on $G = H$ given by $f_v(\operatorname{red}) > f_v(\operatorname{blue})$. See Example~\ref{exp::vertex_PH_algo_example} for how the algorithm in Theorem~\ref{thm::vertex_PH_prod} is used to compute the PH of this filtration.}
  \label{fig:example_vertex_prod_fil}
\end{figure}
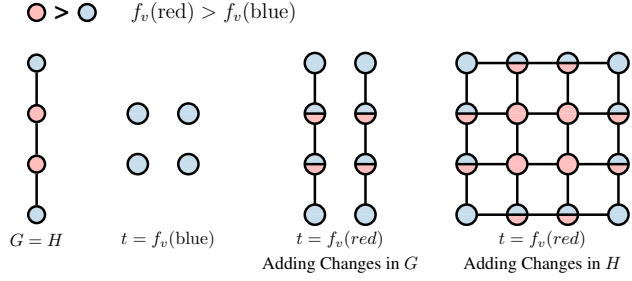

Now we describe an algorithm for the 0-th dimensional persistence diagram of the product of edge filtration. Since all vertices are born at the same time in this filtration, it suffices for us to keep track of the number of deaths at each time.
\begin{resbox}
\begin{theorem}\label{thm::betti_number_prod}
   Let $(G, f_G), (H, f_H)$ be two edge-based filtrations. The 0-dimensional PH diagram for $(G \Box H, f_G \Box f_H)$ can be computed using Algorithm~\ref{alg:prod_edge_level}. The runtime is $O(\max(n_G\log n_G+n_H\log n_H$, $m_G \log m_G + n_G$, $m_H \log m_H + n_H))$.
\end{theorem}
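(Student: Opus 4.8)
The plan is to reduce the entire $0$-dimensional diagram to the two scalar ``Betti profiles'' of the factor filtrations, never materializing the $\Theta(n_G m_H + n_H m_G)$ edges of $G\Box H$. Two facts drive this. First, by Definition~\ref{def::product_fil} the sublevel set at threshold $t$ factors as $(G\Box H)_t = G_t \Box H_t$, where $G_t$ (resp.\ $H_t$) is the edge-sublevel subgraph of $G$ (resp.\ $H$) on the full vertex set. Second, for a box product the connected components are exactly products of components of the factors: $(g_1,h_1)$ and $(g_2,h_2)$ lie in the same component of $A\Box B$ iff $g_1,g_2$ are connected in $A$ and $h_1,h_2$ are connected in $B$. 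Hence $\beta_0((G\Box H)_t) = \beta_0(G_t)\,\beta_0(H_t)$, a product of two quantities I can read off from the $0$-dimensional PH of the factors.

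\textbf{From Betti profile to diagram.} In an edge filtration every vertex is born at time $0$, so every bar in the $0$-dimensional diagram has birth $0$ and the whole diagram is determined by the multiset of death values. I would make this precise by the cumulative identity: for $t\ge 0$ the number of bars still alive at $t$ equals $\beta_0((G\Box H)_t)$, and since the total number of bars is the vertex count $m_G m_H$, the number of finite deaths occurring at or before $t$ is $m_G m_H - \beta_0(G_t)\beta_0(H_t)$. Letting $t_1 < t_2 < \cdots$ be the sorted union of the critical values of the two factor filtrations (and $t_0$ a point below all edge values, where $\beta_0(G_{t_0})\beta_0(H_{t_0}) = m_G m_H$), the number of deaths at $t_k$ is the jump
\[
\Delta_k \;=\; \beta_0(G_{t_{k-1}})\beta_0(H_{t_{k-1}}) - \beta_0(G_{t_k})\beta_0(H_{t_k}),
\]
and the essential bars number $\beta_0(G)\beta_0(H)$. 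The diagram is then $(0,\infty)$ with multiplicity $\beta_0(G)\beta_0(H)$ together with $\Delta_k$ copies of $(0,t_k)$ for each $k$. I would remark that because all births coincide, the elder-rule tie-breaking never changes this multiset, which is exactly why counting deaths suffices.

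\textbf{Algorithm and runtime.} Concretely: (i) run the standard union--find $0$-dimensional PH on $(G,f_G)$ and $(H,f_H)$ to obtain the step functions $t\mapsto\beta_0(G_t)$ and $t\mapsto\beta_0(H_t)$ as breakpoint lists of length at most $m_G-1$ and $m_H-1$; sorting the edge values dominates this phase at $O(n_G\log n_G + n_H\log n_H)$. Then (ii) merge the two value-sorted breakpoint lists and, in a single left-to-right scan, evaluate each product $\beta_0(G_{t_k})\beta_0(H_{t_k})$, take consecutive differences to obtain the death counts $\Delta_k$, and emit the bars. Grouping the per-factor costs --- handling the $O(m_G)$ (resp.\ $O(m_H)$) breakpoints and a linear pass over the $n_G$ (resp.\ $n_H$) edges --- yields the terms $m_G\log m_G + n_G$ and $m_H\log m_H + n_H$, and the overall runtime is the maximum of the three phases, as claimed. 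Correctness of the combine step is immediate from the jump formula above.

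\textbf{Main obstacle.} The conceptual crux is the component-multiplicativity $\beta_0(A\Box B)=\beta_0(A)\beta_0(B)$ together with the factorization $(G\Box H)_t = G_t\Box H_t$; once these are in hand the diagram collapses to a one-dimensional bookkeeping problem. The genuine difficulty is algorithmic rather than topological: I must guarantee the combine step touches only the $O(m_G+m_H)$ breakpoints and the factor diagrams, so that the product graph's $\Theta(n_G m_H + n_H m_G)$ edges are never enumerated --- otherwise the runtime would collapse to the naive bound. I would also check that the argument is uniform in the non-injective case (several factor edges sharing a value, or a $G$-edge and an $H$-edge sharing a value), which the before/after jump formula handles automatically since it only compares $\beta_0$ across consecutive critical thresholds.
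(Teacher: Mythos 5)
Your proof is correct, but it takes a route that differs from the paper's in a meaningful way. The paper never invokes a global product formula for $\beta_0$; instead it refines each filtration step $G_{a_i}\Box H_{a_i}\subset G_{a_{i+1}}\Box H_{a_{i+1}}$ into two sub-steps (first insert the new $G$-edges, then the new $H$-edges), decomposes $G_{a_i}\Box H_{a_i}=\bigsqcup_j G_{a_i}\Box C_j$ over the components $C_j$ of the static factor, and argues that components $D_k\Box C_j$ and $D_{k'}\Box C_j$ merge if and only if $D_k$ and $D_{k'}$ merge in $G$ --- yielding exactly the cross-term death count $h^b_i\,g^d_{i+1}+g^b_{i+1}\,h^d_{i+1}$ that Algorithm~\ref{alg:prod_edge_level} emits. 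You instead use the multiplicativity $\beta_0(A\Box B)=\beta_0(A)\beta_0(B)$ at every threshold (which the paper only proves in the \emph{vertex-level} theorem, via the K\"unneth argument) together with the observation that all bars are born at $0$, and read the death counts off as jumps of the product profile. The two are reconciled by the telescoping identity
\begin{equation*}
\beta_0(G_{t_{k-1}})\beta_0(H_{t_{k-1}})-\beta_0(G_{t_k})\beta_0(H_{t_k})
=\beta_0(H_{t_{k-1}})\bigl(\beta_0(G_{t_{k-1}})-\beta_0(G_{t_k})\bigr)
+\beta_0(G_{t_k})\bigl(\beta_0(H_{t_{k-1}})-\beta_0(H_{t_k})\bigr),
\end{equation*}
which is exactly the per-step quantity computed in Algorithm~\ref{alg:prod_edge_level} (still-alive counts equal Betti numbers here, since every death is a merge). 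Since the theorem asserts correctness of \emph{that} algorithm, you should state this one-line identity explicitly; without it you have proved that \emph{your} jump-based variant is correct, not literally that the paper's pseudocode is. Your approach buys a cleaner, more ``static'' argument that dispenses with the two-step refinement and handles ties uniformly; the paper's buys an elementary merge-level argument that does not presuppose the K\"unneth-type fact. One small bookkeeping caveat: your runtime attribution swaps the roles of the terms (the $n\log n$ term in the paper's bound comes from sorting the factor diagrams' death times, one per vertex, while edge sorting sits inside the $m\log m + n$ PH terms); this is forgivable given that the paper's own vertex/edge notation for $m_G,n_G$ is used inconsistently, but you should fix the attribution so the three phases map onto the three terms of the stated bound.
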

\end{resbox}

The runtime in Theorem~\ref{thm::betti_number_prod} is again an improvement over the naive case, which is $O(n_G n_H + (n_G m_H + n_H m_G) log(n_G m_H + n_H m_G))$, and we empirically assess their performance in Figure~\ref{fig:runtime-results}. We give a computational example using Theorem~\ref{thm::betti_number_prod} in Example~\ref{exp::edge_PH_algo_example} of the Appendix. Finally, we describe a procedure to calculate the 1-dimensional persistence diagrams for a general product filtration.

\begin{algorithm}
\caption{Computing the 0-dim PH Diagrams in Product of Edge-Level Filtrations}\label{alg:prod_edge_level}
\hspace*{\algorithmicindent}\textbf{Input:} The edge-level filtrations $(G, f_G \geq 0)$ and $(H, f_H \geq 0)$\\
\hspace*{\algorithmicindent}\textbf{Output:} The $0$-dim PH diagram for $(G \Box H, f_G \Box f_H)$
\begin{algorithmic}[1]
\State impl$_G$, impl$_H$ $\gets$ 0-dim PH diagrams for $(G, f_G)$ and $(H, f_H)$.
\State $\operatorname{filtration\_steps} \gets $ $\{-\infty\} \cup$ filtration steps for $f_G$ and $f_H$ $\cup \{+\infty\}$, and ordered.
\State deaths$_G$, deaths$_H$ $\gets$ list of number of vertices that die at each time in  $\operatorname{filtration\_steps}$.
\State still$\_$alive$_G$, still$\_$alive$_H$ $\gets$ list of number of vertices that are still alive at each time in $\operatorname{filtration\_steps}$.
\State Remove $\{-\infty\}$ from $\operatorname{filtration\_steps}$.
\State \textbf{output} = []
\For{i in [0, len($\operatorname{filtration\_steps}$)]}
    \State $a_i \gets \operatorname{filtration\_steps}[i]$.
    \State $\operatorname{num} \gets$ still$\_$alive$_H[i] \boldsymbol{\cdot} $deaths$_G[i+1]$ + still$\_$alive$_G[i+1] \boldsymbol{\cdot} $deaths$_H[i+1]$.
    \State \textbf{output += } [$(0, a_i)$ for i in range(0, num)]
\EndFor
\State \Return \textbf{output}
\end{algorithmic}
\end{algorithm}

\begin{proposition}\label{prop::edge_ph_prod}
    Let $\{G_t\}$ and $\{H_t\}$ be two filtrations of $G$ and $H$ respectively. Consider the induced filtration on $G \Box H$ given by $(G \Box H)_t \coloneqq G_t \Box H_t$, and let $a_1 < a_2 < ... < a_n$ mark the times where $(G \Box H)_t$ changes. The 1-dimensional PH of this filtration is exactly the data of $\beta_1(G_{a_1} \Box H_{a_1})$ copies of $(a_1, \infty)$ and $\beta_1(G_{a_{i}} \Box H_{a_{i}}) - \beta_1(G_{a_{i-1}} \Box H_{a_{i-1}})$ copies of $(a_{i}, \infty)$ for $i > 1$. The runtime is $O(m_G \log m_G + n_G + m_H \log m_H + n_H)$ because, for any two graphs $A$ and $B$, we may compute $\beta_1(A \Box B) = \# E(A) \# E(B) + \beta_0(A) \beta_0(B) - \chi(A) \chi(B)$. 
\end{proposition}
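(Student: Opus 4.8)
The plan is to decouple the statement into two independent pieces: (i) the description of the $1$-dimensional persistence pairs as births carrying death $\infty$, governed by the jumps in $\beta_1$, and (ii) the closed form $\beta_1(A \Box B) = \#E(A)\#E(B) + \beta_0(A)\beta_0(B) - \chi(A)\chi(B)$, which is what makes the computation cheap.

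For (i), I would first check that $\{(G \Box H)_t\}$ is a genuine nested filtration, i.e. $G_s \Box H_s \subseteq G_t \Box H_t$ for $s \le t$; this is immediate from $V(G_s \Box H_s) = V(G_s) \times V(H_s)$ and the edge rule of Definition~\ref{def::graph_product}, since every product edge is witnessed by an edge or vertex already present in $G_s$ or $H_s$. Because each $(G \Box H)_{a_i}$ is a graph (a $1$-dimensional complex) it carries no $2$-simplices, so no $1$-cycle can ever become a boundary: every $1$-dimensional bar has the form $(b, \infty)$, exactly as recorded in the preliminaries. Consequently $\beta_1$ is non-decreasing along the filtration — each newly added edge either merges two components (lowering $\beta_0$) or creates one independent cycle (raising $\beta_1$), while each added vertex only affects $\beta_0$ — and the number of bars born at $a_i$ equals the jump $\beta_1((G \Box H)_{a_i}) - \beta_1((G \Box H)_{a_{i-1}})$, with $\beta_1 = 0$ before $a_1$. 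This yields precisely the claimed multiset. The one point to state carefully is that even if several product edges share the filtration value $a_i$, the count of bars born at $a_i$ is still the $\beta_1$-jump, independent of how ties are ordered within the step.

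For (ii), I would compute the vertex and edge counts of the box product directly, $\#V(A \Box B) = \#V(A)\#V(B)$ and $\#E(A \Box B) = \#E(A)\#V(B) + \#V(A)\#E(B)$, which after expanding $\chi = \#V - \#E$ gives $\chi(A \Box B) = \chi(A)\chi(B) - \#E(A)\#E(B)$. Next I would record that the connected components of $A \Box B$ are exactly the products of the components of $A$ with those of $B$ — the content being that $A_i \Box B_j$ is connected whenever $A_i, B_j$ are, by moving first within a $G$-fiber and then within an $H$-fiber — so $\beta_0(A \Box B) = \beta_0(A)\beta_0(B)$. Since a graph has homology only in degrees $0$ and $1$, we have $\beta_1 = \beta_0 - \chi$, and substituting the two identities produces the stated formula.

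Finally, for the runtime: the formula expresses each $\beta_1((G \Box H)_{a_i})$ purely through the scalars $\#E(G_{a_i}), \beta_0(G_{a_i}), \chi(G_{a_i})$ and their $H$-counterparts, so the product graph (of size $\Theta(n_G n_H)$) is never built. I would precompute the $0$-dimensional PH of $(G, f_G)$ and $(H, f_H)$ in $O(m_G \log m_G + n_G)$ and $O(m_H \log m_H + n_H)$ respectively, read off $\beta_0, \#E, \chi$ at each filtration step (the vertex and edge counts update trivially, $\beta_0$ from the $0$-dimensional pairing / union-find), and then sweep over the $O(m_G + n_G + m_H + n_H)$ merged steps, combining the scalars in $O(1)$ each; the two sorts dominate, giving the claimed bound. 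The main obstacle, such as it is, is bookkeeping rather than conceptual: making the ``$\beta_1$-jump equals number of bars born'' step rigorous under simultaneous births, and confirming $\beta_0(A \Box B) = \beta_0(A)\beta_0(B)$ cleanly. Both reduce to standard facts about graph persistence and the connectivity of Cartesian products.
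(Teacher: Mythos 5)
Your proposal is correct, and it reaches the key identity $\beta_1(A \Box B) = \#E(A)\#E(B) + \beta_0(A)\beta_0(B) - \chi(A)\chi(B)$ by a genuinely different route than the paper. Part (i) of your argument (all $1$-dimensional bars have death $\infty$ because a graph has no $2$-cells, so the bars born at $a_i$ are counted exactly by the jump in $\beta_1$, independent of tie-breaking within a step) and your runtime analysis coincide with the paper's. The difference is in (ii): the paper views $A \Box B$ as the $1$-skeleton of the topological product $A \times B$ with its canonical CW structure, invokes multiplicativity of the Euler characteristic for finite CW complexes to get $\chi(A \times B) = \chi(A)\chi(B)$, subtracts the $\#E(A)\#E(B)$ two-cells to obtain $\chi(A \Box B)$, and derives $\beta_0(A \Box B) = \beta_0(A)\beta_0(B)$ from the K\"unneth formula (after noting that removing $2$-cells preserves connectivity). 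You instead count directly: $\#V(A \Box B) = \#V(A)\#V(B)$ and $\#E(A \Box B) = \#E(A)\#V(B) + \#V(A)\#E(B)$ give $\chi(A \Box B) = \chi(A)\chi(B) - \#E(A)\#E(B)$ by expansion, and you prove $\beta_0(A \Box B) = \beta_0(A)\beta_0(B)$ combinatorially by exhibiting paths within a product of components (move in one fiber, then the other) and observing that product edges never leave a product of components. Your version is more elementary and self-contained, requiring no CW or K\"unneth machinery; the paper's version is more conceptual, placing the identity in a topological context that it reuses elsewhere (the same K\"unneth argument appears in its proof of the vertex-level product algorithm), and it generalizes more readily beyond the $1$-dimensional setting. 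One small bonus in your write-up: you explicitly verify that $\{G_t \Box H_t\}$ is a nested filtration, a point the paper leaves implicit.
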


\textbf{Limitation.} The algorithms presented here only cover a subcollection of possible color-based filtrations on the graph product $G \Box H$. In general, for a filtration on $G \Box H$ that does not arise from the product of filtration on $G$ and $H$, there is no reason to expect the PH for the filtration on $G \Box H$ can be neatly decomposed similarly to the algorithms in this section. Computing PH on graph products with color-based filtrations outside of product filtrations may be more computationally expensive.

\section{Experiments}\label{sec:experiments}
To evaluate the empirical power of topological descriptors for graph products, we run three sets of experiments. The first investigates their effectiveness on BREC datasets \cite{wang2024empirical} and minimal Cayley graphs \citep{Coolsaet2023} with varying number of nodes, which are popular benchmarks for assessing the expressive power of graph models. The second examines the runtime performance of the algorithms described in Theorems~\ref{thm::vertex_PH_prod} and~\ref{thm::betti_number_prod}, using the BREC datasets. Finally, the third demonstrates how these descriptors can be integrated with GNNs for graph classification tasks. 

\textbf{Expressivity.} For two input graphs $G$ and $H$, we use graph products to decide isomorphism as follows.
\looseness=-1
\begin{enumerate}[noitemsep,topsep=0pt,left=8pt]
    \item Compute the persistence diagrams for $G \Box G$ and $G \Box H$. 
    If the diagrams differ, we conclude that $G$ and $H$ are 
    \emph{not isomorphic}. Otherwise, proceed to Step~2.
    \item Compute the persistence diagrams for $G \Box G$ and $H \Box H$. 
    If these diagrams differ, we again conclude that $G$ and $H$ are 
     \emph{not isomorphic}. If they are identical, the test is deemed 
    inconclusive --- i.e., $G$ and $H$ are considered isomorphic by this test.
\end{enumerate}

Following this strategy, we compare our method (GProd) against standard PH (with vertex and edge filtrations), considering three filtration functions: edge closeness centrality, Jaccard index, and square clustering coefficients. Details regarding these functions are given in \autoref{app:details}. 

\autoref{tab:brec-results} presents accuracy results. Overall, computing topological descriptors on graph products outperforms the standard approach (i.e., applying PH to the original graphs), although the two methods perform identically on several datasets. Even without introducing virtual nodes (see Corollary \ref{cor::general_PH_more_expressive}), the standard approach does not achieve higher accuracy in any of these experiments. 

\begin{table}[!t]
    \centering
    \caption{Accuracy Results: Standard PH vs. PH on graph products (GProd) on pairs of Cayley Graphs and three datasets from the BREC benchmark. We use filtrations induced by three functions: edge closeness, Jaccard index, and square clustering coefficient. Overall, persistent topological descriptors computed on graph products yield improved performance in distinguishing non-isomorphic graphs.\protect\footnotemark}
    \label{tab:brec-results}
\begin{adjustbox}{width=\linewidth}
\begin{tabular}{llccccccccccc}
\toprule
& & \multicolumn{7}{c}{Minimal Cayley Graphs} & & \multicolumn{3}{c}{BREC Datasets} \\
 \cmidrule(r){3-9} \cmidrule(r){11-13} 
Filtration & Method         & c-12 & c-16 & c-20 & c-24 & c-32 & c-36 & c-63 &  & Basic (60) & Reg. (50) & Ext. (100) \\
\midrule
\multirow{2}{*}{Closeness} & PH$^E$ & 100.0 & 100.0 & 100.0 & 99.85 & 99.93 & 99.93 & 100.0 & & 93.33 & 86.00 & 71.00\\
 & GProd$^E$ & 100.0 & 100.0 & 100.0 & \textbf{100.0} & 99.93 &  \textbf{100.0} &  100.0 & & \textbf{100.0} & \textbf{100.0} & \textbf{94.00}\\
\cmidrule(r){2-13}

\multirow{2}{*}{\makecell{Jaccard \\ Index}} & PH$^E$ & 95.24 & 83.33 & 60.71 & 85.59 & 76.50 & 84.33 &  73.33 & & 98.33 & 94.00 & 56.00 \\
 & GProd$^E$ & 95.24 & 83.33 & 60.71 & 85.59 & 76.50 & 84.33 &  73.33 & & 98.33 & 94.00 & \textbf{61.00} \\
\cmidrule(r){2-13}
\multirow{2}{*}{\makecell{Clustering}} & PH$^V$ & 100.0 & 100.0 & 100.0 & 95.80 & 97.64 & 96.52 & 82.50 & & 100.0 & 98.00 & 84.00 \\
 & GProd$^V$ & 100.0 & 100.0 & 100.0 & \textbf{97.00}  & 97.64 & \textbf{97.31} & \textbf{90.00} & & 100.0 & 98.00 & 84.00 \\
\bottomrule
\end{tabular}
\end{adjustbox}
\vspace{-8pt}
\end{table}
\footnotetext{For full transparency, we note that \autoref{tab:brec-results} differs from earlier versions of the paper due to a bug in our code that incorrectly reported perfect accuracy for GProd. As a result, we revisited the experimental setup and considered alternative filtration functions and additional datasets to better illustrate the expressivity gains of our approach. We provide further details in the official code repository and in \autoref{app:details}.}

\textbf{Runtime.} For the vertex-level product filtration, we consider three algorithms: (i) our proposed method (Theorem~\ref{thm::vertex_PH_prod}); (ii) a naive computation on the full graph product using the union-find PH implementation described in Algorithm 1 of \cite{immonen2023going}; and (iii) a naive computation using the persistent homology routines from the \texttt{gudhi} library \cite{gudhi:urm}. Similarly, for the edge-level product filtration, we compare three counterparts: our method (Theorem~\ref{thm::betti_number_prod}), a union-find PH implementation on the graph product, and the \texttt{gudhi}-based approach. We provide implementation details in \autoref{app:details}.
\begin{figure}[tbh]
    \centering
    \includegraphics[width=0.49\linewidth]{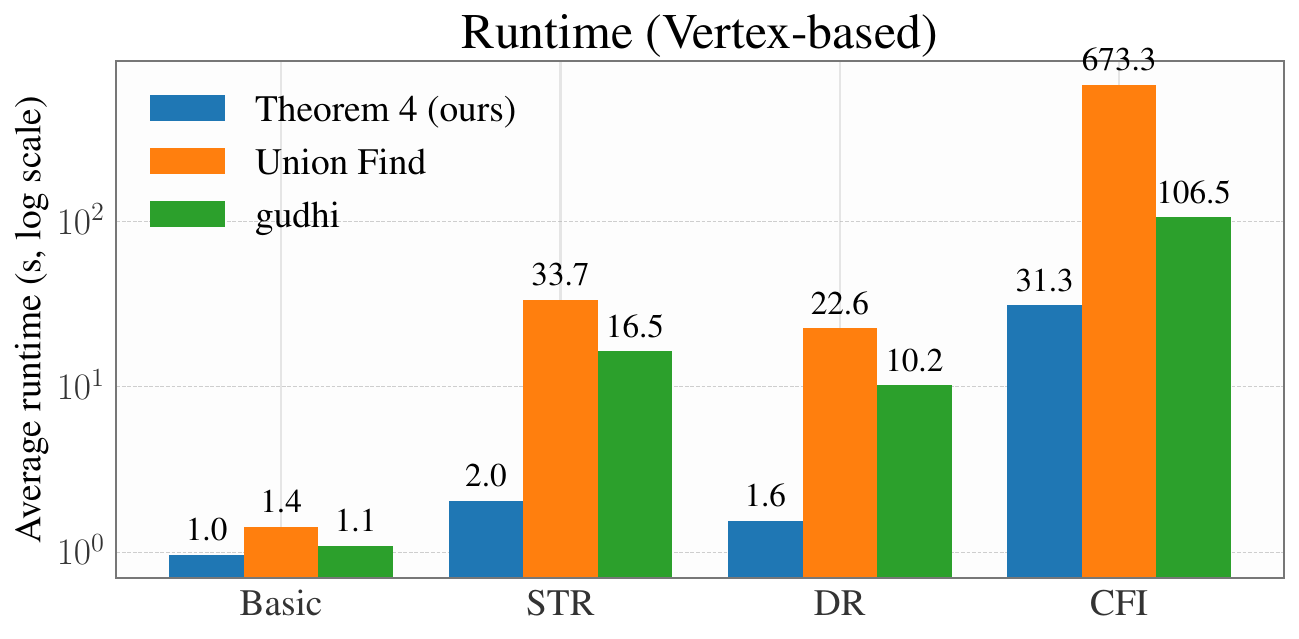}
    \includegraphics[width=0.49\linewidth]{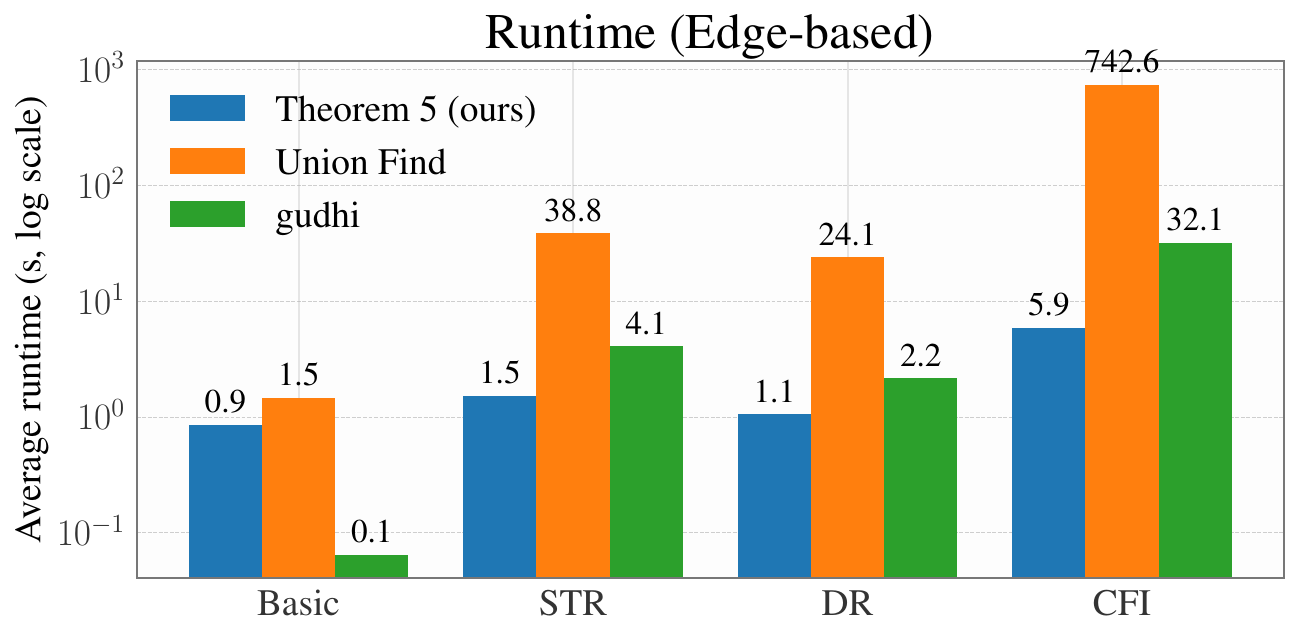}
    \caption{Average runtime (in sec.) of different implementations of topological descriptors for graph products. Our algorithms (Theorems 4 and 5) consistently outperform previous implementations.\looseness=-1}
    \label{fig:runtime-results}
    \vspace{-10pt}
\end{figure}

We evaluate all algorithms using BREC datasets. For each pair of graphs $G$ and $H$, we apply vertex/edge-level filtrations and compute the persistence diagrams of $G \Box H$. \autoref{fig:runtime-results} reports the average runtime (in seconds, taken over 10 trials) each algorithm took to go over the entire dataset. Notably, as the graph sizes increase, Theorems~\ref{thm::vertex_PH_prod} and \ref{thm::betti_number_prod} significantly outperform the other two methods.
\looseness=-1

\textbf{Graph classification.} We employ the following procedure for integrating descriptors based on graph products into GNNs for graph-level classification tasks. First, we select a diverse set of reference (or anchor) graphs, such as cycles, random graphs, and complete graphs. Then, for a given input graph $G$ and choice of filtration function, we either compute the product filtration between $G$ and each reference graph or obtain a filtration directly from their product. This process yields as many persistence diagrams as reference graphs. The resulting diagrams are then vectorized using an appropriate embedding scheme \citep{pmlr-v108-carriere20a}. Finally, the topological embeddings are concatenated with the GNN ones and passed through a classification head.
In this work, we employ seven reference graphs generated using the NetworkX library \citep{networkx}. Also, we apply the Gaussian vectorization scheme in \citep{pmlr-v108-carriere20a}. We refer to \autoref{app:details} for further details.

In order to assess the effect of integrating topological descriptors, based on graph products, on the performance of GNNs, we consider two variants. \texttt{GProd} consists of computing PH on the box product of two graphs; \texttt{ProdFilt} consists of computing (vertex) product filtrations (as in Algorithm \ref{alg:prod_vertex_level}). In addition, we consider three GNN architectures: GIN~\citep{gin}, GCN~\citep{gcn}, and GraphSAGE~\citep{sage}. Each model uses a hidden dimension of 64, and the optimal number of layers (chosen from {2, 3, 5}) is selected based on validation performance. For the graph product descriptors, we additionally treat the filtration functions (degree and betweenness centrality) as hyperparameters. 

We consider six real-world popular datasets for molecular property prediction: COX2, DHFR, NCI1, NCI109, ZINC~(12K), and PROTEINS~\citep{TUDatasets, benchmarking2020}. All tasks are binary classification, except ZINC, which is a regression dataset. For ZINC, we use the publicly available train/validation/test splits; for the remaining datasets, we adopt a random 80/10/10\% split. Further details on datasets, hyperparameter selection, and model configuration are provided in the supplementary material. 
We compute the mean and standard deviation of the performance metrics (MAE \(\downarrow\) for ZINC, and accuracy \(\uparrow\) for all other datasets) over five runs with different seeds.
\looseness=-1

\begin{table}[th]
\centering
\vspace{-8pt}
\caption{Graph classification results. The inclusion of topological descriptors derived from graph products leads to consistent improvements in the predictive performance of GNNs.}
\label{tab:classification_results}
\begin{adjustbox}{width=0.9\linewidth}
    \begin{tabular}{lcccccc}
\toprule
\textbf{Model} & \textbf{DHFR} $\uparrow$ & \textbf{COX2} $\uparrow$ & \textbf{PROTEINS} $\uparrow$ & \textbf{NCI1} $\uparrow$ & \textbf{NCI109} $\uparrow$ & \textbf{ZINC} $\downarrow$ \\
\midrule
GIN & 80.53 $\pm$ 4.10 &  76.60 $\pm$ 2.61 & 70.18 $\pm$ 5.11 & 80.19 $\pm$ 1.83 & 78.74 $\pm$ 1.65 & 0.57 $\pm$ 0.02 \\
GIN + GProd & \textbf{82.89} $\pm$ 4.16 & 78.30 $\pm$ 5.90 & \textbf{73.04} $\pm$ 4.91 & \textbf{81.65} $\pm$ 2.42 & \textbf{80.97} $\pm$ 1.82 & \textbf{0.54} $\pm$ 0.02 \\
GIN + ProdFilt & 81.58 $\pm$ 2.79 & \textbf{78.72} $\pm$ 3.36 & 72.50 $\pm$ 3.54 & 80.19 $\pm$ 1.08 & 80.15 $\pm$ 2.47 & \textbf{0.54} $\pm$ 0.02 \\
\midrule

GCN & 78.68 $\pm$ 5.54 & 79.15 $\pm$ 3.81 & 73.04 $\pm$ 2.92 &  79.12 $\pm$ 1.07 & 79.08 $\pm$ 1.95 & 0.64 $\pm$ 0.08 \\
GCN + GProd & \textbf{83.95} $\pm$ 5.38 & 75.32 $\pm$ 3.87 & 73.75 $\pm$ 5.70 & 79.71 $\pm$ 1.64 & 80.00 $\pm$ 1.10 & \textbf{0.62} $\pm$ 0.01 \\
GCN + ProdFilt  & 80.00 $\pm$ 8.99 & \textbf{80.85} $\pm$ 6.73 & \textbf{74.11} $\pm$ 3.28 & \textbf{79.90} $\pm$ 0.82 & \textbf{80.53} $\pm$ 1.23 & \textbf{0.62} $\pm$ 0.01 \\
\midrule

SAGE & \textbf{84.21} $\pm$ 3.48 & 76.17 $\pm$ 0.95 & 72.86 $\pm$ 1.62 & 78.98 $\pm$ 0.94 & \textbf{79.81} $\pm$ 1.31 & 0.51 $\pm$ 0.01\\
SAGE + GProd & 83.16 $\pm$ 4.10 & \textbf{80.85} $\pm$ 2.61 & 71.43 $\pm$ 3.40 & \textbf{80.29} $\pm$ 0.75 & 78.50 $\pm$ 3.16 & \textbf{0.48} $\pm$ 0.01 \\
SAGE + ProdFilt & \textbf{84.21} $\pm$ 2.63 & 77.87 $\pm$ 4.41 & \textbf{73.21} $\pm$ 5.79 &  80.00 $\pm$ 1.09 & 79.66 $\pm$ 1.41 & \textbf{0.48} $\pm$ 0.01\\
\bottomrule
\end{tabular}
\end{adjustbox}
\vspace{-3pt}
\end{table}

\autoref{tab:classification_results} reports the results. Overall, incorporating topological descriptors leads to best performance for most GNN/dataset combinations. Also, there is no clear winner between \texttt{GProd} and \texttt{ProdFilt}, and they yield the same performance on ZINC (the largest dataset). These experiments show that combining expressive topological features with GNNs can lead to consistent empirical improvements.
\looseness=-1

\section{Conclusion and Broader Impact}
 We characterized the expressive power of EC on general color-based filtrations, and introduced a more efficient variant  that has the same expressivity. We also examined the interaction of both EC and PH with graph box products, establishing contrasting results in terms of their respective benefits over the EC and PH of individual components. Furthermore, we devised efficient algorithms to compute persistence diagrams for a sub-collection of filtrations arising from the product of vertex-level, and edge-level, filtrations. Our experiments demonstrated consistent empirical gains. As graphs are ubiquitous in many real-world scenarios such as molecules, recommendation systems, and multi-way data, we envision that this work would open several avenues for applications in multiple areas. 

\begin{ack}
This research was conducted while the first author was participating during the 2024 Aalto Science Institute international summer research programme at Aalto University. We also acknowledge the computational resources provided by the Aalto Science-IT Project from Computer Science IT. We are grateful to the anonymous program chair, area chair and the reviewers for their constructive feedback and service. VG acknowledges the Academy of Finland (grant 342077), Saab-WASP (grant 411025), and the Jane and Aatos Erkko Foundation (grant 7001703) for their support. AS acknowledges the Conselho Nacional de Desenvolvimento Científico e Tecnológico (CNPq) (312068/2025-5). MJ would like to thank \text{Sab\'ina Gul\v{c}\'{i}kov\'{a}} for her amazing help at Finland in Summer 2024. MJ would also like to thank Cheng Chen, Yifan Guo, Aidan Hennessey, Yaojie Hu, Yongxi Lin, Yuhan Liu, Semir Mujevic, Jiayuan Sheng, Chenglu Wang, Anna Wei, Jinghui Yang, Jingxin Zhang, an anonymous friend, and more for their incredible support in the second half of 2024.
\end{ack}


{
\small
\bibliographystyle{plainnat} \bibliography{references}
}

\newpage
\appendix
\section{Proofs}

\subsection{Proofs for Section~\ref{sec::background}}

We treat the proof of Proposition~\ref{prop::max_PH} separately in Appendix~\ref{sec::max_PH}.

\subsection{Proofs for Section~\ref{sec::ec}}\label{appendix::express_ec_graph}

In this section, we will characterize the expressivity of EC on graphs. For the discussion of EC on simplicial complexes, please see \autoref{appendix::express_ec_simp}.

Before we state the proofs, we first remind the reader that in Section~\ref{sec::ec}, we assumed that $G$ and $H$ have the same number of vertices (otherwise, they clearly are not isomorphic to each other).

\begin{remark}
    This assumption on $G$ and $H$ also gives a justification for why our definition of $\operatorname{EC}(G, f)^E$ does not include the Euler characteristic of $G^{f_e}_0$, as that is just the number of vertices on the graph.
\end{remark}

\begin{proof}[Proof of Proposition~\ref{prop::ec-edge-equiv}]
    We will first show that the second and third items are equivalent. For each vertex color function $f: X \to \Rbb$, we use $g_f: X \times X \to \Rbb$ to denote its correspondent edge color function.
    
    Suppose $\operatorname{EC}^m(G, f)^E = \operatorname{EC}^m(H, f)^E$ for all possible $f: X \to \Rbb$. Recall by assumption $G$ and $H$ have the same number of vertices $n$. Let $f$ be an vertex color function such that $0 < f(a) < f(b) < \min_{c_i \in C - \{a, b\}} f(c_i)$, then we have that
    \[n - \# E_G(a, a) = \chi(G_a^{g_f}) = \chi(H_a^{g_f}) = n - \# E_H(a, a) \]
    Hence $\# E_G(a, a) = \# E_H(a, a)$. By choosing another appropriate vertex color function, we can similarly show that $\# E_G(b, b) = \# E_H(b, b)$. Now, if we look back on the function $f$, we have that 
    \[n - \# E_G(a, a) - \# E_G(a, b) - \# E_G(b, b) = \chi(G_b^{g_f})  \]
    \[ = \chi(H_b^{g_f}) = n - \# E_H(a, a) - \# E_H(a, b) - \# E_H(b, b).\]
    This implies that $\# E_G(a, b) = \# E_H(a, b)$ since we already know that $\# E_G(a, a) = \# E_H(a, a)$ and $\# E_G(b, b) = \# E_H(b, b)$.

    Conversely, suppose $\# E_G(a, b) = \# E_H(a, b)$ for all colors $a, b \in X$. Let $f$ be an edge color filtration function such that $0 < f(d_1) \leq ... \leq f(d_s)$, where $d_i$ is a relabeling of the colors $c_1, ..., c_s$ by this ordering.

    Since $G$ and $H$ have the same number of vertices $n$, we have that $\chi(G^{g_f}_0) = n = \chi(H^{g_f}_0)$. Thus, it suffices for us to show that $\chi(G^{g_f}_{t}) = \chi(H^{g_f}_{t})$ for all $t > 0$. Indeed,
    \begin{align*}
        \chi(G^{g_f}_{t}) &= n - \sum_{i \leq j \text{ s.t. } f(d_i), f(d_j) \leq t} \# E_G(d_i, d_j)\\
        &= n - \sum_{i \leq j \text{ s.t. } f(d_i), f(d_j) \leq t} \# E_H(d_i, d_j)\\
        &= \chi(H^{g_f}_{t}).
    \end{align*}

    Hence, we have that the second and third items are equivalent. Finally, we observe that clearly (1) implies (2) because the edge function induced by the max EC diagram is a special case of a symmetric edge coloring function. An argument very similar to the proof of (3) implies (2) will also show that (3) implies (1). This is because for all $t > 0$,
    \begin{align*}
      \chi(G^g_{t}) &= n - \sum_{i \leq j \text{ s.t. } g(d_i, d_j) \leq t} \# E_G(d_i, d_j)\\
      &= n - \sum_{i \leq j \text{ s.t. } g(d_i, d_j) \leq t} \# E_H(d_i, d_j)\\
      &= \chi(H^g_{t}).  
    \end{align*}
\end{proof}

\begin{proof}[Proof of Proposition~\ref{prop::ec-vertex-equiv}]
    There is no difference between the definition of the $0$-th dimensional component of the EC diagram and the max EC diagram, so the first two items are equivalent.

    For (1) implies (3), we can choose any injective vertex color function such that $f$ obtains its minimum at the color $c$, Then $G^f_{f(c)}$ and $H^f_{f(c)}$ will be $G(c)$ and $H(c)$, and hence their Euler characteristics will be the same. For any $a \neq b \in X$, we can choose an injective vertex color function $f$ such that $f$ is smallest at $a$ and second smallest at $b$. In this case, we have that
    \[\chi(G^f_{f(b)}) = \chi(H^f_{f(b)}).\]
    Here we have that
    \[\chi(G^f_{f(b)}) = \chi(G(a)) - \# E_G(a, b) + \chi(G(b)),\]
    \[\chi(H^f_{f(b)}) = \chi(H(a)) - \# E_H(a, b) + \chi(H(b)).\]
    Hence, we conclude that $\# E_G(a, b) = \# E_H(a, b)$ for all $a \neq b \in X$. The proof of (3) implies (1) is similar to that in Proposition~\ref{prop::ec-edge-equiv}.
\end{proof}

\begin{proof}[Proof of Theorem~\ref{thm::EC_characterization}]
    The first two statements come directly from combining the statements of Proposition~\ref{prop::ec-edge-equiv} and Proposition~\ref{prop::ec-vertex-equiv}. The third statement follows from the observation that $\chi(G(c)) + \# E_G(c, c) = \# V_G(c)$.
\end{proof}

\subsection{Proofs for Section~\ref{sec::product}}

\begin{proof}[Proof of Proposition~\ref{prop:PH_more_info}]
    The proof for why edge-level PH can differentate $G \Box G$ and $H \Box H$ is mainly self-explanatory by the caption of Figure~\ref{fig:ph_prod}. The reason why edge-level PH is able to differ $G \Box G$ and $H \Box H$ is because when adding the edges between $(\operatorname{blue}, \operatorname{blue})$ and $(\operatorname{red}, \operatorname{blue})$ creates a cycle for $G \Box G$ and does not change the 1-dimensional persistence diagram for $H \Box H$.
    
    To see why PH cannot tell apart $G$ and $H$ individually. We first note that there are no cycles, so we only need to focus on the 0-th dimensional components.
    
    \textbf{For vertex-level PH:} Write $r = f_v(\operatorname{red})$ and $b = f_v(\operatorname{blue})$. If $r=b$, then their persistence diagrams are the same since $G$ and $H$ have the same number of vertices and connected components. If $b > r$, then the subgraphs $G_r$ and $H_r$ are exactly the same graph, so PH cannot tell that step apart. At $t = b$, for both $G$ and $H$, two blue vertices are born and deceased at the same time, and one red vertex is also killed. If $r > b$, then again $G_b$ and $H_b$ are isomorphic graphs. At $t = r$, for both $G$ and $H$, two red vertices are born and dead at the same time, and one blue vertex is also killed. Thus, we conclude the vertex-level PH cannot tell apart $G$ and $H$. 

    \textbf{For edge-level PH:} Write $a = f_e(\operatorname{red}, \operatorname{red})$ and $b = f_e(\operatorname{red}, \operatorname{blue})$. If $a = b$, then they have the same persistence diagrams because $G$ and $H$ have the same number of vertices and connected components. If $a > b$, then there are two deaths that occur at $t = b$ and one death at $t = a$ for both graphs. If $b > a$, then there is one at $t = a$ and then two deaths are $t = b$ for both graphs. Thus, we conclude the edge-level PH cannot tell apart $G$ and $H$. 
\end{proof}

\begin{proof}[Proof of Proposition~\ref{prop::EC_prod_not_exp}]
    For two general colored graphs $A$ and $B$. For colors $a, b \in X$, we observe that $\# V_{A \Box B}((a, b)) = \# V_A(a) \# V_B(b) + \# V_B(a) \# V_A(a)$ (here the second term is due to the symmetry assumption).
    
    For colors $(a, b), (c, d) \in X \times X$, we also wish to enumerate $\# E_{A \Box B}((a, b), (c, d))$. Indeed, without loss when both $a \neq c$ and $a \neq d$, the definition of box product indicates that $\# E_{A \Box B}((a,b), (c,d)) = 0$. The axioms for the box product indicates that edges can only occur between two pairs of colors that share at least one color in common. By rearranging the order of tuples if necessary, we without loss only need to enumerate $\# E_{A \Box B}((a,b), (a, d))$. If $b \neq d$, then the number is given exactly as 
    \[\# E_{A \Box B}((a,b), (a, d)) = \# V_A(a) \# E_{B}(b, d) + \# V_{B}(a) \# E_{A}(b, d).\]
    Now since $G$ and $H$ in the statement of the proposition have the same EC diagram for all possible filtrations, Theorem~\ref{thm::EC_characterization} implies that for all $c_1, c_2 \in X$, $\# V_G(c_1) = \# V_H(c_1)$ and $\# E_G(c_1, c_2) = \# E_H(c_1, c_2)$. Plugging this equality into the formulas derived above, we have that
    \[\# V_{G \Box G}((a,b)) = \# V_{G \Box H}((a, b)) = \# V_{H \Box H}((a, b)),\]
    \[\# E_{G \Box G}((a,b), (c,d)) = \# E_{G \Box H}((a,b), (c,d)) = \# E_{H \Box H}((a,b), (c,d))\]
    for all possible choices of colors. Thus, Theorem~\ref{thm::EC_characterization} implies that $G \Box G, G \Box H,$ and $H \Box H$ all have the same EC diagrams for all possible filtrations.
\end{proof}

\begin{proof}[Proof of Corollary~\ref{cor::general_PH_more_expressive}]
We first observe that for any two graphs $A$ and $B$, we have that
\[A_+ \Box B_+ = (A \sqcup B \sqcup A \Box B)_+.\]
When $A = B = G$ from the statement of this corollary, we observe that
\[G_+ \Box G_+ = (G \sqcup G \sqcup G \Box G)_+.\]
Write the two disjoint copies of $G$ produced here as $G_0$. Observe that each vertex $v \in G_0$ (or $G_1$) has the color $(c(v), \text{virtual})$. Observe that no vertices in $G \Box G$ has the color that has the tag \texttt{virtual} in it, so we can choose a coloring procedure such that we go through the filtrations on $G_0$ and $G_1$ first before going through $G \Box G$. Running vertex-level (resp. edge-level PH) through this coloring procedure, we will produce two copies of vertex-level PH (resp. edge-level PH) for $G$, which recovers the original information. A similar procedure also works for $H$. Thus, the PH of $G_+ \Box G_+, G_+ \Box H_+, H_+ \Box H_t$ has at least as much information as the PH of $G$ and $H$. The part for ``strictly more information" follows from Proposition~\ref{prop:PH_more_info}.
\end{proof}

\subsection{Proofs for Section~\ref{sec::computational}}

\begin{proof}[Proof of Proposition~\ref{prop::vertex_product_coloring}]
To check that the graphs are the same, we wish to check that they have the same vertex set and edge sets. Indeed, let $t \in \Rbb$, then
    \[V(G_t^{f_G} \square H_t^{f_H}) = \{(g, h)\ |\ f_G(c(g)) \leq t \text{ and } f_H(c(h)) \leq t\}     \]
    \[= \{(g, h)\ |\ \max(f_G(c(g)), f_H(c(h))) \leq t \} = \{(g, h)\ |\ F((g, h)) \leq t\} = V((G \square H)_t).\]
Now to check edges, we have that for $(g_1, h_1), (g_2, h_2) \in V(G_t \square H_t) = V((G \square H)_t)$. Now suppose there is an edge between $(g_1, h_1)$ and $(g_2, h_2)$ in $(G \square H)^f_t \subseteq G \square H$. Since this is an edge in $G \square H$ it means either of two following cases
\begin{enumerate}
    \item $g_1 = g_2$ and $h_1 \sim h_2$ in $H$. Now if we can show that $h_1 \sim h_2$ in $H_t$, then this will give an edge between $(g_1, h_1)$ and $(g_2, h_2)$ in $G_t^{f_G} \square H_t^{f_H}$. Indeed, that just comes from the definition of a vertex filtration as the colors on $h_1, h_2$ are both less than or equal to $t$ under $f_H$.
    \item $h_1 = h_2$ and $g_1 \sim g_2$ in $G$. The argument is nearly identical to the first case.
\end{enumerate}
Conversely, suppose there is an edge between $(g_1, h_1)$ and $(g_2, h_2)$ in $G_t^{f_G} \square H_t^{f_H}$. Then, since $F((g_1, h_1)) \leq t$ and $F((g_2, h_2)) \leq t$ (as they have the same vertex set), it follows that this edge has to be in $(G \square H)_t$ by definition of vertex filtration. 
\end{proof}

\begin{proof}[Proof of Proposition~\ref{prop::edge_product_graph}]
    To check that the graphs are the same, we wish to check that they have the same vertex set and edge sets for all $t \geq 0$. The vertex set at time $t$ for $t \geq 0$ is always the collection of all vertices. The proof for the edge sets follows similarly to the proof in Proposition~\ref{prop::vertex_product_coloring}. 
\end{proof}

Now we will prove that the outputs of Algorithm~\ref{alg:prod_vertex_level} and Algorithm~\ref{alg:prod_edge_level} gives the 0-dimensional PH diagrams for the product of vertex-level filtraitons and edge-level filtrations respectively. For Theorem~\ref{thm::vertex_PH_prod} and Algorithm~\ref{alg:prod_vertex_level}, we also note that there is a more symmetric presentation of Algorithm~\ref{alg:prod_vertex_level} in terms of Algorithm~\ref{alg:prod_vertex_level_symmetric}, which we will show gives the same output. We now proceed to prove Theorem~\ref{thm::vertex_PH_prod}.

\begin{proof}
\textbf{For Algorithm Output:} We first observe that if we modify $\operatorname{nt\_deaths}$ in Algorithm~\ref{alg:prod_vertex_level} to be defined as $\operatorname{nt\_deaths} = \operatorname{betti}_G[i+1] \times \operatorname{deaths}_H[i+1]$, then it would give the same output. This is because every vertex in $\operatorname{births}_G[i+1]$ is in $\operatorname{betti}_G[i+1]$ (with compatible representatives), and subtracting the two versions of $\operatorname{nt\_deaths}$ from $\operatorname{nt\_births}$ would still give the same set (since the intersection of $\operatorname{nt\_births}$ with their respective complements are the same).\\

From here, showing that Algorithm~\ref{alg:prod_vertex_level} produces the correct output is equivalent to showing the following theorem.
\begin{theorem}\label{thm::vertex_PH_prod_math}
    In the set-up of Proposition~\ref{prop::vertex_product_coloring}, mark all the times $G_t \Box H_t$ changes as $a_0 < ... < a_n$. Let $b_{i}(G)$ and $d_{i}(G)$ be the number of non-trivial births at $a_{t}$ and the number of non-trivial deaths at $a_{t}$ (and similarly for $H$). The 0-dimensional PH of the filtration may be computed inductively by the following procedure.
    
    For time after $-\infty$ to $a_0$, initialize the list with $\beta_0(G_{a_0}) \beta_0(H_{a_0})$ tuples of the form $(a_0, -)$. Each tuple represents a connected component $X_k \Box Y_j$ where $X_k$'s and $Y_j$'s are the connected components of $G_{a_0}$ and $H_{a_0}$ respectively. Then mark the remaining tuples as trivial holes, ie. $(a_0, a_0)$.
    
    For time after $a_{i}$ to $a_{i+1}$, $i \geq 0$:
    \begin{enumerate}[noitemsep,topsep=0pt]
        \item If $H_{a_{i+1}} = H_{a_i}$, then there are exactly $\beta_0(H_t) b_{i+1}(G)$ non-trivial births of tuples of the form $(a_{i+1}, -)$. The non-trivial deaths that die at $a_{i+1}$ are exactly as follows - for each component $Y_j$ of $H_{a_i}$, and for each component $X_k$ of $G_{a_i}$, the component $(X_k, Y_j)$ dies for all $X_k$'s that die in the filtration $G_{a_i} \subset G_{a_{i+1}}$.
        \item If $G_{a_{i+1}} = G_{a_i}$ then a similar statement holds by symmetry.
        \item In the general case, we refine the filtration of $G \Box H$ from $a_i$ to $a_{i+1}$ into a two-step filtration:
        \[G_{a_{i}} \Box H_{a_{i}} \overset{(A)}{\subset} G_{a_{i+1}} \Box H_{a_{i}} \overset{(B)}{\subset} G_{a_{i+1}} \Box H_{a_{i+1}},\]
        and apply Step 1 to (A) and Step 2 to (B).\\
        
        \textbf{Note} that when applying Step 2 to (B), ``some of the non-trivial deaths produced are trivial deaths". The reason why is, although in \textbf{algorithmic times}, there may be vertices that are born in $G_{a_{i+1}} \Box H_{a_i}$ and die in $G_{a_{i+1}} \Box H_{a_{i+1}}$, Steps (A) and (B) both happened at the same time during \textbf{filtration time}.\\
        
        Thus, we need to retroactively go back and mark all vertices born in $G_{a_{i+1}} \Box H_{a_i}$ and die in $G_{a_{i+1}} \Box H_{a_{i+1}}$ as trivial pairs. In total, this incurs a subtraction of $b_{i+1}(G) d_{i+1}(H)$ (where $b_{i+1}(G)$ is the number of non-trivial births from $G_{a_i} \subset G_{a_{i+1}}$ and $d_{i+1}(H)$ is the number of non-trivial deaths from $H_{a_i} \subset H_{a_{i+1}}$), with clear vertex representatives, from the components produced to be counted as trivial holds.
        \item The remaining tuples produced but not in the discussion above are all trivial holes. 
    \end{enumerate}
\end{theorem}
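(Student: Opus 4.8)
The plan is to prove the theorem by induction on the filtration steps $a_0 < \dots < a_n$, maintaining the invariant that after processing $a_i$ the list records exactly the $0$-dimensional persistence pairs whose birth time is at most $a_i$, with all deaths occurring up to $a_i$ finalized. The foundational fact I would establish first is a \emph{product decomposition of connected components}: for any graphs $A,B$, the connected components of $A \Box B$ are precisely the products $C \Box D$ with $C \in \pi_0(A)$ and $D \in \pi_0(B)$, each of which is connected. This follows by projecting any path in $A \Box B$ onto walks in each factor (one inclusion) and by concatenating horizontal and vertical moves to lift a pair of paths (the converse). In particular $\beta_0(A \Box B) = \beta_0(A)\beta_0(B)$, the birth time of $C \Box D$ is $\max(\mathrm{birth}(C),\mathrm{birth}(D))$, and two product components merge exactly when one of their factors merges. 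Together with Proposition~\ref{prop::vertex_product_coloring} this reduces the whole problem to tracking births and elder-rule merges of product components. The initialization at $a_0$ is then immediate: every vertex of $G_{a_0} \Box H_{a_0}$ is born at $a_0$, exactly $\beta_0(G_{a_0})\beta_0(H_{a_0})$ survive as component representatives, and the remaining vertices are trivial holes $(a_0,a_0)$.

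Next I would treat the \textbf{single-sided step} (Steps~1 and~2 of the statement), say $H_{a_{i+1}} = H_{a_i}$ with only $G$ changing. Since the $H$-components $Y_1,\dots,Y_{\beta_0(H)}$ are fixed, for each $j$ the assignment $X \mapsto X \Box Y_j$ identifies the component structure of $G_t$ with that of the $Y_j$-slice of the product, and merges correspond bijectively. For births, a new product component must have a newly born $G$-component as its first factor, so each non-trivial $G$-birth at $a_{i+1}$ contributes one component $X \Box Y_j$ per $Y_j$, born at $\max(a_{i+1},\mathrm{birth}(Y_j)) = a_{i+1}$; this yields the claimed $\beta_0(H)\,b_{i+1}(G)$ births. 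For deaths, if $X_k$ dies into an older $X_{k'}$ at $a_{i+1}$, then $X_k \Box Y_j$ and $X_{k'} \Box Y_j$ merge, and since $\mathrm{birth}(X_k) > \mathrm{birth}(X_{k'})$ gives $\max(\mathrm{birth}(X_k),\mathrm{birth}(Y_j)) \ge \max(\mathrm{birth}(X_{k'}),\mathrm{birth}(Y_j))$, the elder rule kills $X_k \Box Y_j$. In the tie case $\mathrm{birth}(Y_j) \ge \mathrm{birth}(X_k)$ the two product components share a birth time, so the choice of which representative is declared dead does not change the recorded multiset of pairs; thus the count of $\beta_0(H)$ product deaths per $G$-death holds regardless.

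For the \textbf{general step} I would use the two-step refinement $G_{a_i}\Box H_{a_i} \subseteq G_{a_{i+1}}\Box H_{a_i} \subseteq G_{a_{i+1}}\Box H_{a_{i+1}}$, a genuine chain of subgraphs, applying the single-sided analysis to substep $(A)$ (only $G$ changes) and then to $(B)$ (only $H$ changes). The sole discrepancy with the true filtration is that the refinement imposes an artificial ordering between the $G$- and $H$-changes that actually occur at the same value $a_{i+1}$: any product component \emph{born in $(A)$ and killed in $(B)$} has true birth and death both equal to $a_{i+1}$ and must be recorded as a trivial hole, not an almost-hole. Such components are exactly those whose first factor is one of the $b_{i+1}(G)$ new $G$-components from $(A)$ paired with one of the $d_{i+1}(H)$ components $Y_j$ that die in $(B)$, producing the $b_{i+1}(G)\,d_{i+1}(H)$ correction. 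I expect this reconciliation of ``algorithmic time'' against ``filtration time'' to be the main obstacle: one must argue that these are the \emph{only} pairs whose persistence is spuriously inflated by the reordering, and that every other birth and death produced by the two substeps is order-independent. Once this is verified, the remaining vertices are accounted for as trivial holes, closing the inductive step and hence the theorem.
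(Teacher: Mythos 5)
Your proposal is correct, and it shares the paper's overall skeleton --- induction over the filtration steps, the two-step refinement $G_{a_i}\Box H_{a_i} \subset G_{a_{i+1}}\Box H_{a_i} \subset G_{a_{i+1}}\Box H_{a_{i+1}}$, and the $b_{i+1}(G)\,d_{i+1}(H)$ trivial-hole correction --- but the engine of your single-sided step is different from the paper's. The paper only establishes the numerical identity $\beta_0(A\Box B)=\beta_0(A)\beta_0(B)$ (via the K\"unneth formula applied to the $1$-skeleton of the topological product $A\times B$), identifies the deaths structurally (any merge not of the form $(X_{k_1},Y_j)$ with $(X_{k_2},Y_j)$ would force $H$ to change, a contradiction), and then recovers the birth count \emph{indirectly} by Betti-number arithmetic, from $b_{i+1}(G\Box H)-d_{i+1}(G\Box H)=\bigl(b_{i+1}(G)-d_{i+1}(G)\bigr)\beta_0(H_{a_i})$ together with the death count. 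You instead prove the stronger structural lemma that the connected components of $A\Box B$ are exactly the products of components (by projecting and lifting paths), from which the births, the merges, and the birth-time formula $\max(\mathrm{birth}(C),\mathrm{birth}(D))$ all follow directly; you also verify the elder rule explicitly, including the tie case, which the paper glosses over and which is what guarantees the recorded multiset of pairs is independent of tie-breaking. Your route is more elementary and self-contained (no algebraic topology), and it actually justifies the component representatives $(X_k,Y_j)$ asserted in the statement, whereas the paper's route is shorter but yields births only as a count. The one place you under-claim is the general step: you flag the ``only these pairs are spuriously inflated'' verification as an open obstacle, but your own single-sided analysis already delivers it --- in substep (B) only $H$-factor merges occur, so a component created in (A) can die in (B) precisely when its $G$-factor is one of the $b_{i+1}(G)$ new components and its $H$-factor is one of the $d_{i+1}(H)$ dying ones, and every other pair produced by the two substeps has its birth and death determined independently of the artificial ordering.
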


\begin{proof}[Proof of Theorem~\ref{thm::vertex_PH_prod_math}]
For two arbitrary simple graphs $A$ and $B$, we first observe note that $\beta_0(A \Box B) = \beta_0(A) \beta_0(B)$. This can either be seen directly from the definition or by noting that $A \Box B$ is exactly the 1-skeleton of the topological product $A \times B$ (equipped with the canonical cell structure). Since removing 2-cells does not change connectivity, it follows that $H_0(A \Box B) \cong H_0(A \times B)$, and the rank of the latter can be computed by the Kunneth formula to be $\beta_0(A) \beta_0(B)$. (Note that we are able to use the Kunneth formula here because the integral homologies of a (finite) graph are all finitely generated free abelian groups, so the rank does not change when switching to a field).

Let us prove the theorem by induction. Our inductive hypothesis will be that the $k$-th step of the algorithm agrees with the PH diagram at the stage $G_{a_k} \Box H_{a_k}$. Also note that Step 3 subsumes Step 1 and 2 as special cases, so in our proof we only need to check Step 3 and 4. By convention, we write $a_{-1} = -\infty$. When $k = -1$, clearly this holds since the lists are both empty. At $k = 0$, the discussion in the paragraph above shows that it agrees with the PH diagram calculation.

Now by induction suppose we have that the result matches with PH up to $k=i \geq 0$, then we wish to show this is also the case for $k = i+1$. By the algorithm, this step of the filtration is divided into
\[G_{a_{i}} \Box H_{a_{i}} \overset{(A)}{\subset} G_{a_{i+1}} \Box H_{a_{i}} \overset{(B)}{\subset} G_{a_{i+1}} \Box H_{a_{i+1}}.\]
Observe that this refinement clearly does not change the PH diagram (since the PH diagram is independent of the over of attaching new vertices and edges in one step of the color-based filtration). Thus, we only need to show that the algorithm the Theorem provides behaves the same as PH in both step (A) and (B).

Let us first examine (A). Now observe that $\beta_0(G_{a_{i+1}} \Box H_{a_{i}}) - \beta_0(G_{a_{i}} \Box H_{a_{i}}) = (\beta_0(G_{a_{i+1}}) - \beta_0(G_{a_{i}})) \beta_0(H_{a_{i}}) = (b_{i+1}(G) - d_{i+1}(G)) \beta_0(H_{a_{i}})$. On the other hand, we also have that $\beta_0(G_{a_{i+1}} \Box H_{a_{i}}) - \beta_0(G_{a_{i}} \Box H_{a_{i}}) = b_{i+1}(G \Box H) - d_{i+1}(G \Box H)$. Thus, if we could show that $d_{i+1}(G \Box H) = d_{i+1}(G) \beta_0(H_{a_{i}})$, it also follows that $b_{i+1}(G \Box H) = b_{i+1}(G) \beta_0(H_{a_{i}})$ (which proves the birth times). Since there are already $b_{i+1}(G) \beta_0(H_{a_{i}})$ components created in the filtration step (A) by the product, this would show that this constitutes all the non-trivial births (in particular this allows an identification of the component with the birth time).

Thus it suffices for us to show that the deaths occur exactly in the following form - for each component $Y_j$ of $H_{a_i}$, and for each component $X_k$ of $G_{a_i}$, the component $(X_k, Y_j)$ dies for all $X_k$'s that die in the filtration $G_{a_i} \subset G_{a_{i+1}}$. Now because the component from $H$ does not change in this step, it is clear that there are at least $d_{i+1}(G) \beta_0(H_{a_{i}})$ such many deaths coming from the description above. What we want to check is that these are the only deaths that occur. Since we accounted for all possible merging of $(X_{k_1}, Y_j)$ and $(X_{k_2}, Y_j)$, another death that could occur has to be of the form to merge $(X_{k}, Y_{j_1})$ and $(X_k, Y_{j_2})$ (by the virtue of box product). However, for such death to occur, it must mean that $H$ has changed in the filtration, which is a contradiction. Thus, we have identified all the deaths in the filtration step (A).

The case for (B) follows by symmetry, and we gave an extensive discussion in the theorem statement on why some non-trivial deaths would be marked as trivial deaths. Finally, by induction, we have proven the theorem.
\end{proof}

\textbf{Equivalence between Algorithm~\ref{alg:prod_vertex_level} and Algorithm~\ref{alg:prod_vertex_level_symmetric}:} It suffices to show that the list $\operatorname{nt\_births}$ produced for both Algorithm~\ref{alg:prod_vertex_level} and Algorithm~\ref{alg:prod_vertex_level_symmetric} are the same. Indeed, at index $i$, $\operatorname{nt\_births}$ produced by Algorithm~\ref{alg:prod_vertex_level_symmetric} is exactly the set $\{(v, w) \in G \Box H\ |\ ((1)\text{ or }(2))\text{ and }((3)\text{ or }(4)) \}$, where:
\begin{itemize}
    \item (1) $v$ is born non-trivially at $a_i$ and $w$ is still alive at $a_{i-1}$.
    \item (2) $v$ is still alive at $a_{i}$ and $w$ is born non-trivially at $a_{i}$.
    \item (3) $v$ is still alive at $a_{i-1}$ and $w$ is born non-trivially at $a_{i}$.
    \item (4) $v$ is born non-trivially at $a_{i}$ and $w$ is still alive at $a_{i}$.
\end{itemize}
On the other hand, the list $\operatorname{nt\_births}$ produces for Algorithm~\ref{alg:prod_vertex_level} is exactly the set $\{(v, w) \in G \Box H\ |\ (\dagger)\}$, where:
\begin{itemize}
    \item $(\dagger)$ is the condition that - (1) or (2) holds, but excluding the pairs $(v, w)$ where $v$ is born non-trivially at $a_i$ and $w$ dies at $a_i$.
\end{itemize}

Now suppose $(v, w)$ is in $\operatorname{nt\_births}$ for Algorithm~\ref{alg:prod_vertex_level_symmetric}, so (1) or (2) holds. Suppose for contradiction $v$ is born non-trivially at $a_i$ and $w$ dies at $a_i$, then the pair $(v, w)$ is false for the statement (3) or (4), since both (3) and (4) require $w$ to still be alive at $a_i$. Thus, $(v, w)$ is in $\operatorname{nt\_births}$ for Algorithm~\ref{alg:prod_vertex_level}.\\

Now suppose $(v, w)$ is in $\operatorname{nt\_births}$ for Algorithm~\ref{alg:prod_vertex_level}, so (1) or (2) holds. We wish to show that $(v, w)$ also satisfies (3) or (4). Indeed, since $(v, w)$ is not excluded, this means that either (5) $v$ is not born non-trivially at $a_i$ or (6) $w$ does not die at $a_i$. In this case, we see that:
\begin{itemize}
    \item If (1) holds, then (5) is not possible, so (6) has to hold, which implies (4) holds.
    \item If (2) holds, then (6) always holds, and $w$ is born non-trivially at $a_i$. If $v$ is born before $a_i$, then (3) holds. If $v$ is born at $a_i$, it is a non-trivial birth by (2) and (4) holds since $w$ is still alive at $a_i$.
\end{itemize}

\textbf{For Runtime:} We can compute the two individual persistence diagrams in $O(m_G \log(m_G) + n_G)$ and $O(m_H \log(m_H) + n_H)$ time. We can pre-compute the relevant data from the diagrams. If we assume the coloring set has constant size, then filtration steps are also constant, so this can be done in the runtime dominated by the result of the theorem. We initialize the persistence diagram with $n_G n_H$ tuples with empty entries. In this case, when we loop through the filtration steps, our (possibly unoptimized) algorithm implements the intermediate steps in the Theorem by looping through the $n_G n_H$ tuples. This yields the runtime as $O(n_G n_H + m_G \log m_G + m_H \log m_H + n_G + n_H)$.
\end{proof}

\begin{algorithm}
\caption{Computing the 0-dim PH Diagrams in Product of Vertex-Level Filtrations (Symmetric)}\label{alg:prod_vertex_level_symmetric}
\hspace*{\algorithmicindent}\textbf{Input:} The vertex-level filtrations $(G, f_G)$ and $(H, f_H)$\\
\hspace*{\algorithmicindent}\textbf{Output:} The $0$-dim PH diagram for $(G \Box H, f_G \Box f_H)$
\begin{algorithmic}[1]
\State impl$_G$, impl$_H$ $\gets$ 0-dim PH diagrams for $(G, f_G)$ and $(H, f_H)$.
\State $\operatorname{filtration\_steps} \gets $ $\{-\infty\} \cup$ filtration steps for $f_G$ and $f_H$ $\cup \{+\infty\}$, and ordered.
\State births$_G$, births$_H$ $\gets$ list of tuples born non-trivially in $G$ (resp. $H$) at each time in  $\operatorname{filtration\_steps}$.
\State betti$_G$, betti$_H$ $\gets$ list of vertices alive representing components of the subgraph of $G$ (resp. $H$) at each time in  $\operatorname{filtration\_steps}$.
\State deaths$_G$, deaths$_H$ $\gets$ list of tuples that die non-trivially in $G$ (resp. $H$) at each time in  $\operatorname{filtration\_steps}$.
\State Remove $\{-\infty\}$ from $\operatorname{filtration\_steps}$.
\State \textbf{output} $\gets$ [$(i, j, \max(f_G(i), f_H(i)), \operatorname{None})$ \textbf{for} $(i, j)$ in $V(G) \times V(H)$].
\For{i in [0, len($\operatorname{filtration\_steps}$)]}
    \State $a_i \gets \operatorname{filtration\_steps}[i]$.
    \For{v in deaths$_H[i+1]$} \Comment{Marking Non-Trivial Deaths}
    \State Mark all tuples $p$ in \textbf{output} with $p[1] = v, p[2] < a_i, p[3] = \operatorname{None}$ with $p[3] \gets a_i$. 
    \EndFor
    \For{w in deaths$_G[i+1]$} \Comment{Marking Non-Trivial Deaths}
    \State Mark all tuples $p$ in \textbf{output} with $p[0] = w, p[2] < a_i, p[3] = \operatorname{None}$ with $p[3] \gets a_i$. 
    \EndFor
    \State $\operatorname{nt\_births1} \gets \operatorname{births}_G[i+1] \times \operatorname{betti}_H[i] + \operatorname{betti}_G[i+1] \times \operatorname{births}_H[i+1]$.
    \State $\operatorname{nt\_births2} \gets \operatorname{betti}_G[i] \times \operatorname{births}_H[i+1] + \operatorname{births}_G[i+1] \times \operatorname{betti}_H[i+1]$.
    \State $\operatorname{nt\_births} \gets \operatorname{nt\_births1} \cap \operatorname{nt\_births2}$.
    \State Mark all tuples $p$ in \textbf{output} with $(p[0], p[1]) \notin \operatorname{nt\_births}, p[2] = a_i, p[3] = \operatorname{None}$ with $p[3] \gets a_i$. \Comment{Marking Trivial Deaths}
\EndFor
    \State \Return [(p[2], p[3]) \textbf{for} p in \textbf{output}]
\end{algorithmic}
\end{algorithm}

Now we will look of the algorithm for the product of two edge-level filtrations. Let us first rewrite the algorithmic description in Algorithm~\ref{alg:prod_edge_level} into a more mathematical formulation below.
\begin{theorem}\label{thm::betti_number_prod_math}
  In the set-up of Proposition~\ref{prop::edge_product_graph}, mark all the time $G_t \square H_t$ changes as $a_0 = 0 < a_1 < ... < a_n$. The $0$-dimensional persistence diagram of $(G \square H)_t = G_t \square H_t$ with respect to $f$ can be computed by the following procedure:
\begin{enumerate}[noitemsep,topsep=0pt]
    \item All vertices are born at time $0$.
    \item For $i \geq 0$, for each filtration step $G_{a_i} \Box H_{a_i} \subseteq G_{a_{i+1}} \Box H_{a_{i+1}}$, we refine the step as
    \[G_{a_{i}} \Box H_{a_{i}} \overset{(A)}{\subset} G_{a_{i+1}} \Box H_{a_{i}} \overset{(B)}{\subset} G_{a_{i+1}} \Box H_{a_{i+1}}.\]
    \item In Step (A), there are $h^b_{i} \cdot g^d_{i+1}$ deaths. Here $h^b_i$ is the number of vertices alive in $H_{a_{i}}$ and $g^d_{i+1}$ are the number of vertices that die in the step $G_{a_{i}} \subset G_{a_{i+1}}$.
    \item In Step (B), there are $g_{i+1}^b \cdot h_{i+1}^d$ deaths. Here $g^b_i$ is the number of vertices alive in $G_{a_{i+1}}$ and $h_{i+1}^d$ is the number of vertices that die in the step $H_{a_{i}} \subset H_{a_{i+1}}$.
    \item The remaining vertices after $a_n$ all die at $\infty$.
\end{enumerate}
The runtime is $O(\max(n_G\log n_G+n_H\log n_H$, $m_G \log m_G + n_G$, $m_H \log m_H + n_H))$.
\end{theorem}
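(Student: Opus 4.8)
The plan is to mirror the inductive argument used for the vertex case (Theorem~\ref{thm::vertex_PH_prod_math}), while exploiting the crucial simplification that an edge-level filtration has \emph{all} vertices born at time $0$. Consequently there are no non-trivial births after $t = 0$, so the $0$-dimensional diagram is completely determined by the multiset of death times, and a death occurs exactly when two connected components merge. This eliminates both the bookkeeping of component representatives and the ``trivial death'' corrections that complicated the vertex case: I only need to count, at each critical time, how much $\beta_0$ of the product drops. Here I read the quantities in the statement as $h^b_i = \beta_0(H_{a_i})$ (the number of living representatives, i.e. components, of $H_{a_i}$) and $g^d_{i+1} = \beta_0(G_{a_i}) - \beta_0(G_{a_{i+1}})$ (the number of deaths in the step $G_{a_i}\subset G_{a_{i+1}}$), and symmetrically for the roles of $G$ and $H$ swapped.

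The computational backbone is the product formula $\beta_0(A \Box B) = \beta_0(A)\,\beta_0(B)$, which I would justify exactly as in the vertex proof: $A \Box B$ is the $1$-skeleton of $A \times B$, deleting $2$-cells does not change connectivity, and the rank of $H_0$ of the product is the product of the ranks by Künneth (the integral homology of a finite graph being free). With this in hand, I refine each step into
\[
G_{a_i} \Box H_{a_i} \overset{(A)}{\subset} G_{a_{i+1}} \Box H_{a_i} \overset{(B)}{\subset} G_{a_{i+1}} \Box H_{a_{i+1}},
\]
noting that the refinement leaves the diagram unchanged because every edge inserted in this step carries the same filtration value $a_{i+1}$ (by Proposition~\ref{prop::edge_product_graph}), so the insertion order is immaterial for the recorded death times. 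In substep $(A)$ the $H$-factor is frozen at $H_{a_i}$, so the number of deaths equals the drop $\beta_0(H_{a_i})\bigl(\beta_0(G_{a_i}) - \beta_0(G_{a_{i+1}})\bigr) = h^b_i\, g^d_{i+1}$; substep $(B)$ is symmetric and yields $g^b_{i+1}\, h^d_{i+1}$. Since every component was born at $0 < a_{i+1}$, all of these are genuine (non-trivial) deaths recorded at $a_{i+1}$.

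For the induction, the base case is $a_0 = 0$, where $G_0$ and $H_0$ are edgeless (as $f_G, f_H > 0$), giving $n_G n_H$ isolated vertices all born at $0$, matching Step~1. Assuming the death multiset up to $a_i$ matches the true $0$-dimensional PH, the substep counts above show it still matches at $a_{i+1}$: the two contributions telescope to $\beta_0(G_{a_i})\beta_0(H_{a_i}) - \beta_0(G_{a_{i+1}})\beta_0(H_{a_{i+1}})$, which is exactly the true number of mergings at $a_{i+1}$, and no births intervene to spoil this. Finally, the $\beta_0(G_{a_n})\beta_0(H_{a_n})$ components surviving past $a_n$ never merge and hence die at $\infty$, matching Step~5.

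For the runtime, I would observe that the only genuine work is (i) computing the two individual $0$-dimensional diagrams of $G$ and $H$ by union--find, costing $O(m_G \log m_G + n_G)$ and $O(m_H \log m_H + n_H)$, from which the sequences $\beta_0(G_{a_i})$ and $\beta_0(H_{a_i})$ are read off, and (ii) sorting the per-time death counts of the two diagrams, costing $O(n_G \log n_G + n_H \log n_H)$. The main loop then visits each of the at most $n$ critical times once and evaluates $h^b_i g^d_{i+1}$ and $g^b_{i+1} h^d_{i+1}$ in $O(1)$ from the precomputed $\beta_0$ values, so the total is the maximum of these three contributions, as claimed. One point I would flag explicitly is that this bound describes the \emph{compressed} diagram (a list of multiplicity--time pairs); fully expanding it into the $n_G n_H$ individual pairs would additionally cost $O(n_G n_H)$. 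The main obstacle here is conceptual rather than computational: cleanly arguing that the refinement into $(A)$ and $(B)$ introduces no spurious deaths and preserves death times, which is precisely what the absence of post-zero births guarantees.
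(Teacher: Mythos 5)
Your proof is correct, and while it follows the paper's skeleton (the same refinement into substeps (A) and (B), induction over critical times, and an identical runtime breakdown), it justifies the central death count by a genuinely different key lemma. The paper counts deaths in substep (A) combinatorially: it decomposes the frozen factor $H_{a_i}$ into connected components $C_1,\dots,C_n$, writes $G_{a_i}\Box H_{a_i}=\bigsqcup_j G_{a_i}\Box C_j$, invokes additivity of deaths over disjoint unions, and then argues that components $D_j,D_k$ of $G_{a_i}$ merge in $G_{a_i}\subset G_{a_{i+1}}$ if and only if $D_j\Box C$ and $D_k\Box C$ merge---yielding exactly $g^d_{i+1}$ deaths per component of $H_{a_i}$. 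You instead route the count through the product formula $\beta_0(A\Box B)=\beta_0(A)\beta_0(B)$ (justified by the same K\"unneth argument the paper itself uses in the vertex case, Theorem~\ref{thm::vertex_PH_prod_math}) combined with the observation that, since all vertices are born at time $0$ in an edge-level filtration, the number of deaths in any step equals the drop in $\beta_0$; your telescoping identity $h^b_i g^d_{i+1}+g^b_{i+1}h^d_{i+1}=\beta_0(G_{a_i})\beta_0(H_{a_i})-\beta_0(G_{a_{i+1}})\beta_0(H_{a_{i+1}})$ then certifies that the algorithm's total at each critical time matches the true number of mergings. What your route buys is lighter bookkeeping: you never need to exhibit \emph{which} components merge, only \emph{how many}, and the absence of post-zero births is exactly what makes the count sufficient for the $0$-dimensional diagram. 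What the paper's route buys is a self-contained combinatorial argument that makes the merge correspondence explicit without invoking the product formula at this point. Your runtime analysis coincides with the paper's (individual diagrams by union--find, sorting death counts, a linear scan), and your caveat that the stated bound refers to the compressed diagram---expanding to all $n_G n_H$ pairs costs an extra $O(n_G n_H)$, as the paper's Algorithm~\ref{alg:prod_edge_level} literally does---is a fair point that the paper glosses over.
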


Evidently, we can see that Algorithm~\ref{alg:prod_edge_level} outlined in Theorem~\ref{thm::betti_number_prod} is equivalent to the description of Theorem~\ref{thm::betti_number_prod_math}. Thus, it suffices for us to prove Theorem~\ref{thm::betti_number_prod_math}. Before this, we first give a concrete example of using Theorem~\ref{thm::betti_number_prod_math}.
\begin{example}\label{exp::edge_PH_algo_example}
    Consider the setup of $G = H$ given in Figure~\ref{fig:example_edge_prod_fil}, and so $a_0 = 0, a_1 = 1, a_2 = 2$. Note that $G_{0}$ is the discrete graph with 2 red nodes and 1 blue node. $G_{1}$ connects an edge between 1 red node and 1 blue node. $G_{2}$ connects the last remaining edge.
    
    We wish to compute the PH of the product of edge filtrations on $G \Box H$. At $t = 0$, all vertices are spawned, so there are 9 pairs of tuples of the form $(0, -)$. At $t = 1$, we see that $5$ vertex deaths occur during this time. This number can be seen from the algorithm as
    \[h^b_0 \cdot g^d_1 + g^b_1 \cdot h^d_1 = (3) \cdot (1) + (2) \cdot (1) = 5.\]
    Thus, we mark 5 of the 9 tuples as $(0, 1)$. At $t = 2$, we see that $3$ vertex deaths occur during this time. This can be computed from the algorithm as
    \[h^b_1 \cdot g^d_2 + g^b_2 \cdot h^d_2 = (2) \cdot (1) + (1) \cdot (1) = 3.\]
    Thus, we mark 3 of the remaining tuples as $(0, 2)$. For $t > 2$, there is 1 vertex left, so we mark the last tuple as $(0, \infty)$.
\end{example}

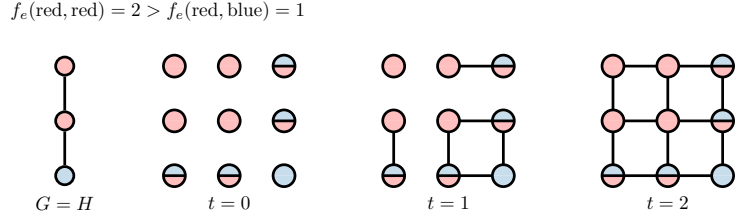
\begin{figure}[htb]
  \centering
  \begin{adjustbox}{width=0.7\textwidth}
  \begin{tikzpicture}
[rb/.pic={
\fill[fill=blue!25] (0,0) -- (0.2cm,0cm) arc [start angle=0, end angle=180, radius=0.2cm] -- cycle;
\fill[fill=red!25] (0,0) -- (-0.2cm,0cm) arc [start angle=180, end angle=360, radius=0.2cm] -- cycle;
\draw[color=black, line width=0.5mm] (0,0) circle (0.2cm);
\draw[color=black, line width=0.5mm] (-0.2,0) -- (0.2,0);
},
rr/.pic={
\fill[fill=red!25] (0,0) -- (0.2cm,0cm) arc [start angle=0, end angle=180, radius=0.2cm] -- cycle;
\fill[fill=red!25] (0,0) -- (-0.2cm,0cm) arc [start angle=180, end angle=360, radius=0.2cm] -- cycle;
\draw[color=black, line width=0.5mm] (0,0) circle (0.2cm);
},
bb/.pic={
\fill[fill=blue!25] (0,0) -- (0.2cm,0cm) arc [start angle=0, end angle=180, radius=0.2cm] -- cycle;
\fill[fill=blue!25] (0,0) -- (-0.2cm,0cm) arc [start angle=180, end angle=360, radius=0.2cm] -- cycle;
\draw[color=black, line width=0.5mm] (0,0) circle (0.2cm);
}]

    \node at (1.7, 2.5) {\normalsize $f_e(\operatorname{red}, \operatorname{red}) = 2 > f_e(\operatorname{red}, \operatorname{blue}) = 1$};
    
    \node[draw, circle, fill=red!25, line width=0.5mm, minimum size=0.1cm] (A) at (0,1.5) { };
    \node[draw, circle, fill=red!25, line width=0.5mm, minimum size=0.1cm] (C) at (0,0.5) { };
    \node[draw, circle, fill=blue!25, line width=0.5mm, minimum size=0.1cm] (D) at (0,-0.5) { };
    \node at (0, -1) {\normalsize $G=H$};
    \draw[line width=0.5mm] (A) -- (C);
    \draw[line width=0.5mm] (C) -- (D);

    \pic [local bounding box=A1] at (2,1.5) {rr};
    \pic [local bounding box=A2] at (2,0.5) {rr};
    \pic [local bounding box=A3] at (2,-0.5) {rb};

    \pic [local bounding box=B1] at (3,1.5) {rr};
    \pic [local bounding box=B2] at (3,0.5) {rr};
    \pic [local bounding box=B3] at (3,-0.5) {rb};

    \pic [local bounding box=C1] at (4,1.5) {rb};
    \pic [local bounding box=C2] at (4,0.5) {rb};
    \pic [local bounding box=C3] at (4,-0.5) {bb};

    \node at (3, -1) {\normalsize $t = 0$};

    \pic [local bounding box=A1] at (2+4,1.5) {rr};
    \pic [local bounding box=A2] at (2+4,0.5) {rr};
    \pic [local bounding box=A3] at (2+4,-0.5) {rb};

    \pic [local bounding box=B1] at (3+4,1.5) {rr};
    \pic [local bounding box=B2] at (3+4,0.5) {rr};
    \pic [local bounding box=B3] at (3+4,-0.5) {rb};

    \pic [local bounding box=C1] at (4+4,1.5) {rb};
    \pic [local bounding box=C2] at (4+4,0.5) {rb};
    \pic [local bounding box=C3] at (4+4,-0.5) {bb};

    \node at (3+4, -1) {\normalsize $t = 1$};
    \draw[line width=0.5mm] (B2) -- (C2);
    \draw[line width=0.5mm] (B2) -- (B3);
    \draw[line width=0.5mm] (C2) -- (C3);
    \draw[line width=0.5mm] (B3) -- (C3);

    \draw[line width=0.5mm] (A2) -- (A3);
    \draw[line width=0.5mm] (B1) -- (C1);

    \pic [local bounding box=A1] at (2+8,1.5) {rr};
    \pic [local bounding box=A2] at (2+8,0.5) {rr};
    \pic [local bounding box=A3] at (2+8,-0.5) {rb};

    \pic [local bounding box=B1] at (3+8,1.5) {rr};
    \pic [local bounding box=B2] at (3+8,0.5) {rr};
    \pic [local bounding box=B3] at (3+8,-0.5) {rb};

    \pic [local bounding box=C1] at (4+8,1.5) {rb};
    \pic [local bounding box=C2] at (4+8,0.5) {rb};
    \pic [local bounding box=C3] at (4+8,-0.5) {bb};
    \node at (3+8, -1) {\normalsize $t = 2$};

    \draw[line width=0.5mm] (A1) -- (B1);
    \draw[line width=0.5mm] (A2) -- (B2);
    \draw[line width=0.5mm] (A3) -- (B3);

    \draw[line width=0.5mm] (B1) -- (C1);
    \draw[line width=0.5mm] (B2) -- (C2);
    \draw[line width=0.5mm] (B3) -- (C3);

    \draw[line width=0.5mm] (A1) -- (A2);
    \draw[line width=0.5mm] (A2) -- (A3);
    \draw[line width=0.5mm] (B1) -- (B2);
    \draw[line width=0.5mm] (B2) -- (B3);
    \draw[line width=0.5mm] (C1) -- (C2);
    \draw[line width=0.5mm] (C2) -- (C3);

\end{tikzpicture}
  \end{adjustbox}
  \caption{The product of the edge filtrations on $G = H$ given by $f_e(\operatorname{red}, \operatorname{red}) = 2 > f_e(\operatorname{red},\operatorname{blue}) = 1$. See Example~\ref{exp::edge_PH_algo_example} for how the algorithm in Theorem~\ref{thm::betti_number_prod} is used to compute the PH of this filtration.}
  \label{fig:example_edge_prod_fil}
\end{figure}

We now proceed with the proof.
\begin{proof}[Proof of Theorem~\ref{thm::betti_number_prod_math}]
    \textbf{For Algorithm Output:} Step (1) is obvious. By the well-definedness of PH, the number of deaths in the filitration $G_{a_i} \Box H_{a_i} \subset G_{a_{i+1}} \Box H_{a_{i+1}}$ is the sum of deaths in the two step refinement given. Thus, it suffices for us to verify Step (A) gives the correct number of deaths, since the case for Step (B) follows by a symmetric argument.\\
    
    For the ease of notations, let us write $n = h_i^b$ and $m = g_{i+1}^d$. We seek to show that the number of deaths in the step $G_{a_{i}} \Box H_{a_{i}} \overset{(A)}{\subset} G_{a_{i+1}} \Box H_{a_{i}}$ is $nm$.\\

    Indeed, since the number of vertices still alive in $H_{a_i}$ corresponds to its number of connected components, we can write $C_1, ..., C_n$ as the connected components of $H_{a_i}$. Now we have that,
    \[G_{a_{i}} \square H_{a_i} = G_{a_i} \square (\bigsqcup_{j=1}^n C_j) = \bigsqcup_{j=1}^n G_{a_i} \square C_j \text{ and } G_{a_{i+1}} \square H_{a_i} = \bigsqcup_{j=1}^n G_{a_{i+1}} \square C_j.\]
    The number of vertices that dies in Step (A) is additive under this disjoint union, so it suffices for us to show that the number of vertices that die in the step $G_{a_i} \square C_j \subset G_{a_{i+1}} \square C_j$ is $m$. Now indeed, write $D_1, ..., D_\ell$ as the connected components of $G_{a_i}$. The connected components $D_j$ and $D_k$ are merged in the step $G_{a_i} \subset G_{a_{i+1}}$ if and only if the connected components $D_j \square C_i$ and $D_k \square C_i$ are merged. This proves the desired claim.\\

    \textbf{For Runtime:} We see that Algorithm~\ref{alg:prod_edge_level} does not need to compute the graph product and is dependent on the PH for $G$ and PH for $H$ separately. As we discussed in the paragraph under Theorem~\ref{thm::vertex_PH_prod}, the runtime to precompute the PH diagrams for $G$ and $H$ respectively are $O(m_G \log(m_G) + n_G)$ and $O(m_H \log(m_H) + n_H)$. We can precompute the number of non-trivial deaths and number of vertices still alive in $O(n_G \log(n_G))$ and $O(n_H \log(n_H))$ time by sorting the two (0-dim) PH diagrams by death-times and count accordingly. Then we loop through both lists in parallel to compute the diagram (which is in linear time). Thus, the runtime of Theorem 5 would be $O(\max(n_G \log(n_G)+n_H \log(n_H), m_G \log(m_G) + n_G, m_H \log(m_H) + n_H))$.
\end{proof}

Finally, we discuss the algorithm for the 1-dimensional PH diagrams of product of filtrations.
\begin{proof}[Proof of Proposition~\ref{prop::edge_ph_prod}]
Since a cycle born in a graph never dies, the 1-dimensional persistence diagrams are always of the form $(b(e), \infty)$ where $e$ represents the edge that creates an independent cycle. To track the evolution of the 1-dimensional persistence diagrams for the product of any filtration, it suffices for us to track how the first Betti number changes. The number of tuples created is the difference between successive $\beta_1$'s.\\

Thus, it suffices for us to verify the formula for $\beta_1(A \Box B)$. The box product $A \Box B$ is exactly the 1-skeleton of the topological product $G \times H$, endowed with the canonical CW complex structure from both $A$ and $B$. The Euler characteristic of the product of two finite CW complexes is the product of their Euler characteristics, so we have that $\chi(A \times B) = \chi(A) \chi(B)$ (see Exercise 2.2.20 of \cite{hatcher2002algebraic}). On the other hand, since $A \Box B$ is the 1-skeleton of $A \times B$, we have that 
\[\chi(A \Box B) = \chi(A \times B) - \# E(A) \# E(B).\]
On the other hand we have that
\[\chi(A \Box B) = \beta_0(A \Box B) - \beta_1(A \Box B),\]
and hence we have that
\[\beta_1(A \Box B) = \# E(A) \# E(B) + \beta_0(A \Box B) - \chi(A) \chi(B).\]
Since removing 2-cells do not affect the connectivity of the complex, we have that $\beta_0(A \Box B) = \beta_0(A \times B) = \beta_0(A) \beta_0(B)$, where the last equality follows from Kunneth's formula. This concludes the proof.\\

\textbf{For Runtime}: For $G$ and $H$ graphs, we see that the quantity $\beta_1(G \Box H)$ can be expressed in terms of (1) the number of edges, (2) the number of connected components, and (3) the number of vertices of G and H respectively. To compute the 1-dim persistence diagram of a product filtration, it suffices for us to keep track of how (1), (2), (3) changes over time. These can all be done in a time dominated by $O(m_G \log m_G + n_G + m_H \log m_H + n_H)$ (ie. the time to compute the persistence diagrams on the two separate graphs).
\end{proof}

\section{A Max Version of Persistent Homology}\label{sec::max_PH}

In this section, we discuss a max version of persistent homology where the edge-level filtration function $f_e$ in PH is determined by the vertex-level filtration function $f_v$. We will show that the edge-level PH of max PH can be computed directly from the vertex-level PH. More precisely, we reformulate the first part of Proposition~\ref{prop::max_PH} more generally to the following proposition:
\begin{proposition}\label{prop::max_PH_0_appendix}
    Let $f_v$ be a vertex-level filtration on $G$. Consider an edge-level filtration $f_e$ on $G$ with
    \begin{enumerate}[noitemsep,topsep=0pt]
        \item $F_e(w)$ is constant for all $w \in V(G)$ and is less than the minimum of $F_v$ on $V(G)$.
        \item $F_e(w, w') \coloneqq \max(f_v(w), f_v(w'))$ for all $(w, w') \in E(G)$.
    \end{enumerate}
    The vertex death-times of $f_e$ are the same as the death-times of $f_v$. Furthermore, if the edge-level filtration is given a decision rule to kill off the vertex that has higher value under $f_v$ (and we arbitrarily choose which vertex to kill off in a tie), it may be chosen so that the same vertices die at the same time for both filtrations.
\end{proposition}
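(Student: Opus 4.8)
The plan is to exploit the fact that, at every threshold $t$, the two filtrations produce graphs with \emph{identical edge sets} and differing only in their isolated vertices. Indeed, an edge $e=(w,w')$ lies in $G_t^{f_e}$ iff $\max(F_v(w),F_v(w'))\le t$, which is exactly the condition for $e$ to lie in $G_t^{f_v}$, since an edge enters the vertex filtration precisely when its later endpoint is born. The only discrepancy is that in the edge filtration all vertices are present from the outset (at the constant value $F_e(w)<\min F_v$), whereas in the vertex filtration a vertex $v$ appears only at $F_v(v)$. Because any edge incident to a vertex $v$ with $F_v(v)>t$ has value $>t$, such a vertex is still isolated in $G_t^{f_e}$. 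Hence $G_t^{f_e}$ is obtained from $G_t^{f_v}$ by adjoining the not-yet-born vertices as isolated points, giving the bookkeeping identity
\[\beta_0(G_t^{f_e}) = \beta_0(G_t^{f_v}) + \#\{v \in V(G) : F_v(v) > t\}.\]

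First I would establish the multiset statement by a counting argument. Fix a critical value $t$; crossing it adds $k=\#\{v:F_v(v)=t\}$ new vertices in the vertex filtration together with the set $E_t$ of all edges whose larger endpoint value equals $t$, producing, say, $r$ merges (edges that reduce the component count). Then $\beta_0(G_t^{f_v})-\beta_0(G_{t^-}^{f_v})=k-r$, while in the edge filtration no vertices are added, so $\beta_0(G_t^{f_e})-\beta_0(G_{t^-}^{f_e})=-r'$ with $r'$ the number of deaths at $t$. Substituting the identity above and using that the unborn-vertex count drops by exactly $k$ across $t$ forces $r'=r$, so the number of deaths at each finite time agrees. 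Since each filtration has exactly $\#V(G)$ many $0$-dimensional pairs, of which $\beta_0(G)$ survive to $\infty$, the multisets of death times (including $\infty$, and including trivial holes) coincide. I would emphasize here that in the edge filtration no death is trivial, whereas in the vertex filtration a vertex may be born and die at the same $t$; this changes the birth coordinate but not the death-time multiset.

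For the ``furthermore'' I would build an explicit coupling of the two union-find computations in the sense of Appendix A of \citet{immonen2023going}. Process the edges in a single common order, sorted by filtration value with a fixed tie-break, and track as the representative of each component its vertex of minimum $f_v$-value. The key invariant is that just before processing any edge $e=(w,w')$ of value $t$, the components containing $w$ and $w'$ are identical vertex sets in both filtrations: both $w,w'$ satisfy $F_v\le t$, and by the observation above every vertex reachable from them through already-processed edges also satisfies $F_v\le t$, so the unborn isolated vertices of the edge filtration never intrude. Consequently the two components carry the same minimum-$f_v$ representatives. When they merge, the elder rule in the vertex filtration kills the component of larger birth time, i.e. of larger minimum $f_v$-value, which is precisely the higher-$f_v$ representative that the prescribed decision rule kills in the edge filtration (with the same arbitrary choice on $f_v$-ties). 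Hence the same vertex is killed at the same time $t$ in both filtrations, and iterating over all merges yields the claim.

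The main obstacle I anticipate is the careful bookkeeping at a single critical value where several vertices and edges enter simultaneously: one must fix one processing order (e.g. all vertices of value $t$ first as singletons, then the edges of $E_t$ in a fixed order) and reuse it verbatim on both sides so that the representative surviving each merge is literally the same vertex, and one must check that ties in $f_v$ are resolved identically in both filtrations. Once the common processing order and the minimum-$f_v$ representative invariant are in place, matching the elder rule with the stated ``kill the higher-$f_v$ vertex'' rule is direct, and the $\beta_0$ identity makes the counting in the first part immediate.
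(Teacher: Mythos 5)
Your proof is correct. Its engine --- the observation that $G_t^{f_e}$ and $G_t^{f_v}$ have identical edge sets at every threshold $t$, with the not-yet-born vertices present on the edge side only as isolated points --- is also the engine of the paper's proof, and your treatment of the ``furthermore'' (fix one common processing order within each critical value and couple the two union-find computations) matches the paper's induction in substance. The difference is in the decomposition. The paper proves both claims at once by a single induction over the critical values: at $t=a_i$ both filtrations attach the same edges $\bigcup_{j\le i} E_G(a_j,a_i)$, so by the inductive hypothesis the same components are present and the same vertices can be chosen to die; equality of death counts is a byproduct of that coupling. You instead obtain the multiset statement by a standalone counting argument --- the identity $\beta_0(G_t^{f_e}) = \beta_0(G_t^{f_v}) + \#\{v : F_v(v) > t\}$, together with the fact that the unborn count drops by exactly $k$ across a critical value, forces the merge counts $r=r'$ to agree, and the totals ($\#V(G)$ pairs, $\beta_0(G)$ real holes on each side) settle the deaths at $\infty$ --- and only then build the coupling, with the explicit invariant that every component is represented by its minimum-$f_v$ vertex, so that the elder rule on the vertex side (kill the component of larger birth time) and the prescribed rule on the edge side (kill the higher-$f_v$ representative) provably select the same vertex, with ties broken identically. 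Your first argument buys robustness: the death-time multisets are shown to agree independently of any processing order or decision rule. Your second makes rigorous what the paper compresses into ``clearly we can still choose the same vertices to kill off'': the identical-components-and-identical-representatives invariant is precisely the bookkeeping that claim needs.
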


\begin{proof}
    To show that the death-times of $f_e$ are the same as that of $f_v$, it suffices to check that at each time step $t$, the number of vertices that die on both filtrations (notationally we write $D_v(t)$ and $D_e(t)$) are the same. Indeed, write the possible values of $f_v$ in order as $a_1 < ... < a_n$. Outside of this list of values, we clearly have that $D_v(t) = D_e(t) = 0$.\\
    
    At $t = a_1$, the vertex level filtration creates $G(a_1)$ (see Section~\ref{sec::ec} to see the notation) and all the deaths can only happen in $G(a_1)$. At the same time, observe that all the edges spawned at time $t = a_1$ for edge level filtration happens only in the subgraph of $V_G(a_1)$ and gives the full subgraph $G(a_1)$. Thus, we have that $D_v(a_1) = D_e(a_1)$. Furthermore in the first step, we could clearly choose the same vertices to kill off for both filtrations.\\

    Now suppose by induction we have that at $t = a_{i-1}$, the previous death times have been the same and that we have chosen the same vertices to kill off for both filtrations. Now at $t = a_i$, the vertex level filtration creates $\bigcup_{j=1}^i E_G(a_j, a_i)$, and the edge level filtration also adds all the edges in $\bigcup_{j=1}^i E_G(a_j, a_i)$. For our purposes, the vertex level filtration will first spawn all of $G(a_i)$ first (before adding the edges in $E_G(a_j, a_i)$ for $j < i$), and the edge level filtration will first add in all of the edges in $G(a_i)$ similarly. In this case, clearly we can still choose the same vertices to kill off for both filtrations by the inductive hypothesis. Doing this up to $a_n$ concludes the proof.
\end{proof}

The second part of Proposition~\ref{prop::max_PH} can be more generally reformulated to the following statement.
\begin{proposition}
    Let $f_v$ be a vertex-level filtration on $G$. Consider the same edge-level filtration $f_e$ given in Proposition~\ref{prop::max_PH_0_appendix}. The cycle birth times of the vertex-level filtration are the same as the cycle birth-times of the edge-level filtration. 
\end{proposition}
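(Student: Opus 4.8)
The plan is to reduce the comparison of $1$-dimensional features to the observation that the two filtrations agree on edges and differ only by isolated vertices, which are invisible to $H_1$. First I would record the key structural fact: at every time $t$, the \emph{edge sets} of $G_t^{f_v}$ and $G_t^{f_e}$ coincide. Indeed, for an edge $e=(w,w')$ one has $F_v(e)=\max(f_v(w),f_v(w'))$ by Definition~\ref{def:coloring_fil}, while $F_e(e)=\max(f_v(w),f_v(w'))$ by hypothesis, so $e$ enters both filtrations at exactly the same time $\max(f_v(w),f_v(w'))$.

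Next I would analyze the vertices. In $G_t^{f_e}$ every vertex is present (all are born at the constant time $<\min F_v$), whereas $G_t^{f_v}$ contains only those $v$ with $f_v(v)\le t$. A vertex $v$ with $f_v(v)>t$ carries no incident edge at time $t$, since any edge $(v,u)$ would require $\max(f_v(v),f_v(u))\ge f_v(v)>t$; hence such $v$ is isolated in $G_t^{f_e}$. Combining with the previous paragraph, $G_t^{f_e}$ is exactly $G_t^{f_v}$ together with a (possibly empty) set of isolated vertices.

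Then I would invoke that adjoining isolated vertices changes only $H_0$, not $H_1$: an isolated vertex is a contractible component contributing no independent cycle, so $\beta_1(G_t^{f_e})=\beta_1(G_t^{f_v})$ for every $t$. Since a graph has no $2$-cells, every cycle that is born persists to $\infty$; consequently the multiset of cycle birth times is determined entirely by the increments $\beta_1(G_t)-\beta_1(G_{t^-})$ of the first Betti number across critical values. As these increments agree for the two filtrations, the cycle birth times coincide.

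The argument is essentially self-contained; the only point requiring care is the final reduction, namely that for graph (i.e.\ $1$-dimensional) filtrations the $1$-dimensional persistence diagram is fully recovered from the function $t\mapsto\beta_1(G_t)$ because no cycle ever dies, so matching Betti numbers at all times forces matching birth multisets. I do not anticipate a genuine obstacle beyond stating this reduction cleanly.
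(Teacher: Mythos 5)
Your proof is correct, but it takes a genuinely different route from the paper's. The paper argues dynamically: it notes that edge birth times agree in the two filtrations, invokes the fact that color-based PH is independent of the order in which simplices sharing a critical value are inserted, and then synchronizes the edge insertions so that a cycle is created in the vertex-level filtration exactly when it is created in the edge-level one (mirroring the inductive pairing argument used for vertex deaths in the preceding proposition). You instead argue statically: you identify each snapshot $G_t^{f_e}$ as $G_t^{f_v}$ together with isolated vertices, observe that isolated vertices are invisible to $H_1$, and then reduce the $1$-dimensional diagram to the Betti function $t \mapsto \beta_1(G_t)$, using the fact that cycles never die in a graph filtration so the birth multiset is recovered from the increments of $\beta_1$ across critical values. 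Your route avoids any appeal to order-independence or to simulating edge insertions, and it makes the key reduction (diagram determined by Betti snapshots) explicit, which arguably makes it tighter than the paper's somewhat informal synchronization step. The paper's approach, in exchange, yields slightly finer information --- a matching of the actual cycle-creating events, not just of the multiset of birth times --- and stays stylistically parallel to its proof for vertex death times.
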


\begin{proof}
    This is because the birth-time of the edges are the same in both the vertex-level filtration and the edge-level filtration. Since color-based PH is independent of the order in adding in the simplicies marked with the same color, we can add the edges in both filtrations in order such that a cycle occurs in the vertex-level filtration if and only if it occurs in the edge-level filtration. A similar proof as in the previous proposition shows that the creation of cycles would be the same.
\end{proof}

\section{An Example Computing EC}
Here we give an explicit example computing the EC diagram.

\begin{example}\label{exp::ec}
Consider the graph $G$ in Figure~\ref{fig:ec} with $f_v(\operatorname{red}) = 1, f_v(\operatorname{blue}) = 2, f_e(\operatorname{red}, \operatorname{red}) = 1, f_e(\operatorname{blue}, \operatorname{blue}) = 2, f_e(\operatorname{red}, \operatorname{blue}) = 3$. We have that
\[\operatorname{EC}(G, f)^V = \chi(G_1^{f_v}), \chi(G_2^{f_v}) = 0, -1 \text{ and } \operatorname{EC}(G, f)^E = \chi(G_1^{f_e}), \chi(G_2^{f_e}), \chi(G_3^{f_e}) = 2, 1, -1. \]
\end{example}

\begin{figure}
    \centering
    \vspace{-10pt}
  \begin{adjustbox}{width=0.2\columnwidth} 
  \begin{tikzpicture}

    \node[draw, circle, fill=red!25, line width=0.5mm, minimum size=0.1cm] (A) at (1,1) { A};
    \node[draw, circle, fill=red!25, line width=0.5mm, minimum size=0.1cm] (B) at (2,0) {B };
    \node[draw, circle, fill=red!25, line width=0.5mm, minimum size=0.1cm] (C) at (2,2) {C };
    \node[draw, circle, fill=blue!25, line width=0.5mm, minimum size=0.1cm] (D) at (0,2) {D };
    \node[draw, circle, fill=blue!25, line width=0.5mm, minimum size=0.1cm] (E) at (0,0) { E};

    \draw[line width=0.5mm, color=red!50] (A) -- (B);
    \draw[line width=0.5mm, color=red!50] (A) -- (C);
    
\draw[line width=0.5mm, color=red!50] (A) -- (0.5, 1.5);
\draw[line width=0.5mm, color=blue!50] (D) -- (0.5, 1.5);

\draw[line width=0.5mm, color=red!50] (A) -- (0.5, 0.5);
\draw[line width=0.5mm, color=blue!50] (E) -- (0.5, 0.5);
    
    \draw[line width=0.5mm, color=red!50] (B) -- (C);
    \draw[line width=0.5mm, color=blue!50] (D) -- (E);
\end{tikzpicture}
  \end{adjustbox}
  \caption{Graph $G$ in Example~\ref{exp::ec}.}
  \label{fig:ec}
\end{figure}
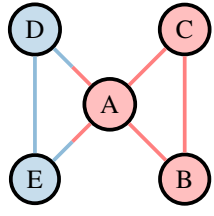

\begin{proof}[Proof for Example~\ref{exp::ec}]
Consider the set-up in the example again, recall this gives a graph $G$ with $f_v(\operatorname{red}) = 1, f_v(\operatorname{blue}) = 2, f_e(\operatorname{red}, \operatorname{red}) = 1, f_e(\operatorname{blue}, \operatorname{blue}) = 2, f_e(\operatorname{red}, \operatorname{blue}) = 3$. We wish to show that
\[\operatorname{EC}(G, f)^V = \chi(G_1^{f_v}), \chi(G_2^{f_v}) = 0, -1 \text{ and } \operatorname{EC}(G, f)^E = \chi(G_1^{f_e}), \chi(G_2^{f_e}), \chi(G_3^{f_e}) = 2, 1, -1. \]

Now to compute $\operatorname{EC}(G, f)^V$, we observe that $G_1^{f_v}$ is the subgraph of $G$ generated by red vertices (A, B, C), so it is the cycle graph of 3-vertices. Thus, $\chi(G_1^{f_v}) = 0$. When $t$ reaches $2$, we have that $G_2^{f_v} = G$, so the Euler characteristic is $-1$.

Now to compute $\operatorname{EC}(G, f)^E$, we note that at $t = 0$, $G_0^{f_e}$ is the discrete graph on 5 vertices. At $t = 1$, all the red edges are added in, so $G^{f_e}_1$ is the disjoint union of 2 vertices and the cycle graph of 3-vertices. Thus, $\chi(G^{f_e}_1) = 0 + 2 = 2$. At $t = 2$, a single blue edge is added between the two disjoint vertices, so $\chi(G^{f_e}_2) = \chi(G^{f_e}_1) - 1 = 1$. At $t = 3$, two more edges are added and hence $\chi(G^{f_e}_3) = 1 - 2 = -1 $.
\end{proof}

\section{Expressivity of EC on Simplicial Complexes}\label{appendix::express_ec_simp}

Let $K$ be a simplicial complex of dimension $n$ where each vertex is assigned a color in some coloring set $X$. The Euler characteristic of $K$ is defined by
\[\chi(K) = \sum_{i = 0}^n (-1)^n \# \{\text{simplicies in $K$ of dimension $i$}\}.\] 

\textbf{Notation:} Let $c_1, ..., c_i \in X$ be a list of distinct colors with $i \leq j$, we define $S^j_K(c_1, ..., c_i)$ as the number of $j$-simplices in $K$ whose vertice's set of colors is $\{c_1, ..., c_i\}$.

Under suitable extension of the definition of EC and max EC diagrams to a simplicial complex, we can, in fact, characterize them completely in terms of the combinatorial data provided by $S^j_K(c_1, ..., c_i)$. We will make these definitions precise and in particular prove the following theorem.

\begin{resbox}
\begin{theorem}\label{thm:ec_char_simplicial}
     Let $K, M$ be simplicial complexes of dimension $n$. The following are equivalent:
     \begin{enumerate}[noitemsep,topsep=0pt]
         \item $\operatorname{EC}(K, f_0, ..., f_n) = \operatorname{EC}(M, f_0, ..., f_n)$ for all possible $f_0, ..., f_n$. Here, each $f_i$ is an $i$-simplex coloring function, to be elaborated upon in Appendix~\ref{appendix::express_ec_simp}.
         \item $\operatorname{EC}^m(K, f_0) = \operatorname{EC}^m(M, f_0)$ for all possible $f_0$.
         \item For all $j \leq n$, $S^j_K(c_1, ..., c_{i}) = S^j_M(c_1, ..., c_{i})$ for all distinct colors $c_1, ..., c_i$ with $1 \leq i \leq j$.
     \end{enumerate}
\end{theorem}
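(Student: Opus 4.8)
The plan is to prove the cyclic chain $(1) \Rightarrow (2) \Rightarrow (3) \Rightarrow (1)$, mirroring the structure of the graph case (Theorem~\ref{thm::EC_characterization}), with the Boolean lattice of color subsets and a Möbius--inversion step now replacing the hand-picked colorings used in Propositions~\ref{prop::ec-edge-equiv} and~\ref{prop::ec-vertex-equiv}. The implication $(1) \Rightarrow (2)$ is immediate once the simplicial definitions are in place: the max diagram $\operatorname{EC}^m(K, f_0)$ is exactly the EC diagram for the tuple $(f_0, f_1, \ldots, f_n)$ in which each higher coloring $f_j$ is the one induced from $f_0$ by taking maxima over the vertices of a simplex (the natural generalization of $h(f_v)$ in Definition~\ref{def:mEC}). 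It is therefore a special case of the family quantified over in (1).

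For $(3) \Rightarrow (1)$ I would write each Euler characteristic in the diagram explicitly as a linear functional of the counts $S^d_K(\cdot)$. Fix a driving dimension $j$ and a threshold $t$; by construction a simplex of dimension $d \geq j$ is present exactly when its color set has become active, while every simplex of dimension $< j$ is present throughout. Hence, writing $Y_t$ for the set of colors active at time $t$,
\[
\chi\big(K^{(j)}_t\big) \;=\; \chi_{<j}(K) \;+\; \sum_{d \geq j} (-1)^d \sum_{C \subseteq Y_t} S^d_K(C),
\]
where $\chi_{<j}(K) = \sum_{d < j} (-1)^d \#\{d\text{-simplices}\}$ is itself a sum of $S$-counts. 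Every term on the right depends on $K$ only through the numbers $S^d_K(C)$, so if all of these agree with those of $M$ then every value in every list of both EC diagrams agrees, giving (1) (and, specializing the coloring, also $(3)\Rightarrow(2)$ if preferred).

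The substance is $(2) \Rightarrow (3)$: recovering every count $S^j_K(C)$ from the max diagrams alone. For a fixed target subset $W \subseteq X$, I would choose an injective vertex coloring $f_0$ assigning the colors of $W$ strictly smaller values than all colors outside $W$; then at any threshold $t$ strictly between $\max_{c \in W} f_0(c)$ and $\min_{c \notin W} f_0(c)$ the active set is exactly $W$, so the $j$-driven max filtration records the value
\[
R_j(W) \;:=\; \chi_{<j}(K) + \sum_{d \geq j} (-1)^d \sum_{C \subseteq W} S^d_K(C).
\]
Because $K$ and $M$ share the max EC diagram, $R_j(W)$ is common to both for every $j$ and every $W$. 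The displayed formula exhibits the family $\{R_j(W)\}_{j,W}$ as an \emph{invertible} linear image of the family $\{S^d_K(C)\}_{d,C}$: it is triangular in the driving dimension (back-substituting from $d=n$ downward peels off one simplex dimension at a time, isolating $Q_j(W) := \sum_{C \subseteq W} S^j_K(C)$), after which Möbius inversion over the Boolean lattice,
\[
S^j_K(C) \;=\; \sum_{C' \subseteq C} (-1)^{|C| - |C'|}\, Q_j(C'),
\]
recovers each individual count. Applying the identical bookkeeping to $M$ and using $R_j(W)^K = R_j(W)^M$ forces $S^j_K(C) = S^j_M(C)$ for all $j$ and all $C$, which is (3).

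The main obstacle is precisely this extraction step $(2)\Rightarrow(3)$. In the graph case only a handful of colorings were needed to read off $\#E_G(a,b)$ and $\chi(G(c))$ one at a time; here one must recover counts at every dimension and for every color subset \emph{simultaneously}, using only colorings induced from a single vertex function. The two reductions that make this tractable --- differencing (or back-substituting) across consecutive driving dimensions to disentangle the simplex dimensions, and Möbius inversion to undo the downward-closed summation over color subsets --- are where care is required: one must verify that every subset $W$ is realizable as an exact active set and that the dimension-independent constants $\chi_{<j}$ are handled consistently (they cancel in differences, or are themselves recovered by applying the procedure at $W = X$). Once these two inversions are set up, the remaining verification is routine linear bookkeeping.
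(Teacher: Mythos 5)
Your reductions $(1)\Rightarrow(2)$ and $(3)\Rightarrow(1)$ are fine and agree with the paper, and your formula $\chi\bigl(K^{(j)}_t\bigr)=\chi_{<j}(K)+\sum_{d\geq j}(-1)^d\sum_{C\subseteq Y_t}S^d_K(C)$ correctly describes the $j$-driven max filtration. The gap is in $(2)\Rightarrow(3)$, specifically in how you dispose of the constants $\chi_{<j}$. The only entries recorded in a max EC diagram are Euler characteristics at thresholds that are actual color values, i.e.\ thresholds whose active color set $W$ is \emph{nonempty}; so the data you can equate between $K$ and $M$ is exactly $\{R_j(W): 0\leq j\leq n,\ \emptyset\neq W\subseteq X\}$. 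Your two proposed fixes for the constants both fail. First, $R_j(X)=\chi(K)$ for \emph{every} driving dimension $j$, so evaluating at $W=X$ recovers only the single number $\chi(K)=\chi(M)$ and never isolates $\chi_{<j}(K)$. Second, differencing consecutive driving dimensions cancels the constants but gives $R_j(W)-R_{j+1}(W)=(-1)^j\bigl(Q_j(W)-s_j(K)\bigr)$, where $s_j(K)$ denotes the total number of $j$-simplices of $K$; hence it pins down $Q_j(W)$ only up to the unknown constant $s_j(K)$. Running your M\"obius inversion on what is actually available (nonempty subsets only, since $W=\emptyset$ is not realizable by your own construction) yields $S^j_K(C)-S^j_M(C)=(-1)^{|C|+1}\bigl(s_j(K)-s_j(M)\bigr)$ for every nonempty $C$, which vanishes only if the $f$-vectors already agree. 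The system you can write down genuinely admits nonzero solutions: with a single color $c$, take $K$ a path with $3$ edges and $M$ a path with $4$ edges; then every equation you have access to reads $R_j(\{c\})^K=\chi(K)=1=\chi(M)=R_j(\{c\})^M$, yet $S^1_K(c)=3\neq 4=S^1_M(c)$.

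What is missing is precisely the paper's Lemma~\ref{lem::same-f-vector}: equal max EC diagrams force equal $f$-vectors. The paper extracts this from data you never touch, namely the value of the $j$-driven filtration below the smallest color value (``$t=0$''), where the sublevel complex is the $(j-1)$-skeleton $K^{j-1}$; comparing $\chi(K^{j-1})=\chi(M^{j-1})$ for all $j$ together with $\chi(K)=\chi(M)$ recovers the $f$-vector. (This is your $W=\emptyset$, which you correctly note is not realized by any recorded threshold but then never replace; note the paper's own Definition of the diagram, taken literally, also omits this baseline, and its Lemma silently assumes it is available.) Once that lemma is granted, so $s_j(K)=s_j(M)$ for all $j$, your differencing-plus-M\"obius argument does go through and is, in my view, a cleaner and more transparent alternative to the paper's double induction (backward on the driving dimension, forward on the size of the color set, with hand-picked orderings): the right repair is to prove the skeleton comparison as a preliminary lemma and only then run your inversion. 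The alternative patch of forcing $s_j(K)=s_j(M)$ from the vanishing $S^j(C)=0$ for $|C|=j+2$ works only when $|X|\geq j+2$, so it does not cover complexes with few colors and cannot replace the lemma.
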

\end{resbox}

Note that Theorem~\ref{thm:ec_char_simplicial} is a stronger formulation of Theorem~\ref{thm::ec_simplicial_main_paper} in the main text.

We first extend our definition of $\operatorname{EC}$ and $\operatorname{EC}^m$ to a simplicial complex $K$ of dimension $n$ with a vertex coloring function $c: V(K) \to X$.

\begin{definition}\label{simplicial_complex_fil}
    Let $K$ be a simplicial complex of dimension $n$ with vertex color set $X$. For $0 < i \leq n$, we define a color value function $f_i: \prod_{j=1}^{i+1} X \to \R_{>0}$ as follows:
    \begin{enumerate}
        \item For any permutation $\sigma \in S_{i+1}$, $f_i(\sigma(\Vec{x})) = f_i(\Vec{x})$.
        \item For any $\Vec{x}, \Vec{y} \in \prod_{j=1}^{i+1} X$ such that $\Vec{x}$ and $\Vec{y}$ have the same coordinates modulo order and multiplicity, $f_i(\Vec{x}) = f_i(\Vec{y})$. For example, if $X = \{red, blue\}$, $f_2(red, red, blue) = f_2(blue, blue, red)$. The intuition is, as sets (so we forget about order and multiplicity), $\{red, red, blue\}$ is the same set as $\{blue, blue, red\}$.
    \end{enumerate}
    We define the $i$-th simplex filtration function $f_i$ of $K$ as follows:
    \begin{itemize}
        \item For all simplices $\sigma$ with dimension less than $i$, $F_i(\sigma) = 0$.
        \item For all $i$-simplices $\sigma$ with vertices $v_0, ..., v_i$, $F_i(\sigma) \coloneqq f_i(c(v_0), ..., c(v_i))$.
        \item For all simplices $\sigma$ with dimension greater than $i$, $F_i(\sigma) \coloneqq \max_{\text{i-simplex }\tau \subset \sigma} f_i(\tau)$.
    \end{itemize}
    Note that when $K = G$ is a graph $f_0, f_1, F_0, F_1$ agrees with our construction of $f_v, f_e, F_v, F_e$. Finally, for each $i$ and $t \in \Rbb$, we define
    \[K^{f_i}_{t} \coloneqq F_i^{-1}((-\infty, t]).\]
\end{definition}

From here, we can construct the EC and max EC diagram on simplicial complexes as follows.
\begin{definition}
    Let $K$ be a simplicial complex of dimension $n$ with simplex filtration functions $f_0, f_1, ..., f_n$. The $\EC$ diagram $\operatorname{EC}(K, f_0, ..., f_n)$ of $K$ is composed of $n+1$ lists $\operatorname{EC}(K, f_i)$ for $i = 0, ..., n+1$. For each $i$, write $a^i_1 < ... < a^i_{n_i}$ as the list of values $f_i$ can produce, $\operatorname{EC}(K, f_i)$ is the list $\{\chi(K^{f_i}_{a^i_j})\}_{j = 1}^{n_i}$.

    Suppose we are only given a vertex filtration function $f_0$. We define the max $\EC$ diagram of $K$ as $\operatorname{EC}^m(K, f_0) = \operatorname{EC}(K, f_0, g_1, ..., g_n)$. Here $g_i(\sigma) = \max_{\text{i-simplex }\tau \subset \sigma} \max_{v_i \in \tau} f_0(v_i)$.
\end{definition}

One can check that the $g_i$'s defining max EC are consistent with the setup in Definition~\ref{simplicial_complex_fil}. As an immediate corollary of Theorem~\ref{thm:ec_char_simplicial}, we will see that $\operatorname{EC}$ and $\operatorname{EC}^m$ have the same expressive power on the level of simplicial complexes.

\begin{theorem}\label{thm::ec_equiv_general}
    $\operatorname{EC}$ and $\operatorname{EC}^m$ have the same expressive power.
\end{theorem}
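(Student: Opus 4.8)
The plan is to derive this statement as an immediate consequence of the three-way equivalence in Theorem~\ref{thm:ec_char_simplicial}, which I take as given. First I would unpack what ``same expressive power'' means operationally: two descriptors have the same expressive power precisely when, for every pair of colored simplicial complexes $K$ and $M$, one descriptor separates $K$ from $M$ if and only if the other does. Here $\operatorname{EC}$ separates $K$ from $M$ when there exists a choice of simplex coloring functions $f_0, \ldots, f_n$ with $\operatorname{EC}(K, f_0, \ldots, f_n) \neq \operatorname{EC}(M, f_0, \ldots, f_n)$, and $\operatorname{EC}^m$ separates them when there exists a vertex coloring function $f_0$ with $\operatorname{EC}^m(K, f_0) \neq \operatorname{EC}^m(M, f_0)$.

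Next I would observe that the condition ``$\operatorname{EC}$ cannot separate $K$ and $M$'' is exactly item (1) of Theorem~\ref{thm:ec_char_simplicial}, while the condition ``$\operatorname{EC}^m$ cannot separate $K$ and $M$'' is exactly item (2). Since the theorem asserts that (1) and (2) are equivalent, taking the contrapositive of this biconditional yields that $\operatorname{EC}$ separates $K$ and $M$ if and only if $\operatorname{EC}^m$ does. As $K$ and $M$ were arbitrary, this is precisely the claim that the two descriptors have equal expressive power, so the corollary follows.

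There is no genuine obstacle in this derivation once Theorem~\ref{thm:ec_char_simplicial} is in hand; the only care needed is to match the quantifier structure of ``same expressive power'' against the ``for all coloring functions'' quantifiers appearing in items (1) and (2), noting that negating those universally quantified equalities produces exactly the existential separation conditions above. The substantive work lives entirely in the proof of Theorem~\ref{thm:ec_char_simplicial} --- in particular the implications (2) $\Rightarrow$ (3) and (3) $\Rightarrow$ (1), where one shows that the restricted family of max-induced filtrations already recovers the full combinatorial data $S^j_K(c_1, \ldots, c_i)$, so that the \emph{a priori} weaker $\operatorname{EC}^m$ loses no separating power relative to the full $\operatorname{EC}$.
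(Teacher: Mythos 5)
Your proposal is correct and matches the paper's approach exactly: the paper also obtains Theorem~\ref{thm::ec_equiv_general} as an immediate corollary of the equivalence of items (1) and (2) in Theorem~\ref{thm:ec_char_simplicial}, with all substantive work (the implications passing through the combinatorial data $S^j_K(c_1, \ldots, c_i)$) residing in that characterization theorem. Your careful unpacking of the quantifier structure --- negating the universally quantified equalities in (1) and (2) to obtain the existential separation conditions --- is precisely the (routine) content of the deduction.
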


\begin{remark}
    The choice of Condition (2) in Definition~\ref{simplicial_complex_fil} is intentional so that Theorem~\ref{thm::ec_equiv_general} would hold. The intuition is that Condition (2) corresponds to coloring the $i$-th simplex $\sigma$, with vertices $v_0, ..., v_{i+1}$, of $K$ with the feature being the set $\{v_0, ..., v_{i+1}\}$ that forgets about multiplicity and order. However, Theorem~\ref{thm::ec_equiv_general} is not true if we modify Condition (2) such that we color $\sigma$ with the feature being the multi-set $\{c(v_0), ..., c(v_{i+1})\}$ to remember multiplicity.
\end{remark}

We will now prove Theorem~\ref{thm:ec_char_simplicial}. To do this, we will first prove a lemma.

\begin{lemma}\label{lem::same-f-vector}
   Let $K$ and $M$ be two simplicial complexes of dimension $n$, if they have the same max $EC$ diagram for all possible $f_0$, then $K$ and $M$ have the same $f$-vector, meaning that their respective number of simplicies are the same at each dimension. 
\end{lemma}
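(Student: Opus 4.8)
The plan is to show that the full $f$-vector of $K$ is encoded in, and recoverable from, the family of max $\operatorname{EC}$ curves through the Euler characteristics of the skeleta of $K$, and then to read those off by telescoping. I write $\alpha_d(\cdot)$ for the number of $d$-simplices of a complex and $K[S]$ for the subcomplex induced on the vertices whose color lies in a color-subset $S$.

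First I would unwind the max filtration of Definition~\ref{simplicial_complex_fil}. For the $i$-th filtration function $F_i$ coming from $g_i$, every simplex of dimension $<i$ is born at the bottom (value $0$), while a simplex $\sigma$ of dimension $\ge i$ is born exactly at $\max_{v\in\sigma} f_0(c(v))$, since $g_i(\sigma)=\max_{i\text{-face }\tau\subset\sigma}\max_{v\in\tau}f_0(c(v))$ and every vertex of $\sigma$ lies on some $i$-face once $\dim\sigma\ge i$. Writing $V_{\le t}=\{v: f_0(c(v))\le t\}$, this gives the closed form
\[\chi(K^{f_i}_t)=\sum_{d<i}(-1)^d\,\alpha_d(K)+\sum_{d\ge i}(-1)^d\,\alpha_d\bigl(K[V_{\le t}]\bigr).\]
In particular, for $t$ below the first birth of an $i$-simplex the second sum vanishes and the curve sits at its baseline value $P_i:=\sum_{d<i}(-1)^d\alpha_d(K)=\chi(\operatorname{sk}_{i-1}K)$, the Euler characteristic of the $(i-1)$-skeleton.

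Next I would read off these baselines. Since the hypothesis gives identical max $\operatorname{EC}$ diagrams of $K$ and $M$ for every $f_0$, and the baseline of the $i$-th curve is exactly $P_i$ (independent of $f_0$), I conclude $P_i^K=P_i^M$ for each $i=0,\dots,n$; evaluating any curve at the top ($V_{\le t}=X$) further gives $\chi(K)=\chi(M)=:P_{n+1}$. The $f$-vector then falls out by telescoping: $P_{i+1}-P_i=(-1)^i\alpha_i$, so matching all the partial alternating sums $P_i$ forces $\alpha_i(K)=\alpha_i(M)$ for every $i$, which is exactly the claim of Lemma~\ref{lem::same-f-vector}.

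The main obstacle is the middle step: rigorously justifying that the baseline $P_i$ — the contribution of the always-present low-dimensional simplices — is genuinely visible in the descriptor, cleanly separated from the growing top-dimensional part $\sum_{d\ge i}(-1)^d\alpha_d(K[V_{\le t}])$. Because $f_0$ is constant on color classes, one cannot shrink the active vertex set below a full union of color classes, so the naive attempt to isolate a single $i$-simplex and subtract it off fails; the separation instead rests on recording the Euler characteristic of each filtration before any positive-dimensional simplex at level $i$ appears (the analogue of the bottom value $\chi(G^{f_e}_0)=\#V$ observed for graphs). Once this baseline visibility is established, the remaining steps are routine Euler-characteristic bookkeeping.
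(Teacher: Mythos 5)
Your argument is the paper's argument: recover the skeleton Euler characteristics $P_i=\chi(\operatorname{sk}_{i-1}K)$ as the bottom values of the curves, get $\chi(K)=\chi(M)$ from the top of any filtration, and telescope via $P_{i+1}-P_i=(-1)^i\alpha_i$. Your closed form for $\chi(K^{g_i}_t)$ is correct, and the bookkeeping is fine. The one step you leave conditional --- ``baseline visibility'' --- is exactly where the paper's proof does something, and it does it by fiat: since each $g_i$ is strictly positive on simplices of dimension $\geq i$ while every simplex of dimension $<i$ has filtration value $0$, one has $K^{g_i}_0=\operatorname{sk}_{i-1}K$, and the paper simply ``compares time $t=0$'' of each induced filtration, i.e.\ it treats the value $\chi(K^{g_i}_0)$ as part of the descriptor. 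With that convention your proof is complete and identical to the paper's.

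However, you were right to single this out as the crux rather than a routine step, because the baseline cannot be derived from the recorded positive-time values: if one reads the definition of the EC diagram literally (the list of $\chi(K^{g_i}_a)$ over the \emph{positive} critical values $a$ of $f_0$, with $t=0$ excluded --- the same exclusion the paper makes explicitly for graphs when it drops $\chi(G^{f_e}_0)$), the lemma is false. Take two colors; let $K$ have red vertices $r_1,r_2,r_3$ and blue vertices $b_1,b_2,b_3$, with solid triangles $\{r_1,r_2,r_3\}$ and $\{b_1,b_2,b_3\}$ and the edge $r_1b_1$, so its $f$-vector is $(6,7,2)$; let $M$ have the same colored vertices, the solid triangle $\{r_1,r_2,b_1\}$, and the edges $r_2r_3$, $b_1b_2$, $b_2b_3$, so its $f$-vector is $(6,6,1)$. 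For every $f_0$ the recorded lists agree: when $f_0(r)<f_0(b)$ both complexes give $(1,1)$ for $i=0$, $(4,1)$ for $i=1$, and $(0,1)$ for $i=2$; the other ordering is symmetric, and a constant $f_0$ gives the single value $\chi(K)=1=\chi(M)$ for every $i$. Yet the $f$-vectors differ, and the two complexes are separated only at $t=0$, where $\chi(\operatorname{sk}_1K)=-1\neq 0=\chi(\operatorname{sk}_1M)$. So the lemma genuinely needs the bottom value to be part of the max EC diagram; once you adopt that convention (as the paper's proof implicitly does), your telescoping argument closes the proof.
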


\begin{proof}
    For this proof, let $K^i$ denote the subcomplex of $K$ with all simplicies with dimension $\leq i$. By comparing any complete filtration, we will have that $\chi(K) = \chi(M)$ (so $\chi(K^{n-1}) = \chi(M^{n-1})$). Comnparing time $t = 0$ for any induced $i$-simplex coloring filtration by an injective $f_0$ will give us that $\chi(K^i) = \chi(M^i)$ for all $i = 0, ..., n-1$. From here we can see that they have the same $f$-vector.
\end{proof}

Now we will finally prove Theorem~\ref{thm:ec_char_simplicial}.
\begin{proof}[Proof of Theorem~\ref{thm:ec_char_simplicial}]
    Clearly (1) implies (2). To see that (3) implies (1), we first note that $(3)$ implies $K$ and $M$ have the same $f$-vector, so the starting values of their Euler characteristics for any $f_i$ are always the same. For any change of the Euler characteristic as time varies, we will be modifying the value by adding or subtracting values of the form $S^j_K(c_1, ..., c_i) = S^j_M(c_1, ..., c_{i})$. Hence, their Euler characteristics will agree as time varies.\\

    For (2) implies (3), we first note that Lemma~\ref{lem::same-f-vector} implies $K$ and $M$ have the same $f$-vector. We will now prove the claim from a backward induction on $j = 0, 1, ..., n$. Indeed, when $j = n$, we have that
    \begin{enumerate}
        \item We will induct on $1 \leq i \leq n$. For $i = 1$, we wish to show that $S^n_K(c_1) = S^n_M(c_1)$ for all $c_1 \in C$. Note that $S^n_K(c_1)$ is just the number of $n$-simplicies whose vertices all have color $c_1$. Choose $f_0$ such that $c_1$ has the minimum value, then under the induced max $n$-simplex coloring filtration, we have that
        \[\chi(K^{f_n}_{f_0(c_1)}) = \chi(M^{f_n}_{f_0(c_1)})\]
        Since $K$ and $M$ have the same $f$-vector by Lemma~\ref{lem::same-f-vector}, we conclude that $S^n_K(c_1) = S^n_M(c_1)$.
        \item Suppose this is true up to $1 \leq i = k < n$. We wish to show this is true for $i = k+1$. Indeed, let $c_1, ..., c_{k+1}$ be $k+1$ distinct colors. Choose $f_0$ such that $c_1$ has the minimum value, $c_2$ is the second smallest, and so on until $c_{k+1}$. Under the induced max $n$-simplex coloring filtration, we have that
        \[\chi(K^{f_n}_{f_0(c_{k+1})}) = \chi(M^{f_n}_{f_0(c_{k+1})})\]
        Subtracting their Euler characteristics at the time step $f_0(c_k)$, we have that
        \[(-1)^n \sum_{a \in \mathcal{A}} S^n_K(a) = (-1)^n \sum_{a \in \mathcal{A}} S^n_M(a).\]
        where $\mathcal{A}$ is the collection of subsets of $\{c_1, ..., c_{k+1}\}$ that contains $c_{k+1}$. Since the inductive hypothesis is true up to $i = k$, we know that for all $a \in \mathcal{A}$ such that $|a| \leq k$, $S^n_K(a) = S^n_M(a)$. Hence cancelling both sides gives us
        \[\sum_{a \in \mathcal{A}, |a| > k} S^n_K(a) = \sum_{a \in \mathcal{A}, |a| > k} S^n_M(a).\]
        Now, there is only one element in $\mathcal{A}$ with $|a| > k$, namely $a = \{c_1, ..., c_{k+1}\}$. Hence, we have proven this for the case $i = k+1$.
        \item Thus, by induction, we have shown this for $j = n$.
    \end{enumerate}
    Now we will induct down from $j = n$. Indeed, suppose this is true up to $0 < j = k+1 \leq n$, we wish to show this is true for $j = k$. Now we wish to show that $S^{k}_K(c_1, ..., c_i) = S^{k}_M(c_1, ..., c_i)$ for all distinct colors $c_1, ..., c_i$ and $1 \leq i \leq k$. We will do this by an induction on $i$.
    \begin{enumerate}
        \item For $i = 1$, we wish to show that $S^{k}_K(c_1) = S^k_M(c_1)$. Indeed, choose $f_0$ such that $c_1$ has the minimum value. Then we have that
        \[\chi(K^{f_k}_{f_0(c_1)}) = \chi(M^{f_k}_{f_0(c_1)})\]
        Since $K$ and $M$ have the same $f$-vector, cancelling that out gives us that
        \[\sum_{\ell = k}^{n} S^{\ell}_K(c_1) = \sum_{\ell=k}^{n} S^{\ell}_M(c_1).\]
        By our inductive hypotehsis on $j$, we know that $S^{\ell}_K(c_1) = S^{\ell}_M(c_1)$ for all $\ell > k$, hence subtracting them off gives us that $S^{k}_K(c_1) = S^{k}_M(c_1)$.
        \item Suppose this is true up to $1 \leq i = \ell < k$. We wish to show this is true for $i = \ell + 1$. Indeed, let $c_1, ..., c_{\ell+1}$ be $\ell+1$ distinct colors. Choose $f_0$ such that $c_1$ has the minimum value, $c_2$ is the second smallest, and so on until $c_{\ell+1}$. Under the induced max $k$-simplex coloring filtration, we have that
        \[\chi(K^{f_k}_{f_0(c_{\ell+1})}) = \chi(M^{f_k}_{f_0(c_{\ell+1})})\]
        Since they have the same Euler characteristics at the time step $f_0(c_{\ell+1})$, we can cancel those terms out and obtain
        \[\sum_{a \in \mathcal{A}} \sum_{p = k}^{n} (-1)^p S^p_K(a) = \sum_{a \in \mathcal{A}} \sum_{p = k}^{n} (-1)^p S^p_M(a). \]
        Here $\mathcal{A}$ is the collection of all subsets of $\{c_1, ..., c_{\ell+1}\}$ that contains the color $c_{\ell+1}$. Now by the inductive hypothesis on $j$, we know that $S^p_K(a) = S^p_M(a)$ for all $p > k$, so we can cancel the expressions and obtain
\[(-1)^p \sum_{a \in \mathcal{A}}  S^k_K(a) = (-1)^p \sum_{a \in \mathcal{A}} S^k_M(a). \]
Now by the inductive hypothesis for $i$, we have that for all $|a| < \ell+1$, $S^k_M(a) = S^k_K(a)$, hence we have that
\[(-1)^p \sum_{a \in \mathcal{A}, |a|>\ell} S^k_K(a) =(-1)^p \sum_{a \in \mathcal{A}, |a|>\ell} S^k_M(a).  \]
There is only one subset of size $\ell+1$, namely $\{c_1, ..., c_{\ell+1}\}$. Hence, we conclude that $S^k_K(c_1, ..., c_{\ell+1}) = S^k_M(c_1, ..., c_{\ell+1})$.
    \end{enumerate}
    Thus, by the principle of induction, we have proven that (2) implies (3).
\end{proof}

\section{Implementation details}\label{app:details}

\subsection{Datasets}

With the exception of ZINC, all datasets originate from the TUDataset repository --- a comprehensive benchmark suite commonly employed for assessing graph kernel methods and GNNs. The datasets can be accessed at \url{https://chrsmrrs.github.io/datasets/docs/datasets/}. ZINC(12K) corresponds to a subset of the widely used ZINC-250K collection of chemical compounds~\citep{zinc}, which is particularly suited for molecular property prediction~\citep{benchmarking2020}.

BREC is a benchmark designed to evaluate the expressiveness of graph neural networks (GNNs). It comprises 800 non-isomorphic graphs organized into 400 pairs, grouped into four categories: Basic, Regular, Extension, and CFI. The Basic category includes 60 pairs of 1-WL-indistinguishable graphs, while the Regular category contains 140 pairs of regular graphs further divided into simple regular, strongly regular, 4-vertex-condition, and distance-regular graphs. For a detailed description of the remaining categories and graph constructions, we refer the reader to \citet{wang2024empirical}.

For completeness, we also considered minimal Cayley graphs. These datasets have been used to assess the expressivity of graph models \citep{ballester2024expressivity} and can be found at \url{https://houseofgraphs.org/meta-directory/minimal-cayley} .

\autoref{tab:datasets_stats} reports summary statistics of the real-world datasets used in this paper. 

\begin{table}[ht!]
\caption{Statistics of the datasets.}
\centering
\small
\begin{tabular}{lcccc}
\toprule
\multicolumn{1}{l}{\textbf{Dataset}} & \multicolumn{1}{c}{\textbf{\#graphs}} & \multicolumn{1}{c}{\textbf{\#classes}} & \multicolumn{1}{c}{\textbf{Avg \#nodes}} & \multicolumn{1}{c}{\textbf{Avg \#edges}} \\ \cmidrule{1-5} 
NCI1        &      4110                        &      2                                                                     &        29.87                          &      32.30                            \\

PROTEINS (full)   &  1113                            & 2                                                                                   &                 39.06                 &         72.82                         \\

DHRF & 756 & 2  & 42.43 & 44.54
         \\
NCI109 & 4127 & 2 & 29.68 & 32.13 \\
COX2 & 467 & 2 & 41.22	& 43.45  \\
ZINC          &     12000                         &   -                            &        23.16      &   49.83 \\
\bottomrule
\end{tabular}
\label{tab:datasets_stats}
\end{table}

\subsection{Expressivity experiments}
For each collection of minimal Cayley graphs with $n$ nodes (denoted by c-$n$), we compute all possible pairwise combinations. The BREC benchmark provides pairs of non-isomorphic graphs for each type: basic, regular, and extension --- for computational reasons, we do not report results for CFI graphs. For both benchmarks, accuracy is measured as the fraction of pairs of graphs that can be distinguished by their 0-dim and 1-dim persistence diagrams.

For edge closeness, we obtain an edge-level filtration by computing node closeness centrality on the line graph of each input graph. Recall that the closeness centrality of a node $u$ is the reciprocal of its average shortest-path distance to all other nodes in the graph. We compute this quantity using \texttt{nx.closeness\_centrality} from the \texttt{NetworkX} library \citep{networkx}.

We also consider the Jaccard index, defined for an edge $(u,v)$ as
\[
J(u,v)=\frac{\# \mathcal{N}(u)\cap \mathcal{N}(v)}{\# \mathcal{N}(u)\cup \mathcal{N}(v)},
\]
where $\mathcal{N}(u)$ denotes the set of neighbors of $u$ in the graph. We use $J(u,v)$ as an edge-level filtration value, computed via \texttt{nx.jaccard\_coefficient}.

Finally, we employ the square clustering coefficient as a vertex-level filtration, which measures the fraction of possible squares (i.e., 4-cycles) incident to a vertex in the graph. We compute it using \texttt{nx.square\_clustering}.

\paragraph{Implementation note.} For the sake of transparency, while preparing the final version of the manuscript, we found an error in an earlier implementation of our expressivity experiments that erroneously reported perfect accuracy for GProd. After fixing the bug, we observed that, with our initial (simple) filtration choices, GProd and standard PH mostly achieve on par performance. To better highlight the benefits of GProd, we revisited the experimental setup by considering stronger filtration functions and additional datasets (including minimal Cayley graphs). Importantly, the corrected results corroborate our overarching theme that considering PH on graph products contain richer information than the individual components. Further details are provided in the official github repo.
\looseness=-1

\subsection{Runtime experiments}
The runtime experiments were conducted on Google Colab and written in Python. This was done in the standard Python 3 Google Compute Engine backend without any subscription. For the experiments comparing Theorem~\ref{thm::vertex_PH_prod} with gudhi and union-find, we applied a vertex-based degree filtration on the pair of graphs $G$ and $H$ and compute the induced product filtration on $G \Box H$. For the experiments comparing Theorem~\ref{thm::betti_number_prod}, we applied an edge-based filtration that is a modified version of the Forman-Ricci Curvature \citep{Samal2018} for graphs. More precisely, for an edge $(u, w) \in G$, we apply
\[f_e(u, w) \coloneqq 4 - \#\mathcal{N}(u) - \#\mathcal{N}(w) + 3 \# (\mathcal{N}(u) \cap \mathcal{N}(w)) + 2 \#V(G),\]
where $\#\mathcal{N}(\bullet)$ is the number of adjacent nodes and $\#V(G)$ is the number of nodes in the entire graph. This is just done to ensure $f_e > 0$.

\subsection{Experiments on graph classification}
We implement all models using the PyTorch Geometric Library~\citep{Fey/Lenssen/2019}. For all experiments, we use Tesla V100 GPU cards and consider a memory budget of 64GB of RAM. 

For the TU datasets, we use a random 80/10/10\% (train/validation/test) split, which varies with the random seed (i.e., a different split is used for each of the five runs). All models are trained with an initial learning rate of $10^{-3}$, which is halved if the validation accuracy does not improve for 10 consecutive epochs. We employ early stopping with a patience of 40 epochs and train for up to 500 epochs using the Adam optimizer~\citep{adam}. A batch size of 64 and batch normalization are used in all experiments. 
\looseness=-1

We apply the vectorization scheme of \citep{pmlr-v108-carriere20a} with Gaussian point transformations, identity weight function, and mean aggregation. Formally, let $\mathcal{D}=\{p_1, \dots, p_n\}$ be a persistence diagram with $n$ tuples $p_i \in \mathbb{R}^2$. We compute 
\begin{equation}
\label{eq:perslay}
h(\mathcal{D}) = \frac{1}{n} \sum_{i=1}^{n} \varphi(p_i),
\end{equation}
where 
\begin{align}
     \label{eq:line-pt}
    \varphi(p) = [\Gamma_{p}(\phi_1), ..., \Gamma_{p}(\phi_q)]^{\top} \quad \text{ with } \quad \Gamma_{p}(\phi_i) = \text{exp}\left(-\frac{\|\phi_i - p\|^{2}_2}{2}\right),
\end{align}
and $\phi_1, \dots, \phi_q \in \mathbb{R}^2$ are learnable parameters. In all experiments, we restrict our analysis to 0-dimensional persistence diagrams and adopt 
$q=100$. Two filtration functions are considered: degree centrality and betweenness centrality. The optimal one is determined, together with the number of GNN layers, based on validation performance --- i.e., the filtration function is a hyperparameter.

Here, the topological embeddings 
$h(\cdot)$ are concatenated with the graph-level GNN embeddings (obtained via a mean readout) and passed through an MLP consisting of two hidden layers with ReLU activations and a hidden dimension of 64.

We consider seven graphs as references in our approach, implemented in the \texttt{NetworkX} library: \texttt{nx.cycle\_graph(4)}, \texttt{nx.star\_graph(5)}, \texttt{nx.turan\_graph(5,2)}, \texttt{nx.complete\_graph(5)},  \texttt{nx.lollipop\_graph(3,3)},  \texttt{nx.watts\_strogatz\_graph(10,2,0.2)}, and  \texttt{nx.house\_graph}.


\end{document}